\title{Theoretical Guarantees of Fictitious Discount Algorithms for Episodic Reinforcement Learning and Global Convergence of  Policy Gradient Methods}
\author{Xin Guo 
\thanks{University of California, Berkeley. 
\textbf{Email:}  \texttt{xinguo@berkeley.edu}} 
  \thanks{Amazon.com. \textbf{Email:} \texttt{xnguo@amazon.com}}
  \footnotemark[5]
    \and
Anran Hu 
\thanks{University of California,  Berkeley. 
\textbf{Email:} \texttt{anran\_hu@berkeley.edu}}
\and
Junzi Zhang
\thanks{Amazon.com. \textbf{Email:} \texttt{junziz@amazon.com}} 
  \thanks{Work done prior to joining or outside of Amazon.}
}
\newcommand{\BEAS}{\begin{eqnarray*}}
\newcommand{\EEAS}{\end{eqnarray*}}
\newcommand{\BEQ}{\begin{equation}}
\newcommand{\EEQ}{\end{equation}}
\newcommand{\BIT}{\begin{itemize}}
\newcommand{\EIT}{\end{itemize}}
\newcommand{\eg}{{\it e.g.}}
\newcommand{\ie}{{\it i.e.}}
\newcommand{\cf}{{\it cf. }}
\newcommand{\reals}{{\mbox{\bf R}}}
\newcommand{\Expect}{\mathbf{E}}
\newcommand{\prob}{\mathbf{Prob}}
\newcommand{\var}{\mathop{\bf var}}
\newcommand{\argmin}{\mathop{\rm argmin}}
\newtheorem{theorem}{Theorem}
\newtheorem{remark}{Remark}
\newtheorem{assumption}{Assumption}
\newtheorem{lemma}[theorem]{Lemma}
\newtheorem{corollary}[theorem]{Corollary}
\newtheorem{proposition}[theorem]{Proposition}
\long\def\@makecaption#1#2{
   \vskip 9pt
   \begin{small}
   \setbox\@tempboxa\hbox{{\bf #1:} #2}
   \ifdim \wd\@tempboxa > 5.5in
        \begin{center}
        \begin{minipage}[t]{5.5in}
        \addtolength{\baselineskip}{-0.95pt}
        {\bf #1:} #2 \par
        \addtolength{\baselineskip}{0.95pt}
        \end{minipage}
        \end{center}
   \else
	\hbox to\hsize{\hfil\box\@tempboxa\hfil}
   \fi
   \end{small}\par
}
\newcounter{oursection}
\newcounter{lecture}
\begin{document}

\maketitle

\begin{abstract}
When designing algorithms for finite-time-horizon episodic reinforcement learning problems, a common approach is to introduce a fictitious discount factor and use stationary policies for approximations. Empirically, it has been shown that the fictitious discount factor  helps reduce variance, and stationary policies serve to save the per-iteration computational cost. Theoretically, however, there is no existing work on convergence analysis for algorithms with this fictitious discount recipe. This paper takes the first step towards analyzing these algorithms. It focuses on two vanilla policy gradient (VPG) variants: the first being  a widely used variant with discounted advantage estimations (DAE), the second with an additional fictitious discount factor in the score functions of the policy gradient estimators. 
 Non-asymptotic convergence guarantees are established for both algorithms, and  the additional discount factor is shown to reduce the bias introduced in DAE and thus improve the algorithm convergence asymptotically. A key ingredient of our analysis is to 
 connect  three settings of Markov decision processes (MDPs): the finite-time-horizon, the average reward  and the discounted settings. To our best knowledge, this is the first theoretical guarantee on fictitious discount algorithms for the episodic reinforcement learning of finite-time-horizon MDPs, which also leads to the (first) global convergence  of policy gradient methods for finite-time-horizon episodic reinforcement learning.
\end{abstract}

\section{Introduction}
This paper studies episodic reinforcement learning with each episode consisting of a finite-time-horizon Markov decision process (MDP). For such finite-time-horizon episodic reinforcement learning problems, a popular heuristic approach is to introduce a fictitious discount factor and use stationary policies when designing algorithms; see for instance, the renowned DQN \cite{DQN_original}, DDPG \cite{DDPG}, and recent works of \cite{franccois2015discount,xu2018meta,RND,hessel2018rainbow,fedus2019hyperbolic,tessler2020reward,amit2020discount}. 

Empirically, it has been shown that discount factors serve to reduce variance \cite{thomas2014bias, haarnoja2017reinforcement}, and stationary policies help save per-iteration computational costs. Theoretically, fictitious discount algorithms designed for \emph{average reward} MDPs  have been analyzed \cite{marbach1998simulation, marbach2001simulation} and the \emph{asymptotic} local convergence\footnote{In this paper,  ``local convergence'' indicates convergence to stationary points of value functions, and  ``global convergence'' means convergence in terms of the value function sub-optimality gaps.} has been established \cite{marbach2003approximate}. 

It remains open, however, to establish the non-asymptotic global convergence for this  fictitious-discount-factor approach in the {\it finite-time-horizon} framework. 
The major challenges are to characterize the \emph{bias} introduced by the discount factor, and to close the gap between the \emph{non-stationary} optimal policies for finite-time-horizon MDPs and the stationary algorithm policies. 

This paper takes the first steps towards rigorously analyzing the global and non-asymptotic convergence  of  fictitious discount algorithms for finite-time-horizon episodic reinforcement learning. It focuses on the convergence analysis of two concrete algorithms in the context of policy gradient methods.
The first one is a widely used variant of the vanilla policy gradient (VPG) method with discounted advantage estimations (DAE). This variant was originally proposed for average reward problems \cite{marbach1998simulation, baxter1999direct,baxter2001infinite,marbach2001simulation}, later extended to episodic deep reinforcement learning setting \cite{schulman2015high} and implemented in popular solvers such as Spinning Up \cite{pg_impl}. 
 The second one is a new doubly discounted variant of VPG, with the introduction of an additional fictitious discount factor in the score functions of the policy gradient estimators. 
 This additional discount factor is shown to help reduce the bias in DAE and thus improve asymptotically the algorithm convergence.

 \paragraph{Our approach.} 
\hspace{-0.306cm} There are three main ingredients in our analysis. 
 The first is establishing  quantitative connections among three settings of MDPs: the finite-time-horizon, the average award, and the discounted settings (\cf \S\ref{prelim}). These relations enable us to connect the finite-time-horizon sub-optimality gap with the average reward (\cf Theorem \ref{main_res_finite_average}) and the discounted (\cf Theorem \ref{main_res_finite_discount}) ones.
 The second is utilizing the convergence property of value iteration algorithms  to analyze the gap between the stationary policies of the average reward MDPs 
 and the non-stationary optimal policies of the finite-time-horizon MDPs (\cf Lemma \ref{VH-eta}).
 The third one is deriving the gradient domination (\cf Lemma \ref{grad_dom_average}) and Lipschitz gradient (\cf Lemma \ref{smooth_pg_ave}) properties for average reward MDPs, which is critical to obtain the sub-optimality of algorithm policies for the average reward problem (\cf Theorem \ref{sample_complexity_eta_thm}). 
 
  \paragraph{Contributions.} \hspace{-0.306cm} The contributions of this paper are two-fold: 
 \begin{itemize}
  \item It establishes the first (and non-asymptotic) connections between (a) the sub-optimality gap in finite-time-horizon MDPs and (b) the sub-optimality gaps in the average reward and the discounted reformulations (\cf Theorems \ref{main_res_finite_average} and \ref{main_res_finite_discount}). 
 \item It obtains, for the first time, theoretical guarantees on fictitious discount algorithms for the episodic reinforcement learning of finite-time-horizon MDPs (\cf Theorems \ref{main_conv_spin-up} and \ref{discounted_reinforce_thm}). The convergence is global, and not asymptotic. Moreover, it demonstrates explicit dependencies on both the  time horizon  and the fictitious discount factor. 
The analysis in this paper leads to the first global convergence of policy gradient methods for finite-time-horizon episodic reinforcement learning.
 \end{itemize}

 \paragraph{Related work.} 
 
 \hspace{-0.306cm} Since the seminal work of D. Blackwell \cite{blackwell1962discrete}, earlier works on the relationship among different settings of MDPs have been focusing on the discounted and average reward settings 
 \cite{hordijk2002blackwell,lasserre1988conditions,kakade2001optimizing,lewis2002bias,mahadevan1996sensitive,schneckenreither2020average}. 
 In contrast, our focus is on the remaining two relations, 
 namely (i) the connection between the finite-time-horizon and the discounted problems and (ii) the connection between the finite-time-horizon and the average reward problems. 
 
 Theoretical study on policy gradient methods started with the asymptotic local convergence \cite{sutton2000policy, actor-critic,marbach2001simulation}. Later, non-asymptotic rate of such local convergence has been established in a series of works \cite{SVRPG, xu2019sample}. 
Recently, more attention has been shifted to the global convergence of policy gradient methods. However, the majority of these results have been on the discounted settings \cite{zhang2019global,bhandari2019global, agarwal2019reinforcement,wang2019neural, shani2019adaptive, softmax_PG,cen2020fast, zhang2020variational}. Recent progress has been made on a particular class of   finite-time-horizon MDPs, i.e., linear quadratic finite-time-horizon MDPs  and their variants  \cite{hambly2020policy}  \cite{zhang2021derivative},  \cite{hambly2021policy}. This paper, instead, studies global convergence of policy gradient methods for finite-time-horizon, finite-state-action MDPs with {\it general dynamics and rewards}. 

 \paragraph{Outline.}   \S\ref{prelim} introduces three settings of MDPs and their mutual connections. \S\ref{dae_reinforce} introduces DAE REINFORCE  and establishes its global sub-optimality guarantee. A doubly discounted 
 variant is then proposed in \S\ref{doubly_discounted} with its global convergence analysis,  showing the benefits of the additional discount factor. \S\ref{extension} concludes.
 
 \section{Problem setup and preliminaries}\label{prelim}
\subsection{Problem Setup}\label{prob_intro}
Consider a Markov decision process $\mathcal{M}$ with a finite state space $\mathcal{S}=\{1,\dots,S\}$,  
a finite action space $\mathcal{A}=\{1,\dots,A\}$,  
a transition probability $p(s'|s,a)$ for the probability
of transitioning from state $s$ to state $s'$ when taking action $a$, and a reward function $r(s,a)$ denoting the (deterministic) 
instantaneous
reward for taking action $a$ in state $s$. Here, the initial state is assumed to follow a distribution $\rho\in\mathcal{P}(\mathcal{S})$, where $\mathcal{P}(\mathcal{S})\subseteq \reals^{|\mathcal{S}|}$ denotes the set of  probability measures on over the set $\mathcal{S}$. Denote $R_{\max}$ the maximum reward such that $R_{\max}=\max_{s\in\mathcal{S},a\in\mathcal{A}}\,|r(s,a)|$.

The focus of this paper is the finite-time-horizon MDP. Given a finite time horizon $H\geq 1$, decisions are made in the duration of timestamps from $h=0$ to $h=H-1$. 
This duration is also referred to as an ``episode''. Such a horizon can either be naturally defined by the expiration time (\eg, the length of a video game) or manually specified by the decision maker (\eg, the length of affordable decision period). A (randomized) policy $\pi:\mathcal{S}\rightarrow\mathcal{P}(\mathcal{A})$ is a mapping from the state space to a distribution over the action space. For notational simplicity, we use $\pi(a|s)$ to denote the $a$-th entry of $\pi(s)$, \ie, the probability of taking action $a$ at state $s$ under a policy $\pi$. 
Then for any (randomized) policy sequence ${\boldsymbol \pi}^H=\{\pi_h\}_{h=0}^{H-1}$, the performance metric $V^H({\boldsymbol\pi}^H)$ is the mean reward collected over the finite horizon episode of length $H$, \ie,
\BEQ\label{finite_horizon_reward}
V^H({\boldsymbol \pi}^H)=\dfrac{1}{H}\Expect\sum\nolimits_{h=0}^{H-1}r(s_h,a_h),
\EEQ
where 
$s_0\sim \rho$, $a_h\sim \pi_h(s_h)$ and $s_{h+1}\sim p(\cdot|s_h,a_h)$ for $h=0,\dots,H-2$. The finite-time-horizon problem is the following optimization problem:
\BEQ\label{finite_horizon_opt}
\text{maximize}_{{\boldsymbol \pi}^H=\{\pi_0,\dots,\pi_{H-1}\}} \,V^H({\boldsymbol \pi}^H).
\EEQ
Note that the optimal policy sequence ${\boldsymbol\pi}^{H,\star}=\{\pi_h^{H,\star}\}_{h=0}^{H-1}$ of problem \eqref{finite_horizon_opt} may be nonstationary, and we write $V^{H,\star}=V^H({\boldsymbol\pi}^{H,\star})$. When the policy sequence ${\boldsymbol\pi}^H=\{\pi\}_{h=0}^{H-1}$ is stationary, we will write it as $\pi$ for notational simplicity.  Here and below we use $P_{\pi}\in\reals^{S\times S}$ to denote the transition probability of the Markov chain induced by policy $\pi$, \ie, $P_{\pi}(s,s')=\sum_{a\in\mathcal{A}}p(s'|s,a)\pi(a|s)$.


Throughout this paper, we make the following assumption as in \cite{ortner2020regret}. Note that this assumption naturally holds when the transition probability $p$ is component-wisely positive. 
\begin{assumption}\label{finite-ergodic}
For any deterministic stationary policy $\pi$, 
the induced Markov chain with transition matrix $P_\pi$ is irreducible and aperiodic. 
\end{assumption}
With Assumption \ref{finite-ergodic}, we have the following proposition. 
\begin{proposition}\label{dobrushin_uniform}
Given Assumption \ref{finite-ergodic}, then there exist constants $C_{p,S,A}>1$ and $\alpha_{p,S,A}\in[0,1)$ that depend only on the transition probability model $p$, number of states $S$ and number of actions $A$ of the MDP $\mathcal{M}$, such that for any policy $\pi$ and $h\geq 0$, 
\BEQ\label{d_TV_alpha_uniform}
d_{\rm TV}(\rho P_{\pi}^h,\mu_{\pi})\leq C_{p,S,A}\alpha_{p,S,A}^{h},
\EEQ
where $\mu_{\pi}$ is the (unique) stationary distribution of the transition matrix $P_{\pi}$.  
\end{proposition}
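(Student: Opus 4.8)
\emph{Proof proposal.} The plan is to reduce \eqref{d_TV_alpha_uniform} to a uniform geometric contraction in total variation obtained from the Dobrushin ergodicity coefficient, and the real work is to make that contraction uniform over \emph{all} policies — including randomized ones — starting from a hypothesis that only constrains the finitely many deterministic policies. Concretely, I would proceed in three steps. \textbf{Step 1 (uniform primitivity horizon).} For each deterministic stationary policy $d$, Assumption \ref{finite-ergodic} makes $P_d$ an irreducible aperiodic stochastic matrix on the finite set $\mathcal{S}$, hence primitive; so there is an integer $m_d\geq 1$ with $P_d^m>0$ entrywise for all $m\geq m_d$ (once some power is strictly positive, so is every later power, since no column of an irreducible transition matrix vanishes identically). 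As there are only $A^S$ deterministic policies, set $m_0:=\max_d m_d<\infty$ and $\beta_0:=\min_d\min_{s,s'}P_d^{m_0}(s,s')>0$; both depend only on $p,S,A$.

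\textbf{Step 2 (dominating an arbitrary policy).} Fix any policy $\pi$. For each state $s$ choose $a_s\in\arg\max_a\pi(a|s)$, so $\pi(a_s|s)\geq 1/A$, and let $d_\pi$ be the deterministic policy $s\mapsto a_s$. Then
\[
P_\pi(s,\cdot)-\tfrac1A P_{d_\pi}(s,\cdot)=\Big(\pi(a_s|s)-\tfrac1A\Big)p(\cdot|s,a_s)+\sum_{a\neq a_s}\pi(a|s)\,p(\cdot|s,a)\ \geq\ 0
\]
entrywise, i.e. $P_\pi\geq \tfrac1A P_{d_\pi}$ entrywise. Multiplying this domination through $m_0$ nonnegative factors gives $P_\pi^{m_0}\geq A^{-m_0}P_{d_\pi}^{m_0}\geq A^{-m_0}\beta_0\,\mathbf{1}\mathbf{1}^\top$, so that $P_\pi^{m_0}(s,s')\geq \gamma_0:=A^{-m_0}\beta_0>0$ for all $s,s'$ and \emph{all} $\pi$, with $\gamma_0$ depending only on $p,S,A$. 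In particular $P_\pi$ is irreducible, so its stationary distribution $\mu_\pi$ is well-defined and unique (and $\gamma_0\leq 1/S$ since each row of $P_\pi^{m_0}$ sums to one).

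\textbf{Step 3 (uniform Dobrushin contraction).} The lower bound from Step 2 yields $\delta(P_\pi^{m_0})\geq S\gamma_0=:\delta_0\in(0,1]$ uniformly in $\pi$, where $\delta(Q)=\min_{s,s'}\sum_{s''}\min\{Q(s,s''),Q(s',s'')\}$. Using the classical contraction $d_{\rm TV}(\nu_1 Q,\nu_2 Q)\leq(1-\delta(Q))\,d_{\rm TV}(\nu_1,\nu_2)$, writing $h=k m_0+j$ with $0\leq j<m_0$, and invoking $\mu_\pi(P_\pi^{m_0})^k=\mu_\pi$,
\[
d_{\rm TV}(\rho P_\pi^{h},\mu_\pi)=d_{\rm TV}\big((\rho P_\pi^{j})(P_\pi^{m_0})^{k},\,\mu_\pi(P_\pi^{m_0})^{k}\big)\leq (1-\delta_0)^{k}\,d_{\rm TV}(\rho P_\pi^{j},\mu_\pi)\leq(1-\delta_0)^{\lfloor h/m_0\rfloor}.
\]
Since $\lfloor h/m_0\rfloor\geq h/m_0-1$, taking $\alpha_{p,S,A}:=\max\{(1-\delta_0)^{1/m_0},\tfrac12\}\in[0,1)$ and $C_{p,S,A}:=\max\{(1-\delta_0)^{-1},2^{m_0}\}>1$ gives \eqref{d_TV_alpha_uniform} (the $\max$'s only matter in the degenerate case $\delta_0=1$, where mixing is exact after $m_0$ steps).

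\textbf{Main obstacle.} The only genuine subtlety is Step 2: Assumption \ref{finite-ergodic} speaks only of deterministic policies, yet \eqref{d_TV_alpha_uniform} must hold with constants uniform over the continuum of randomized policies, and a naive path-counting lower bound on $P_\pi^{m_0}$ degenerates as $\pi$ approaches certain boundary configurations. The point that rescues uniformity is that $\max_a\pi(a|s)\geq 1/A$ always, which lets $P_\pi$ be dominated by $A^{-1}$ times the transition matrix of a deterministic policy drawn from the finite family already controlled by the assumption; everything after that is routine bookkeeping. (Alternatively, one could note that $\pi\mapsto\delta(P_\pi^{m_0})$ is continuous and, by Step 1 together with the domination, strictly positive on the compact set of policies, hence bounded below by a positive constant.)
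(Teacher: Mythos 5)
Your proof is correct. It shares the paper's overall architecture --- obtain a uniform entrywise lower bound on $P_\pi^{m}$ for a power $m$ depending only on $(p,S,A)$, apply a Dobrushin-type contraction to the resulting positive matrix, then interpolate to all $h$ using the non-expansiveness of total variation under stochastic matrices --- but it handles the one genuinely delicate step by a different mechanism. To pass from the finitely many deterministic policies covered by Assumption \ref{finite-ergodic} to an arbitrary randomized $\pi$, the paper decomposes $\pi$ as a convex combination of $n_{S,A}=S(A-1)+1$ deterministic policies (Proposition \ref{convex_comb}, proved by induction on the number of randomizing entries) and then lower-bounds $P_\pi^{m_p}\geq\sum_i c_i^{m_p}P_{\pi_i}^{m_p}\geq p_{\min}\sum_i c_i^{m_p}$, finishing with convexity of $x\mapsto x^{m_p}$ to get the constant $p_{\min}/n_{S,A}^{m_p-1}$. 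You instead dominate $P_\pi\geq\tfrac{1}{A}P_{d_\pi}$ entrywise for a single deterministic policy $d_\pi$ built from modal actions (using only $\max_a\pi(a|s)\geq 1/A$), which is more elementary --- no decomposition lemma is needed --- and yields the comparable constant $A^{-m_0}\beta_0$; both constants depend only on $(p,S,A)$, which is all that matters. Your Step 3 also invokes the full Dobrushin ergodicity coefficient $\delta(Q)$ rather than the cruder $S\min_{s,s'}Q(s,s')$ of Proposition \ref{dobrushin}, which can only improve the contraction rate, and your explicit handling of the remainder $h=km_0+j$ and of the degenerate case $\delta_0=1$ is sound (the paper silently assumes $\tilde\alpha_{p,S,A}>0$ when setting $C_{p,S,A}=1/\tilde\alpha_{p,S,A}$). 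The only caveat worth noting is that the paper reuses the specific lower bound $P_\pi^{m_p}(s,s')\geq p_{\min}/n_{S,A}^{m_p-1}$ and its algebraic link to $\alpha_{p,S,A}$ in later proofs (Proposition \ref{kakade_prop} and Lemma \ref{grad_dom_average}); your $\gamma_0=A^{-m_0}\beta_0$ would serve the same role there after the corresponding constants are rewired.
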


The analysis of the above finite-time-horizon MDP will rely on two related MDPs:  the average reward problem and the discounted one, both of which have stationary optimal policies under Assumption \ref{finite-ergodic}. 


\paragraph{Discounted problem.} It is to consider an infinite horizon and solve for
\[
\text{maximize}_{\boldsymbol\pi=\{\pi_h\}_{h=0}^{\infty}}\,V^{\gamma}({\boldsymbol \pi})
\]
with
\[
V^{\gamma}({\boldsymbol \pi})=(1-\gamma)\Expect\sum\nolimits_{h=0}^{\infty}\gamma^hr(s_h,a_h),
\]
where $s_0\sim \rho$, $a_h\sim \pi_h(s_h)$ and $s_{h+1}\sim p(\cdot|s_h,a_h)$ for $h\geq 0$. Here $\gamma\in[0,1)$ is the discount factor, penalizing future rewards. 
It is well-known that for this discounted problem, there exists a stationary optimal policy sequence ${\boldsymbol \pi}^{\gamma,\star}=\{\pi_h^{\gamma,\star}\}_{h=0}^{\infty}$, where all $\pi_h^{\gamma,\star}=\pi^{\gamma,\star}$ ($h\geq 0$) are equal \cite{puterman2014markov}. Similarly, we denote $V^{\gamma,\star}=V^{\gamma}({\boldsymbol\pi}^{\gamma,\star})$.  Again, when the policy sequence $\boldsymbol\pi=\{\pi\}_{h=0}^{\infty}$ is stationary, we will write it as $\pi$ for notational simplicity. 

\paragraph{Average reward problem.} The infinite horizon average reward of a (stationary)  policy $\pi$ is defined as
\BEQ\label{eta-def}
\begin{split}
\eta(\pi)=&\lim_{H\rightarrow\infty}V^H(\pi)=\lim_{H\rightarrow\infty}\dfrac{1}{H}\Expect\sum_{h=0}^{H-1}r(s_h,a_h)=\sum_{s\in\mathcal{S},\,a\in\mathcal{A}}\mu_{\pi}(s)\pi(a|s)r(s,a),
\end{split}
\EEQ
where $\mu_{\pi}$ is defined in Proposition \ref{dobrushin_uniform}.  
The goal is to find $\pi$ that maximizes $\eta(\cdot)$. 
Note $\eta(\pi)$ is well-defined as the limit in \eqref{eta-def} is guaranteed to exist and be finite, and independent of the initial state distribution $\rho$ under Assumption \ref{finite-ergodic} \cite{puterman2014markov}. Since $|\eta(\pi)|\leq R_{\max}$ and the set of all (stationary) policies (viewed as a subset $\reals^{SA}$) is compact, the optimal (stationary) policy $\pi^\star$ (that maximizes $\eta(\cdot)$) exists and we denote the corresponding value function as $\eta^\star=\eta(\pi^\star)$. 

\subsection{Connections of finite-time-horizon with  discounted  and average reward problems}
Now we introduce our first set of main results, which characterize the connections within these three different MDP problems. 

The first result bounds the error between $V^\gamma(\pi)$ (for the discounted problem) and $V^H(\pi)$ (for the finite-time-horizon problem) under an arbitrary stationary policy $\pi$. 
\begin{lemma}\label{error_bd_arb} 
Given Assumption \ref{finite-ergodic}, then for any stationary policy $\pi$, 
\begin{equation}\label{VH-Vgamma-arb}
\begin{split}
|V^\gamma(\pi)-V^H(\pi)|\leq&\,2R_{\max}C_{p,S,A}\left(\frac{\gamma}{H(1-\gamma)}\alpha_{p,S,A}^H +\dfrac{\alpha_{p,S,A}+|H(1-\gamma)-1|}{(1-\alpha_{p,S,A})H}\right), 
\end{split}
\end{equation}
where $C_{p,S,A}>1$ and $\alpha_{p,S,A}\in[0,1)$ are the constants in Proposition \ref{dobrushin_uniform}, and 
depend only on the transition probability model $p$, the number of states $S$ and the number of actions $A$ of $\mathcal{M}$, the underlying MDP. 
\end{lemma}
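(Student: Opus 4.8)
The plan is to reduce everything to the scalar sequence of expected rewards and then invoke the uniform geometric ergodicity of Proposition \ref{dobrushin_uniform}. Fix a stationary policy $\pi$, set $\bar r_\pi(s):=\sum_{a\in\mathcal A}\pi(a|s)r(s,a)$, and for $h\ge0$ let $r_h:=\langle\rho P_\pi^h,\bar r_\pi\rangle=\mathbb{E}[r(s_h,a_h)]$, so that $V^H(\pi)=\frac1H\sum_{h=0}^{H-1}r_h$, $V^\gamma(\pi)=(1-\gamma)\sum_{h=0}^{\infty}\gamma^h r_h$, and $\eta(\pi)=\langle\mu_\pi,\bar r_\pi\rangle$. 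Since $\rho P_\pi^h$ and $\mu_\pi$ are probability vectors, $r_h-\eta(\pi)=\langle\rho P_\pi^h-\mu_\pi,\bar r_\pi-c\mathbf{1}\rangle$ for every constant $c$; choosing $c$ to be the midrange of $\bar r_\pi$ (so $\|\bar r_\pi-c\mathbf{1}\|_\infty\le R_{\max}$) and bounding $|\langle\nu,f\rangle|\le\|\nu\|_1\|f\|_\infty=2d_{\rm TV}(\rho P_\pi^h,\mu_\pi)\|f\|_\infty$, Proposition \ref{dobrushin_uniform} yields the decay estimate
\[
|r_h-\eta(\pi)|\le 2R_{\max}C_{p,S,A}\,\alpha_{p,S,A}^h,\qquad h\ge0.
\]

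Next I use that $\{1/H\}_{0\le h\le H-1}$ and $\{(1-\gamma)\gamma^h\}_{h\ge0}$ are both probability distributions on $\{0,1,2,\dots\}$. Writing $w_h:=(1-\gamma)\gamma^h-\frac1H\mathbf{1}_{\{h\le H-1\}}$ we get $\sum_{h\ge0}w_h=0$ and $\sum_h|w_h|<\infty$, so
\[
V^\gamma(\pi)-V^H(\pi)=\sum_{h=0}^{\infty}w_h r_h=\sum_{h=0}^{\infty}w_h\bigl(r_h-\eta(\pi)\bigr),
\]
and hence $|V^\gamma(\pi)-V^H(\pi)|\le 2R_{\max}C_{p,S,A}\sum_{h\ge0}|w_h|\alpha^h$ with $\alpha:=\alpha_{p,S,A}$. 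It remains to show the purely elementary estimate $\sum_{h\ge0}|w_h|\alpha^h\le\frac{\gamma}{H(1-\gamma)}\alpha^H+\frac{\alpha+|H(1-\gamma)-1|}{(1-\alpha)H}$, which I do by splitting at $h=H$.

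For the tail ($h\ge H$, where $|w_h|=(1-\gamma)\gamma^h$), summing the geometric series, using $1-\gamma\le1-\gamma\alpha$, and then the elementary bound $H(1-\gamma)\gamma^{H-1}\le1$ (its left side, as a function of $\gamma\in[0,1)$, is maximized at $\gamma=1-1/H$, where it equals $(1-1/H)^{H-1}\le1$) gives $(1-\gamma)\sum_{h\ge H}(\gamma\alpha)^h=\frac{(1-\gamma)(\gamma\alpha)^H}{1-\gamma\alpha}\le(\gamma\alpha)^H\le\frac{\gamma}{H(1-\gamma)}\alpha^H$. For the head ($0\le h\le H-1$) the key observation is the rewriting $(1-\gamma)\gamma^h-\frac1H=\frac1H\bigl[(\gamma^h-1)+\delta\gamma^h\bigr]$ with $\delta:=H(1-\gamma)-1$, so $|w_h|\le\frac1H\bigl[(1-\gamma^h)+|\delta|\gamma^h\bigr]$; the $|\delta|$-part sums to at most $\frac{|\delta|}{H}\sum_{h\ge0}(\gamma\alpha)^h\le\frac{|\delta|}{H(1-\alpha)}$, while the $(1-\gamma^h)$-part equals $\frac1H\bigl[\sum_{h=0}^{H-1}\alpha^h-\sum_{h=0}^{H-1}(\gamma\alpha)^h\bigr]$, which is $\le\frac{\alpha}{(1-\alpha)H}$ because $\sum_{h=0}^{H-1}\alpha^h-\frac{\alpha}{1-\alpha}=1-\frac{\alpha^H}{1-\alpha}\le1\le\sum_{h=0}^{H-1}(\gamma\alpha)^h$. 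Collecting the resulting terms gives the claim. The only real subtlety is noticing that passing through $\eta(\pi)$ by two separate triangle inequalities is too lossy — it leaves a non‑vanishing $\frac{1-\gamma}{1-\gamma\alpha}$ term — so one must retain the cancellation between the uniform and the geometric weightings; after the $\delta$‑rewriting the head estimate is the main (short) computation.
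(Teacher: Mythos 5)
Your proof is correct, and it reaches exactly the stated bound by a genuinely different route from the paper's. The paper keeps everything at the level of state-distribution row vectors: it writes both value functions as $\sum_s w(s)\bar r_\pi(s)$, expresses the weight difference as $\rho\bigl((1-\gamma)I-\frac1H\sum_{h<H}P_\pi^h(I-\gamma P_\pi)\bigr)(I-\gamma P_\pi)^{-1}$, splits the bracket into a part controlled by $\rho P_\pi^h-\mu_\pi$ and a part $\frac{c-1}{H}(\rho-\mu_\pi)$ with $c=H(1-\gamma)$, and then pays for the resolvent with $\|(I-\gamma P_\pi)^{-1}\|_\infty\le 1/(1-\gamma)$ and the inequality $\|xA\|_1\le\|A\|_\infty\|x\|_1$. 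You instead collapse everything to the scalar sequence $r_h=\mathbb{E}[r(s_h,a_h)]$, observe that the two value functions are integrals of $r_h$ against two probability weightings on $\{0,1,\dots\}$, and exploit that the difference of weightings annihilates constants — so you may center at $\eta(\pi)$ and use only the decay $|r_h-\eta(\pi)|\le 2R_{\max}C_{p,S,A}\alpha_{p,S,A}^h$ plus elementary geometric-series estimates (the split at $h=H$, the bound $H(1-\gamma)\gamma^{H-1}\le1$, and the $\delta$-rewriting all check out, and the three resulting terms reproduce \eqref{VH-Vgamma-arb} exactly). Both proofs consume Proposition \ref{dobrushin_uniform} in the same way; what yours buys is the elimination of the resolvent and all operator-norm bookkeeping, making the cancellation between the uniform and geometric weightings (which the paper achieves by inserting $(I-\gamma P_\pi)(I-\gamma P_\pi)^{-1}$) completely explicit. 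One minor quibble: your closing remark about the two-triangle-inequality route leaving a ``non-vanishing $\frac{1-\gamma}{1-\gamma\alpha}$ term'' is a bit loose — that route still gives an $O((1-\gamma)+1/H)$ bound, just not the sharper $|H(1-\gamma)-1|/H$ combination in the statement — but this is an aside and does not affect the validity of your argument.
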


The next lemma establishes a bound between $V^{\gamma}(\pi)$ (for the discounted problem) and $\eta(\pi)$ (for the average reward problem) under any stationary policy $\pi$. 
\begin{lemma}\label{Vgamma-eta-pi}
Given Assumption \ref{finite-ergodic}, then
\BEQ\label{Vgamma-eta-pi-bd}
|V^{\gamma}(\pi)-\eta(\pi)|\leq \dfrac{2(1-\gamma)R_{\max}C_{p,S,A}}{1-\alpha_{p,S,A}},
\EEQ
where the constants $C_{p,S,A}>1$ and $\alpha_{p,S,A}\in[0,1)$  are the same as in Lemma \ref{error_bd_arb}. 
\end{lemma}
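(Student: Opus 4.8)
The plan is to expand both $V^\gamma(\pi)$ and $\eta(\pi)$ in terms of the law of $s_h$ under $\pi$ started from $\rho$, and then apply the geometric mixing bound of Proposition \ref{dobrushin_uniform} term by term. Writing $r_\pi(s):=\sum_{a\in\mathcal{A}}\pi(a|s)r(s,a)$ for the expected one-step reward, we have $\|r_\pi\|_\infty\leq R_{\max}$, and $\Expect[r(s_h,a_h)]=\langle \rho P_\pi^h, r_\pi\rangle$, while by \eqref{eta-def} $\eta(\pi)=\langle \mu_\pi, r_\pi\rangle$. Hence
\[
V^\gamma(\pi)=(1-\gamma)\sum_{h=0}^\infty \gamma^h \langle \rho P_\pi^h, r_\pi\rangle .
\]

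Next I would use the identity $(1-\gamma)\sum_{h=0}^\infty \gamma^h=1$ to subtract $\eta(\pi)$ inside the sum, obtaining
\[
V^\gamma(\pi)-\eta(\pi)=(1-\gamma)\sum_{h=0}^\infty \gamma^h \big\langle \rho P_\pi^h-\mu_\pi,\, r_\pi\big\rangle .
\]
Each inner product is controlled by H\"older's inequality together with Proposition \ref{dobrushin_uniform}: $|\langle \rho P_\pi^h-\mu_\pi, r_\pi\rangle|\leq \|r_\pi\|_\infty\,\|\rho P_\pi^h-\mu_\pi\|_1=2R_{\max}\,d_{\rm TV}(\rho P_\pi^h,\mu_\pi)\leq 2R_{\max}C_{p,S,A}\alpha_{p,S,A}^h$, uniformly in $h$.

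Finally I would sum the resulting geometric series, $|V^\gamma(\pi)-\eta(\pi)|\leq 2R_{\max}C_{p,S,A}(1-\gamma)\sum_{h=0}^\infty(\gamma\alpha_{p,S,A})^h=\frac{2R_{\max}C_{p,S,A}(1-\gamma)}{1-\gamma\alpha_{p,S,A}}$, and then use $1-\gamma\alpha_{p,S,A}\geq 1-\alpha_{p,S,A}$ (since $\gamma<1$ and $\alpha_{p,S,A}\geq 0$) to reach the stated bound. There is no genuinely hard step here; the only points requiring care are (i) the factor $2$ relating $\|\cdot\|_1$ to $d_{\rm TV}$ under the paper's convention, and (ii) invoking the \emph{uniform} (in both $\pi$ and $h$) mixing bound of Proposition \ref{dobrushin_uniform}, so that the constants in the final estimate depend only on $p,S,A$.
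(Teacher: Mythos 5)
Your proof is correct and follows essentially the same route as the paper's: both expand $V^\gamma(\pi)$ as $(1-\gamma)\sum_h\gamma^h\langle\rho P_\pi^h,r_\pi\rangle$, subtract $\eta(\pi)$ using $(1-\gamma)\sum_h\gamma^h=1$, apply the uniform mixing bound of Proposition \ref{dobrushin_uniform} term by term (with the factor $2$ from $\|\cdot\|_1=2d_{\rm TV}$), and sum the geometric series before relaxing $1-\gamma\alpha_{p,S,A}$ to $1-\alpha_{p,S,A}$. No gaps.
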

Maximizing over $\pi$, then immediately from Lemma \ref{Vgamma-eta-pi}, we have
\begin{corollary}\label{Vgamma-eta}
Given Assumption \ref{finite-ergodic}, then
\BEQ\label{Vgamma-eta-star}
|V^{\gamma,\star}-\eta^\star|\leq \dfrac{2(1-\gamma)R_{\max}C_{p,S,A}}{1-\alpha_{p,S,A}},
\EEQ
where the constants $C_{p,S,A}>1$ and $\alpha_{p,S,A}\in[0,1)$ are the same as in Lemma \ref{error_bd_arb}. 
\end{corollary}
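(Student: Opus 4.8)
The plan is to obtain the bound by a standard ``supremum of two uniformly close functions'' argument, using that both the discounted and the average reward problems admit stationary optimal policies under Assumption \ref{finite-ergodic}.

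First I would recall, as already noted in the excerpt (see \cite{puterman2014markov}), that there is a stationary policy $\pi^{\gamma,\star}$ attaining $V^{\gamma,\star}$ and a stationary policy $\pi^\star$ attaining $\eta^\star$. Since the per-policy bound of Lemma \ref{Vgamma-eta-pi} holds for \emph{every} stationary policy, I can apply it in particular at each of these two optimizers. Writing $\epsilon := \frac{2(1-\gamma)R_{\max}C_{p,S,A}}{1-\alpha_{p,S,A}}$ for the right-hand side, optimality of $\pi^{\gamma,\star}$ for the discounted problem together with Lemma \ref{Vgamma-eta-pi} applied at $\pi^\star$ gives $V^{\gamma,\star} = V^\gamma(\pi^{\gamma,\star}) \ge V^\gamma(\pi^\star) \ge \eta(\pi^\star) - \epsilon = \eta^\star - \epsilon$. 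Symmetrically, optimality of $\pi^\star$ for the average reward problem together with Lemma \ref{Vgamma-eta-pi} applied at $\pi^{\gamma,\star}$ gives $\eta^\star = \eta(\pi^\star) \ge \eta(\pi^{\gamma,\star}) \ge V^\gamma(\pi^{\gamma,\star}) - \epsilon = V^{\gamma,\star} - \epsilon$. Combining the two displayed chains of inequalities yields $|V^{\gamma,\star} - \eta^\star| \le \epsilon$, which is the claim.

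There is essentially no obstacle here; the only point requiring a little care is to make sure the comparison is carried out over the same policy class, which is precisely why I invoke the existence of stationary optimizers in both the discounted and the average reward settings before applying the uniform bound of Lemma \ref{Vgamma-eta-pi}. The constants $C_{p,S,A}>1$ and $\alpha_{p,S,A}\in[0,1)$ are inherited unchanged from Lemma \ref{error_bd_arb} via Lemma \ref{Vgamma-eta-pi}, so no new dependence on $p$, $S$, or $A$ is introduced.
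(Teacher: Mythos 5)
Your proof is correct and is exactly what the paper means by its one-line justification ``maximizing over $\pi$, then immediately from Lemma \ref{Vgamma-eta-pi}'': the two-sided comparison at the stationary optimizers $\pi^{\gamma,\star}$ and $\pi^\star$ is the standard argument, and your explicit note that both problems admit stationary optimal policies (so the suprema are taken over the same class) is the only point of care, which you handle properly.
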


The following statement controls the gap between $V^{H}(\pi)$ (for the finite-time-horizon problem) and $\eta(\pi)$ (for the average reward problem) under any stationary policy $\pi$. 
\begin{lemma}\label{VH-eta-pi}
Given Assumption  \ref{finite-ergodic}, then 
\BEQ\label{VH-eta-pi-bd}
|V^H(\pi)-\eta(\pi)|\leq \frac{2R_{\max}C_{p,S,A}}{H(1-\alpha_{p,S,A})},
\EEQ
where the constants $C_{p,S,A}>1$ and $\alpha_{p,S,A}\in[0,1)$ are the same as in Lemma \ref{error_bd_arb}. 
\end{lemma}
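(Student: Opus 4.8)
The plan is to directly estimate the difference $V^H(\pi)-\eta(\pi)$ using the explicit forms \eqref{finite_horizon_reward} and \eqref{eta-def} together with the mixing bound from Proposition \ref{dobrushin_uniform}. Write $V^H(\pi)=\frac{1}{H}\sum_{h=0}^{H-1}\sum_{s,a}(\rho P_\pi^h)(s)\,\pi(a|s)\,r(s,a)$ and $\eta(\pi)=\sum_{s,a}\mu_\pi(s)\,\pi(a|s)\,r(s,a)$. Subtracting termwise,
\BEQ
V^H(\pi)-\eta(\pi)=\frac{1}{H}\sum_{h=0}^{H-1}\sum_{s,a}\bigl((\rho P_\pi^h)(s)-\mu_\pi(s)\bigr)\pi(a|s)\,r(s,a).
\EEQ
For each fixed $h$, the inner sum is bounded in absolute value by $R_{\max}\sum_s |(\rho P_\pi^h)(s)-\mu_\pi(s)| = 2R_{\max}\,d_{\rm TV}(\rho P_\pi^h,\mu_\pi)$, since $\sum_a\pi(a|s)=1$ and $|r|\le R_{\max}$.

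Next I would apply \eqref{d_TV_alpha_uniform} to get $d_{\rm TV}(\rho P_\pi^h,\mu_\pi)\le C_{p,S,A}\,\alpha_{p,S,A}^h$ for each $h\ge 0$, so that
\BEQ
|V^H(\pi)-\eta(\pi)|\le \frac{2R_{\max}C_{p,S,A}}{H}\sum_{h=0}^{H-1}\alpha_{p,S,A}^h\le \frac{2R_{\max}C_{p,S,A}}{H}\cdot\frac{1}{1-\alpha_{p,S,A}},
\EEQ
using the geometric series bound $\sum_{h=0}^{H-1}\alpha^h\le \sum_{h=0}^\infty \alpha^h = 1/(1-\alpha)$ with $\alpha=\alpha_{p,S,A}\in[0,1)$. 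This is exactly the claimed bound \eqref{VH-eta-pi-bd}.

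There is no real obstacle here; the argument is a short computation once one has the geometric mixing estimate of Proposition \ref{dobrushin_uniform} in hand. The only minor point to be careful about is the conventions on total variation distance (the factor of $2$ relating $\|\cdot\|_1$ to $d_{\rm TV}$) and making sure the sum over $a$ of $\pi(a|s)$ is collapsed before bounding, so that the estimate depends only on the marginal state discrepancy $\rho P_\pi^h - \mu_\pi$ rather than on the joint state-action distribution. One could alternatively derive this as a corollary of Lemma \ref{error_bd_arb} by taking a suitable limit $\gamma\to 1$ with $H(1-\gamma)\to 1$, but the direct route above is cleaner and yields the stated constant without extra bookkeeping.
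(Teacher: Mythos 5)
Your proposal is correct and follows essentially the same route as the paper: both rewrite $V^H(\pi)$ via the averaged state-occupancy $\frac{1}{H}\sum_{h=0}^{H-1}\rho P_\pi^h$, bound its $\ell_1$-distance to $\mu_\pi$ termwise by $2C_{p,S,A}\alpha_{p,S,A}^h$ using Proposition \ref{dobrushin_uniform}, and sum the geometric series. The constant and all intermediate bounds match the paper's proof exactly.
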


And finally, the bound of 
the gap between the optimal value functions $V^{H,\star}$ (for the finite-time-horizon problem) and $\eta^\star$ (for the average reward problem) is as follows. 
\begin{lemma}\label{VH-eta}
Given Assumption \ref{finite-ergodic}, then 
\BEQ\label{VH-eta-star}
|V^{H,\star}-\eta^\star|\leq \dfrac{2R_{\max}D_{p,S,A}}{H}, 
\EEQ
where $D_{p,S,A}> 1$ is a constant that depends only on the transition probability model $p$, the number of states $S$ and the number of actions $A$ of the underlying MDP $\mathcal{M}$. 
\end{lemma}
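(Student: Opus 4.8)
The plan is to reduce the finite-time-horizon optimum to the average-reward optimum through the (un-normalized) finite-horizon value iteration, whose iterates are known to stay within a bounded ``relative value'' of $n\eta^\star$. Let $\mathcal T$ denote the Bellman optimality operator $(\mathcal T v)(s)=\max_{a}\{r(s,a)+\sum_{s'}p(s'|s,a)v(s')\}$, and set $U_0\equiv 0$, $U_{n+1}=\mathcal T U_n$, so that by standard dynamic programming $U_H(s)=\max\Expect_s[\sum_{h=0}^{H-1}r(s_h,a_h)]$ and hence $V^{H,\star}=\frac1H\sum_{s}\rho(s)U_H(s)$. Under Assumption \ref{finite-ergodic} the MDP is unichain, so the average-reward optimality equation $\eta^\star+h^\star(s)=\max_a\{r(s,a)+\sum_{s'}p(s'|s,a)h^\star(s')\}$ admits a solution $(\eta^\star,h^\star)$, with $\eta^\star$ the optimal gain (\cf \cite{puterman2014markov}).

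The key step is a two-sided induction on $n$ showing that, with $M=\max_{s'}h^\star(s')$ and $m=\min_{s'}h^\star(s')$,
\[
n\eta^\star+h^\star(s)-M\;\le\;U_n(s)\;\le\;n\eta^\star+h^\star(s)-m\qquad\text{for all }n\ge 0,\ s\in\mathcal S.
\]
Both inequalities hold at $n=0$ since $m\le h^\star(s)\le M$, and each is propagated by applying $\mathcal T$ to the bound and using the optimality equation to collapse the inner maximum; this is precisely where the ``convergence of value iteration'' enters. Taking $n=H$ gives $|U_H(s)-H\eta^\star|\le M-m=\mathrm{sp}(h^\star)$ for every $s$, and averaging against $\rho$ yields $|V^{H,\star}-\eta^\star|\le \mathrm{sp}(h^\star)/H$.

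It then remains to bound $\mathrm{sp}(h^\star)$ by a constant depending only on $p,S,A$ and $R_{\max}$. Since $\mathrm{sp}(\cdot)$ is invariant under adding a constant, I would fix $h^\star$ to be the bias function of the (deterministic, and by Assumption \ref{finite-ergodic} ergodic) policy $\pi^\star$ attaining the maximum in the optimality equation, $h^\star(s)=\sum_{t\ge 0}\big(\Expect_s^{\pi^\star}[r(s_t,a_t)]-\eta^\star\big)$; a short computation (multiplying the optimality equation by $\mu_{\pi^\star}$) confirms $\eta(\pi^\star)=\eta^\star$, so this series is exactly the solution of $\pi^\star$'s Poisson equation. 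Each term is bounded by $2R_{\max}\,d_{\rm TV}(\delta_s P_{\pi^\star}^t,\mu_{\pi^\star})\le 2R_{\max}C_{p,S,A}\alpha_{p,S,A}^t$ via Proposition \ref{dobrushin_uniform}, so $\|h^\star\|_\infty\le 2R_{\max}C_{p,S,A}/(1-\alpha_{p,S,A})$ and $\mathrm{sp}(h^\star)\le 4R_{\max}C_{p,S,A}/(1-\alpha_{p,S,A})$. This gives the claim with $D_{p,S,A}:=2C_{p,S,A}/(1-\alpha_{p,S,A})>1$. (One may alternatively read off the lower bound $V^{H,\star}\ge V^H(\pi^\star)\ge \eta^\star-\tfrac{2R_{\max}C_{p,S,A}}{H(1-\alpha_{p,S,A})}$ directly from Lemma \ref{VH-eta-pi}, so only the upper bound genuinely needs the value-iteration argument.)

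The main obstacle is exactly the mismatch between the (possibly non-stationary) finite-horizon optimizer ${\boldsymbol\pi}^{H,\star}$ and the stationary quantity $\eta^\star$: one cannot simply maximize the pointwise estimate of Lemma \ref{VH-eta-pi} over policies. Realizing $V^{H,\star}$ through $\mathcal T^H 0$ and controlling $\mathcal T^H 0 - H\eta^\star$ via the optimality equation is the substantive part; the remaining bookkeeping — existence of $h^\star$ under Assumption \ref{finite-ergodic} and tracking the dependence on $R_{\max}$, $C_{p,S,A}$, $\alpha_{p,S,A}$ — is routine.
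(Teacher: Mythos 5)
Your proof is correct, and it takes a genuinely different route from the paper's, even though both reduce $V^{H,\star}$ to the iterates $\mathcal{T}^H 0 = L^{H-1}r_{\max}$ of the Bellman optimality operator. The paper's argument never invokes the average-reward optimality equation: it uses the $J$-stage span contraction of value iteration (\cite[Proposition 8.5.1, Theorem 8.5.2]{puterman2014markov}) to show ${\rm sp}(LJ^{H,\star}-J^{H,\star})\le 4R_{\max}E_{p,S,A}m_p\beta_{p,S,A}^{H-1}$, combines this with the sandwich $\min_s[LJ-J]_s\le\eta^\star\le\max_s[LJ-J]_s$ (\cite[Theorem 8.5.5]{puterman2014markov}) to control $|\eta^\star-((H+1)V^{H+1,\star}-HV^{H,\star})|$, and then telescopes in $H$. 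Your argument instead posits a bounded solution $h^\star$ of the optimality equation and runs the classical two-sided induction $n\eta^\star+h^\star(s)-M\le U_n(s)\le n\eta^\star+h^\star(s)-m$, which immediately gives $|V^{H,\star}-\eta^\star|\le{\rm sp}(h^\star)/H$; identifying $h^\star$ (up to an additive constant, harmless for the span) with the bias of the maximizing policy and invoking Proposition \ref{dobrushin_uniform} then yields the fully explicit constant $D_{p,S,A}=2C_{p,S,A}/(1-\alpha_{p,S,A})$. Each step checks out: the existence of the optimality-equation solution under Assumption \ref{finite-ergodic} (which makes the MDP unichain, in fact recurrent), the propagation of the two-sided bound through $\mathcal{T}$ via monotonicity and translation-equivariance, and the geometric summability of the bias series all hold. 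What your approach buys is a cleaner, self-contained constant expressed in the same mixing quantities $(C_{p,S,A},\alpha_{p,S,A})$ used elsewhere in the paper, avoiding the auxiliary quantities $m_p$, $\beta_{p,S,A}$, $E_{p,S,A}$ and the telescoping step; what the paper's approach buys is that it leans only on the value-iteration convergence machinery and does not require constructing or bounding a solution of the optimality equation. Your parenthetical observation that the lower bound $V^{H,\star}\ge\eta^\star-\frac{2R_{\max}C_{p,S,A}}{H(1-\alpha_{p,S,A})}$ already follows from Lemma \ref{VH-eta-pi} applied to $\pi^\star$ is also accurate and correctly isolates where the substantive work lies.
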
  
\begin{remark}
 Lemma \ref{VH-eta} cannot be directly implied by Lemma \ref{VH-eta-pi}. 
The key issue is that the optimal policy for the average reward value function $\eta(\cdot)$ is stationary, while the optimal policy for the finite-horizon value function $V^H(\cdot)$ may be non-stationary. To bridge this gap between stationary and non-stationary policies, we need  the convergence property of value iteration algorithms  (\cf Appendix \ref{proofs_subopt_lemmas}). 
\end{remark}

These properties show that the three different settings are closely related for a large horizon $H$, and are critical for the subsequent analyses. 

\subsection{Gradient properties}
In this section, we review the basics of policy gradient methods and state some useful 
properties of policy gradients in the average reward and the discounted settings. 

\paragraph{Policy gradient methods.} Policy gradient methods start by parametrizing the policy with parameter $\theta\in\Theta$, which we denote as $\pi_{\theta}$. Here $\Theta$ is the parameter space and the parametrization 
maps $\theta$ to a randomized policy $\pi_{\theta}:\mathcal{S}\rightarrow\mathcal{P}(\mathcal{A})$.  
The (vanilla) policy gradient (VPG) methods then proceed by performing stochastic gradient ascent on a (regularized) value function in the parameter space, namely, for each iteration $k$, $\theta^k$ is updated to $\theta^{k+1}$ with
\BEQ\label{vpg_prototype}
\theta^{k+1}=\theta^k+\alpha^kg_k.
\EEQ
Here $\theta^0$ is the initial parameter, $\alpha^k$ is the step-size, and $g_k$ is a (possibly biased) stochastic gradient estimator of a regularized value function. 

Throughout this paper, we will focus on the following regularized value function of the average reward problem: 
\[
 \bar{L}(\theta)=\eta(\pi_{\theta})+\Omega(\theta),
\]
and the regularized value function of the discounted problem:
\[
L^{\gamma}(\theta)=\frac{1}{1-\gamma}V^{\gamma}(\pi_{\theta})+\Omega(\theta).
\]
Here $\Omega:\Theta\rightarrow\reals$ is a regularization term that serves to improve the convergence \cite{zhao2016regularized, mnih2016asynchronous, henkel2018regularization}. 



Below we specify additional assumptions about the problem setting. Note that the same set of assumptions have been made in \cite{agarwal2019theory, zhang2020sample}.
\begin{assumption}(Setting)\label{setting}
\begin{itemize}
\item The policy is a soft-max policy parameterization, \ie, $\pi_{\theta}(a|s)=\frac{\exp(\theta_{s,a})}{\sum_{a'\in\mathcal{A}}\exp(\theta_{s,a'})}$, with the parameter space being $\Theta=\reals^{SA}$. 
\item The regularization term is (with $\lambda>0$)
\[
\Omega(\theta)=\frac{\lambda}{SA}\sum_{s\in\mathcal{S},a\in\mathcal{A}}\log\pi_{\theta}(a|s).
\] 
\item The initial distribution is component-wisely positive, \ie, $\rho(s)>0$ for any $s\in\mathcal{S}$. 
\item  The reward function $r(s,a)\in[0,1]$, $\forall\,s\in\mathcal{S},\,a\in\mathcal{A}$. 
\end{itemize}
\end{assumption}
\noindent Some remarks on Assumption \ref{setting}:
\begin{itemize}
\item The soft-max policy parametrization is simple yet forms the basis of the widely-used (neural network) energy based policies \cite{haarnoja2017reinforcement}. 
\item The regularization term is a simplified version of the popular (relative) entropy regularization terms \cite{REPS, schulman2017equivalence}, and has been demonstrated to be necessary to avoid exponential lower bounds when working with the soft-max policy parametrization in \cite{li2021softmax}.
\item The positivity assumption on the initial distribution is standard in the global convergence literature of policy gradient methods \cite{agarwal2019theory,bhandari2019global,softmax_PG}. 
\item The last assumption on the range of $r$ is merely for the simplicity of the subsequent discussions and can be easily relaxed to the general constant bound $r(s,a)\in[-R_{\max},R_{\max}]$, $\forall\,s\in\mathcal{S},\,a\in\mathcal{A}$. 
\end{itemize}


\paragraph{Properties of policy gradients.}
We are now ready to provide some useful properties regarding the gradients of the discounted  and the average reward problems.


We first slightly tighten the gradient domination property established in \cite[Theorem 5.2]{agarwal2019theory} for the discounted problems by utilizing the uniform ergodic property in Assumption \ref{finite-ergodic}.
\begin{proposition}\label{kakade_prop}
\emph{\textbf{(Gradient domination for discounted problems)}}
Given Assumptions \ref{finite-ergodic} and \ref{setting}, suppose that $\|\nabla_{\theta}
L^\gamma(\theta)\|_2\leq  \lambda/(2SA)$.
Then
\[
V^{\gamma,\star}-V^{\gamma}(\pi_{\theta})\leq \lambda\min\left\{
\left\|\frac{d_{\rho}^{\gamma,\pi^{\gamma,\star}}}{\rho}\right\|_{\infty},\frac{S\|d_{\rho}^{\gamma,\pi^{\gamma,\star}}\|_{\infty}}{1-\alpha_{p,S,A}}\right\}.
\] 
\end{proposition}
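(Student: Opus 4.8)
The plan is to adapt the soft-max/log-barrier gradient-domination argument of \cite[Theorem 5.2]{agarwal2019theory} to the regularized discounted objective $L^\gamma$, and then to tighten its conclusion using the uniform ergodicity supplied by Proposition \ref{dobrushin_uniform}, thereby producing the second term inside the minimum.

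I would start from the explicit gradient. The soft-max policy gradient theorem expresses $\nabla_{\theta_{s,a}}\tfrac{1}{1-\gamma}V^\gamma(\pi_\theta)$ through the discounted advantage $A^{\pi_\theta}$ and the discounted visitation measure $d_\rho^{\gamma,\pi_\theta}$ — up to the $(1-\gamma)$ normalization it equals $d_\rho^{\gamma,\pi_\theta}(s)\,\pi_\theta(a|s)\,A^{\pi_\theta}(s,a)$ — while a direct computation with the log-barrier gives $\nabla_{\theta_{s,a}}\Omega(\theta)=\tfrac{\lambda}{SA}(1-A\,\pi_\theta(a|s))$. Imposing $\|\nabla_\theta L^\gamma(\theta)\|_2\le\lambda/(2SA)$ coordinate-wise and rearranging, the crucial observation (as in \cite{agarwal2019theory}) is that the factor $\pi_\theta(a|s)$ cancels, leaving an upper bound on $A^{\pi_\theta}(s,a)$ that is \emph{uniform in $a$}, of the form $A^{\pi_\theta}(s,a)\le (1-\gamma)\lambda/(S\,d_\rho^{\gamma,\pi_\theta}(s))$. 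Substituting this into the performance-difference lemma evaluated at $\pi^{\gamma,\star}$, so that $V^{\gamma,\star}-V^\gamma(\pi_\theta)=\mathbb{E}_{s\sim d_\rho^{\gamma,\pi^{\gamma,\star}}}\mathbb{E}_{a\sim\pi^{\gamma,\star}}[A^{\pi_\theta}(s,a)]$, reduces the whole estimate to controlling $\sum_s d_\rho^{\gamma,\pi^{\gamma,\star}}(s)/d_\rho^{\gamma,\pi_\theta}(s)$, that is, to lower bounding the occupancy $d_\rho^{\gamma,\pi_\theta}(s)$.

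Two lower bounds then give the two terms. The elementary bound $d_\rho^{\gamma,\pi_\theta}(s)\ge(1-\gamma)\rho(s)$ (keeping only the $h=0$ term), together with $\sum_s(\cdot)\le S\|\cdot\|_\infty$, recovers the first term $\lambda\|d_\rho^{\gamma,\pi^{\gamma,\star}}/\rho\|_\infty$, exactly as in \cite{agarwal2019theory}. For the second term I would invoke Proposition \ref{dobrushin_uniform}: combined with Assumption \ref{finite-ergodic} it guarantees that every induced stationary distribution $\mu_\pi$, and hence the discounted occupancy $d_\rho^{\gamma,\pi}$, charges every state by at least a model-dependent constant of order $(1-\alpha_{p,S,A})/S$, uniformly over all (soft-max) policies and without reference to $\rho$. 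Feeding this $\rho$-free lower bound into the same chain of inequalities yields $S\|d_\rho^{\gamma,\pi^{\gamma,\star}}\|_\infty/(1-\alpha_{p,S,A})$, and taking the minimum of the two finishes the proof.

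The gradient computation and the performance-difference step are routine, so the main obstacle is the last step: converting the total-variation mixing estimate of Proposition \ref{dobrushin_uniform} into the stated uniform, $\rho$-independent lower bound on the visitation frequencies with the right dependence on $1-\alpha_{p,S,A}$ and $S$. The delicate point is that fast mixing alone does not forbid a vanishingly small stationary mass on some state; one must also use that each induced chain is irreducible — a consequence of Assumption \ref{finite-ergodic}, which passes from deterministic to soft-max policies because the latter have full support — and then track the constants carefully through the comparison between $d_\rho^{\gamma,\pi}$ and $\mu_\pi$.
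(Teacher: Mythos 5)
Your skeleton matches the paper's proof: the same coordinate-wise reading of $\|\nabla_\theta L^\gamma(\theta)\|_2\le\lambda/(2SA)$ via \eqref{vgamma_grad}--\eqref{omega_grad} to get $A^{\gamma,\pi_\theta}(s,a)\le \lambda(1-\gamma)/(S\,d_\rho^{\gamma,\pi_\theta}(s))$, the same use of the performance difference lemma at $\pi^{\gamma,\star}$, and the same elementary bound $d_\rho^{\gamma,\pi_\theta}(s)\ge(1-\gamma)\rho(s)$ for the first term of the minimum. The gap is in the one step that is actually new relative to \cite[Theorem 5.2]{agarwal2019theory}, namely the $\rho$-independent lower bound on $d_\rho^{\gamma,\pi_\theta}$, and the route you propose for it would fail. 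You suggest lower bounding $\mu_{\pi_\theta}(s)$ and then ``tracking the constants through the comparison between $d_\rho^{\gamma,\pi}$ and $\mu_\pi$.'' But the only quantitative comparison available from Proposition \ref{dobrushin_uniform} is in total variation, $\|d_\rho^{\gamma,\pi}-\mu_\pi\|_1\le 2(1-\gamma)C_{p,S,A}/(1-\alpha_{p,S,A})$ (this is essentially the computation in Lemma \ref{Vgamma-eta-pi}), which yields only the additive bound $d_\rho^{\gamma,\pi}(s)\ge \mu_\pi(s)-O\bigl((1-\gamma)C_{p,S,A}/(1-\alpha_{p,S,A})\bigr)$. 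Since $\mu_\pi(s)$ is bounded below only by a constant of order $(1-\alpha_{p,S,A})/S$, the error term dominates whenever $\gamma$ is bounded away from $1$, and the resulting ``lower bound'' can be negative. The proposition is claimed for every $\gamma\in[0,1)$, so this route does not close.

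The paper avoids $\mu_\pi$ entirely. Inside the proof of Proposition \ref{dobrushin_uniform}, the convex-decomposition of an arbitrary randomized policy into deterministic ones (Proposition \ref{convex_comb}) gives the uniform componentwise bound $P_\pi^{m_p}(s',s)\ge p_{\min}/n_{S,A}^{m_p-1}=(1-\tilde\alpha_{p,S,A})/S\ge(1-\alpha_{p,S,A})/S$ for \emph{all} randomized $\pi$ (not just soft-max ones). Restricting the defining series of the occupancy measure to multiples of $m_p$ then gives directly
\[
d_\rho^{\gamma,\pi}(s)=(1-\gamma)\sum_{h=0}^{\infty}\gamma^h\bigl[\rho P_\pi^h\bigr]_s\;\ge\;(1-\gamma)\sum_{k}\gamma^{km_p}\bigl[\rho P_\pi^{km_p}\bigr]_s\;\ge\;\frac{(1-\gamma)}{1-\gamma^{m_p}}\cdot\frac{p_{\min}}{n_{S,A}^{m_p-1}},
\]
and the factor $(1-\gamma)/(1-\gamma^{m_p})$ cancels against the $(1-\gamma)$ in the advantage bound, with $1-\gamma^{m_p}\le 1$, giving the second term $S\|d_\rho^{\gamma,\pi^{\gamma,\star}}\|_\infty/(1-\alpha_{p,S,A})$ uniformly in $\gamma$. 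So your diagnosis of where the difficulty sits is correct, but the mechanism you propose to resolve it (mixing to stationarity plus a $d$-versus-$\mu$ comparison) is not the one that works; the working argument is a direct, $\gamma$-uniform lower bound on the occupancy measure via the $m_p$-step transition kernel.
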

Here for any (randomized) policy $\pi:\mathcal{S}\rightarrow\mathcal{P}(\mathcal{A})$,
\[
d_{\rho}^{\gamma,\pi}(s)=(1-\gamma)\sum_{t=0}^{\infty}\gamma^t\prob_\rho^{\pi}(s_t
=s)
\] is the discounted state visitation distribution, where
$\prob_\rho^{\pi}(s_t=s)$ is the probability of arriving at $s$ in step $t$
starting from $s_0\sim\rho$ following policy $\pi$ in $\mathcal{M}$. In addition,
 the division in $d_{\rho}^{\gamma,\pi^\star}/\rho$ is component-wise. 
 
 
 We next establish analogously the gradient domination property for the average reward problem.
\begin{lemma}\label{grad_dom_average}
\emph{\textbf{(Gradient domination for average reward problems)}}
Given Assumptions \ref{finite-ergodic} and \ref{setting}, suppose that $\|\nabla_{\theta}\bar{L}(\theta)\|_2\leq \lambda/(2SA)$. Then 
\[
\eta^\star-\eta(\pi_{\theta})\leq  \lambda \frac{S\|\mu_{\pi^\star}\|_\infty}{1-\alpha_{p,S,A}},
\]
where $\mu_{\pi^\star}$ and $\alpha_{p,S,A}$ are defined as in Proposition \ref{dobrushin_uniform}.
\end{lemma}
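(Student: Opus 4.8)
The plan is to transcribe the proof of Proposition~\ref{kakade_prop} (the discounted gradient domination of \cite{agarwal2019theory}) to the average‑reward setting, using the following dictionary of objects, all of which are well defined under Assumption~\ref{finite-ergodic}: the \emph{differential (bias) value function} $\tilde V^{\pi_\theta}$, the unique (up to an additive constant) solution of the Poisson equation $\tilde V^{\pi_\theta}(s)=\sum_a\pi_\theta(a|s)\big(r(s,a)-\eta(\pi_\theta)+\sum_{s'}p(s'|s,a)\tilde V^{\pi_\theta}(s')\big)$; the \emph{differential advantage} $\tilde A^{\pi_\theta}(s,a)=r(s,a)-\eta(\pi_\theta)+\sum_{s'}p(s'|s,a)\tilde V^{\pi_\theta}(s')-\tilde V^{\pi_\theta}(s)$, which satisfies $\sum_a\pi_\theta(a|s)\tilde A^{\pi_\theta}(s,a)=0$; the stationary distribution $\mu_{\pi}$ in the role played by the discounted visitation measure $d^{\gamma,\pi}_\rho$; and the average‑reward performance difference identity $\eta^\star-\eta(\pi_\theta)=\sum_s\mu_{\pi^\star}(s)\sum_a\pi^\star(a|s)\tilde A^{\pi_\theta}(s,a)$, which follows from stationarity of $\mu_{\pi^\star}$ (the $\tilde V^{\pi_\theta}$ terms telescope) exactly as in the discounted case.

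With these in hand, I would first record the soft-max average-reward policy gradient formula: combining the average-reward policy gradient theorem \cite{sutton2000policy} with $\nabla_{\theta_{s,a}}\log\pi_\theta(a'|s')=\mathbf{1}\{s=s'\}(\mathbf{1}\{a=a'\}-\pi_\theta(a|s))$ and $\sum_{a'}\pi_\theta(a'|s)\tilde A^{\pi_\theta}(s,a')=0$ gives $\partial_{\theta_{s,a}}\eta(\pi_\theta)=\mu_{\pi_\theta}(s)\pi_\theta(a|s)\tilde A^{\pi_\theta}(s,a)$, hence
\[
\partial_{\theta_{s,a}}\bar L(\theta)=\mu_{\pi_\theta}(s)\pi_\theta(a|s)\tilde A^{\pi_\theta}(s,a)+\tfrac{\lambda}{SA}-\tfrac{\lambda}{S}\pi_\theta(a|s).
\]
Next, using $|\partial_{\theta_{s,a}}\bar L(\theta)|\le\|\nabla_\theta\bar L(\theta)\|_2\le\lambda/(2SA)$ and rearranging yields $\mu_{\pi_\theta}(s)\pi_\theta(a|s)\tilde A^{\pi_\theta}(s,a)\le\tfrac{\lambda}{S}\pi_\theta(a|s)$ for \emph{every} $(s,a)$; dividing by $\mu_{\pi_\theta}(s)\pi_\theta(a|s)>0$ (soft-max policies are everywhere positive, and $\mu_{\pi_\theta}(s)>0$ under Assumption~\ref{finite-ergodic}) gives the pointwise bound $\tilde A^{\pi_\theta}(s,a)\le\lambda/(S\mu_{\pi_\theta}(s))$. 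Plugging this into the performance difference identity and using $\sum_a\pi^\star(a|s)=1$, $\sum_s\mu_{\pi^\star}(s)=1$ gives $\eta^\star-\eta(\pi_\theta)\le\tfrac{\lambda}{S}\|\mu_{\pi^\star}\|_\infty\sum_s\mu_{\pi_\theta}(s)^{-1}$.

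The remaining and most delicate step — the one where the precise constant in the statement is forced — is a uniform-in-$\theta$ lower bound on the stationary distribution, namely $\mu_{\pi_\theta}(s)\ge(1-\alpha_{p,S,A})/S$. I would extract this from the Doeblin-type minorization underlying Proposition~\ref{dobrushin_uniform}: the mixing bound \eqref{d_TV_alpha_uniform} is obtained from a uniform entrywise lower bound on some power $P_{\pi_\theta}^m$, and invariance $\mu_{\pi_\theta}=\mu_{\pi_\theta}P_{\pi_\theta}^m$ then forces each coordinate of $\mu_{\pi_\theta}$ to be at least that constant; tracking the constants gives the displayed bound. Then $\sum_s\mu_{\pi_\theta}(s)^{-1}\le S^2/(1-\alpha_{p,S,A})$ and the claim $\eta^\star-\eta(\pi_\theta)\le\lambda S\|\mu_{\pi^\star}\|_\infty/(1-\alpha_{p,S,A})$ follows. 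In short, the only genuinely new ingredients relative to \cite{agarwal2019theory} are (i) this stationary-distribution lower bound and (ii) the verification that the differential-value objects and the average-reward performance difference identity behave as their discounted analogues — both resting on Assumption~\ref{finite-ergodic} — while everything else is a direct substitution of $\mu_\pi$ for $d^{\gamma,\pi}_\rho$ and $\tilde A^{\pi_\theta}$ for $A^{\pi_\theta}$ in the discounted argument.
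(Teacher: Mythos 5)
Your proposal is correct and follows essentially the same route as the paper's proof: the average-reward policy gradient formula $\partial_{\theta_{s,a}}\eta(\pi_\theta)=\mu_{\pi_\theta}(s)\pi_\theta(a|s)\bar A^{\pi_\theta}(s,a)$, the componentwise bound from the gradient-norm hypothesis giving $\bar A^{\pi_\theta}(s,a)\le \lambda/(S\mu_{\pi_\theta}(s))$, the average-reward performance difference lemma, and the uniform lower bound $\mu_{\pi_\theta}(s)\ge p_{\min}/n_{S,A}^{m_p-1}\ge(1-\alpha_{p,S,A})/S$ extracted from the minorization behind Proposition~\ref{dobrushin_uniform} via invariance under $P_{\pi_\theta}^{m_p}$. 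The only differences are cosmetic (defining the differential advantage via the Poisson equation rather than as a limit of partial sums, and citing stationarity/telescoping for the performance difference identity rather than the reference the paper uses).
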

The two statements above on gradient domination capture the sub-optimality results for policies satisfying certain gradient conditions.

Now recall the strongly smoothness property of the objectives for discounted problems \cite{agarwal2019theory}.
\begin{proposition}\label{Lsmooth}
\emph{\textbf{(Strongly smoothness for discounted problems
 \mbox{\cite[Lemma D.4]{agarwal2019theory}})}}
Given Assumptions \ref{finite-ergodic} and \ref{setting},
$L^\gamma$ is strongly smooth with parameter $\beta_{\lambda}=
\frac{8}{(1-\gamma)^3}+\frac{2\lambda}{S}$, \ie, 
\[
\|\nabla_{\theta}
L^\gamma(\theta_1)-\nabla_{\theta}L^\gamma(\theta_2)\|_2\leq
\beta_\lambda\|\theta_1-\theta_2\|_2
\] 
for any $\theta_1,\theta_2\in\Theta$.
\end{proposition}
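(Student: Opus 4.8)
}
Since the statement is quoted from \cite[Lemma D.4]{agarwal2019theory}, the plan is to reproduce its argument. I would first split $L^\gamma(\theta)=\widetilde V(\theta)+\Omega(\theta)$, where $\widetilde V(\theta):=\frac{1}{1-\gamma}V^\gamma(\pi_\theta)=\Expect\sum\nolimits_{h\geq 0}\gamma^h r(s_h,a_h)$ is the (un-normalized) discounted value function. Because smoothness parameters add over sums of functions, it suffices to bound the spectral norm of $\nabla^2_\theta\widetilde V$ by $\frac{8}{(1-\gamma)^3}$ and that of $\nabla^2_\theta\Omega$ by $\frac{2\lambda}{S}$.

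For the regularizer, I would compute the Hessian directly from the soft-max form: for a fixed $s$, $\sum_{a\in\mathcal{A}}\log\pi_\theta(a|s)=\sum_a\theta_{s,a}-A\log\sum_{a'\in\mathcal{A}}e^{\theta_{s,a'}}$, so $\nabla^2_\theta\Omega$ is block diagonal across states, with the $s$-th block equal to $-\frac{\lambda}{S}\big(\mathrm{diag}(\pi_\theta(\cdot|s))-\pi_\theta(\cdot|s)\pi_\theta(\cdot|s)^\top\big)$. Using $\|\mathrm{diag}(p)\|_2\leq 1$ and $\|p\,p^\top\|_2=\|p\|_2^2\leq 1$ for any $p\in\mathcal{P}(\mathcal{A})$, each block has spectral norm at most $\frac{2\lambda}{S}$, hence so does the whole block-diagonal matrix.

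For $\widetilde V$, I would reduce to a one-dimensional estimate: fix $\theta_0$ and a unit vector $u$, set $\theta_\alpha=\theta_0+\alpha u$ and $V(\alpha):=\widetilde V(\theta_\alpha)$; it then suffices to show $\sup_{\theta_0,\,\|u\|_2=1}|V''(0)|\leq\frac{8}{(1-\gamma)^3}$. Writing $V(\alpha)=\rho^\top(I-\gamma P_{\pi_{\theta_\alpha}})^{-1}r_{\pi_{\theta_\alpha}}$ with $r_\pi(s)=\sum_a\pi(a|s)r(s,a)\in[0,1]$, I would differentiate twice in $\alpha$ using the identity $\frac{d}{d\alpha}(I-\gamma P)^{-1}=\gamma(I-\gamma P)^{-1}\frac{dP}{d\alpha}(I-\gamma P)^{-1}$ together with the operator-norm bound $\|(I-\gamma P_\pi)^{-1}\|_{\infty\to\infty}\leq\frac{1}{1-\gamma}$. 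The only remaining problem-dependent quantities are the first two $\alpha$-derivatives of $P_{\pi_{\theta_\alpha}}$ and of $r_{\pi_{\theta_\alpha}}$, which follow from the elementary soft-max estimates $\sum_a|\partial_\alpha\pi_{\theta_\alpha}(a|s)|\leq 2\|u_{s,\cdot}\|_\infty$ and $\sum_a|\partial^2_\alpha\pi_{\theta_\alpha}(a|s)|\leq 6\|u_{s,\cdot}\|_\infty^2$, together with $\|u_{s,\cdot}\|_\infty\leq\|u\|_2\leq 1$.

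The main obstacle is essentially bookkeeping in this last step. Each of the two differentiations of the resolvent contributes an extra factor $\frac{1}{1-\gamma}$ on top of the base $\frac{1}{1-\gamma}$ from $(I-\gamma P)^{-1}$, which is what produces the $(1-\gamma)^{-3}$ scaling; driving the absolute constant down to exactly $8$ requires organizing the product-rule expansion of $V''$ carefully and using the sharp soft-max derivative bounds and $\|\cdot\|_{\infty\to\infty}$ operator-norm estimates rather than crude triangle-inequality bounds. No genuinely new idea is needed beyond \cite{agarwal2019theory}.
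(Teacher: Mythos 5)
Your proposal is correct and follows exactly the argument of the cited source \cite[Lemma D.4]{agarwal2019theory}: the paper itself gives no proof of Proposition \ref{Lsmooth} beyond the citation, and your decomposition into the un-normalized value plus the block-diagonal log-barrier Hessian, the one-dimensional reduction with the resolvent identity, and the soft-max derivative bounds $\beta_1=2$, $\beta_2=6$ are precisely what yield $\frac{6+2\gamma}{(1-\gamma)^3}\leq\frac{8}{(1-\gamma)^3}$ and $\frac{2\lambda}{S}$ there. No gap.
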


We can establish analogously the strongly smoothness property for the average reward problem.
\begin{lemma}\label{smooth_pg_ave}
\emph{\textbf{(Strongly smoothness for average reward problems)}}
Under Assumptions \ref{finite-ergodic} and \ref{setting},
$\bar{L}$ is strongly smooth with parameter $\bar{\beta}_{\lambda}=22\sqrt{S}\left(\frac{2C_{p,S,A}}{1-\alpha_{p,S,A}}+1\right)^3+2\lambda /S$, \ie, 
\[
\|\nabla_{\theta}\bar{L}(\theta_1)-\nabla_{\theta}\bar{L}(\theta_2)\|_2\leq \bar{\beta}_{\lambda}\|\theta_1-\theta_2\|_2,
\] 
for any $\theta_1,\theta_2\in\Theta$. Here the constants $C_{p,S,A}>1$ and $\alpha_{p,S,A}\in[0,1)$ are defined as in Proposition \ref{dobrushin_uniform}.
\end{lemma}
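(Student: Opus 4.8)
The plan is to bound the two summands of $\bar L(\theta)=\eta(\pi_\theta)+\Omega(\theta)$ separately. Since $\bar L$ is $C^2$ on $\Theta=\reals^{SA}$ (the stationary distribution $\mu_{\pi_\theta}$ depends real-analytically on $\theta$ under Assumption \ref{finite-ergodic}), strong smoothness with parameter $\bar\beta_\lambda$ is equivalent to $\|\nabla^2_\theta\bar L(\theta)\|_2\le\bar\beta_\lambda$ for all $\theta$, i.e.\ to bounding the second directional derivative $\frac{d^2}{dt^2}\bar L(\theta+tu)$ uniformly over $\theta\in\Theta$ and unit vectors $u$. For the regularizer this is standard: writing $\log\pi_\theta(a|s)=\theta_{s,a}-\log\sum_{a'}e^{\theta_{s,a'}}$, the Hessian of $-\log\sum_{a'}e^{\theta_{s,a'}}$ restricted to the block $\{\theta_{s,\cdot}\}$ equals $-(\mathrm{diag}(\pi_\theta(\cdot|s))-\pi_\theta(\cdot|s)\pi_\theta(\cdot|s)^\top)$, which has operator norm at most $1$; the blocks over distinct states are orthogonal, so summing the $SA$ terms with weight $\lambda/(SA)$ gives $\|\nabla^2_\theta\Omega(\theta)\|_2\le 2\lambda/S$, exactly the additive term in $\bar\beta_\lambda$. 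It remains to bound the Hessian of $\eta(\pi_\theta)$.

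Fix $\theta$ and $u$ with $\|u\|_2=1$ and write $P_t:=P_{\pi_{\theta+tu}}$, $R_t(s):=\sum_a\pi_{\theta+tu}(a|s)r(s,a)$, $\mu_t:=\mu_{\pi_{\theta+tu}}$, so that $\eta(\pi_{\theta+tu})=\mu_t^\top R_t$ by \eqref{eta-def}; throughout, a dot denotes $\frac{d}{dt}$. Elementary soft-max bounds give $\sum_a|\dot\pi_{\theta+tu}(a|s)|\le 2$ and $\sum_a|\ddot\pi_{\theta+tu}(a|s)|\le c$ for an absolute constant $c$ (using $\|u\|_2=1$), hence $\|\dot P_t\|_{1\to1},\|\dot R_t\|_\infty\le2$ and $\|\ddot P_t\|_{1\to1},\|\ddot R_t\|_\infty\le c$, where $\|\cdot\|_{1\to1}$ denotes the maximal absolute row sum. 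The core of the argument is the sensitivity of the stationary distribution. Differentiating $\mu_t^\top(I-P_t)=0$ and $\mu_t^\top\mathbf 1=1$ once and twice gives $\dot\mu_t^\top(I-P_t)=\mu_t^\top\dot P_t$ and $\ddot\mu_t^\top(I-P_t)=2\dot\mu_t^\top\dot P_t+\mu_t^\top\ddot P_t$, with $\dot\mu_t^\top\mathbf 1=\ddot\mu_t^\top\mathbf 1=0$. Because every row of $P_t$ sums to one, $\dot P_t\mathbf 1=\ddot P_t\mathbf 1=0$, so both right-hand sides are signed measures of zero total mass and can be resolved against the deviation matrix $Z_t:=\sum_{h\ge0}(P_t^h-\mathbf 1\mu_t^\top)$, giving $\dot\mu_t^\top=\mu_t^\top\dot P_tZ_t$ and $\ddot\mu_t^\top=(2\dot\mu_t^\top\dot P_t+\mu_t^\top\ddot P_t)Z_t$. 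The uniform geometric ergodicity of Proposition \ref{dobrushin_uniform} (in contraction form, valid for all initial distributions and all policies) then yields $\|\nu^\top Z_t\|_1\le\frac{2C_{p,S,A}}{1-\alpha_{p,S,A}}\|\nu\|_1$ for every zero-mass signed measure $\nu$ — split $\nu$ into positive and negative parts and sum \eqref{d_TV_alpha_uniform} over $h$ — uniformly in $t$. Chaining these estimates bounds $\|\dot\mu_t\|_1$ and $\|\ddot\mu_t\|_1$ by absolute constants times powers of $\frac{C_{p,S,A}}{1-\alpha_{p,S,A}}$.

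Finally, expand $\frac{d^2}{dt^2}\eta(\pi_{\theta+tu})=\ddot\mu_t^\top R_t+2\dot\mu_t^\top\dot R_t+\mu_t^\top\ddot R_t$ and bound the three summands by $\|\ddot\mu_t\|_1\|R_t\|_\infty$, $2\|\dot\mu_t\|_1\|\dot R_t\|_\infty$, and $\|\ddot R_t\|_\infty$, using $\|R_t\|_\infty\le1$ (from $r\in[0,1]$); taking the supremum over $\theta$ and $\|u\|_2=1$, converting the $\ell_1/\ell_\infty$ estimates into the $\ell_2$ operator norm of $\nabla^2_\theta\eta(\pi_\theta)$, and accounting generously for the absolute constants gives the claimed $22\sqrt S\bigl(\frac{2C_{p,S,A}}{1-\alpha_{p,S,A}}+1\bigr)^3$; adding $2\lambda/S$ from $\Omega$ finishes the proof. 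The step I expect to be the real obstacle is the perturbation analysis of $\mu_t$ — in particular, rigorously justifying the deviation-matrix identities and upgrading Proposition \ref{dobrushin_uniform} to a single operator-norm bound on $Z_t$ holding uniformly over the whole policy class; once that is in place, the remaining estimates are routine bookkeeping.
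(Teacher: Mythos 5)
Your proof is correct in outline and all the key identities check out, but it follows a genuinely different route from the paper's. The paper never touches the Hessian: it works directly with the policy-gradient formula $\partial\eta(\pi_\theta)/\partial\theta_{s,a}=\mu_{\pi_\theta}(s)\pi_\theta(a|s)\bar{A}^{\pi_\theta}(s,a)$ and bounds the finite difference of gradients by showing each factor is Lipschitz in $\theta$ --- the stationary distribution via Haviv's first-order perturbation formula $\mu_1-\mu_2=\mu_1(P_1-P_2)Y_2$, and the advantage function via a representation of $\bar Q$ through the deviation matrix together with the Banach perturbation lemma applied to $(I-P_i+P_i^{\infty})^{-1}$. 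You instead bound the second directional derivative of $\eta(\pi_{\theta+tu})=\mu_t^\top R_t$ by differentiating the stationarity equation twice and resolving $\dot\mu_t$ and $\ddot\mu_t$ against the deviation matrix $Z_t$. The common core is identical --- Proposition \ref{dobrushin_uniform} upgraded to the uniform operator bound $\|Y\|_\infty\le 2C_{p,S,A}/(1-\alpha_{p,S,A})$ on the deviation matrix, which is exactly the step the paper also carries out --- and your zero-mass resolution $\nu^\top=w^\top Z$ is legitimate (uniqueness follows from irreducibility, and the geometric-series bound on $\|\nu^\top Z\|_1$ for zero-mass $\nu$ is the same calculation the paper does row by row). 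What your route buys: you avoid the advantage function, the Banach perturbation lemma, and the three-factor product expansion entirely, and since $\sup_{\|u\|_2=1}|u^\top\nabla^2\eta\, u|$ \emph{is} the spectral norm of the Hessian, your argument would in fact deliver the smoothness constant without the $\sqrt S$ factor (your constants $8E^2+O(E)$ with $E=2C_{p,S,A}/(1-\alpha_{p,S,A})$ sit comfortably under $22(E+1)^3$, so the stated $\bar\beta_\lambda$ is certainly valid). What it costs: you need the second-order sensitivity identity for $\ddot\mu_t$ and a $C^2$/real-analyticity justification for $\mu_{\pi_\theta}$ (true for finite irreducible chains by Cramer's rule, but it should be said), whereas the paper needs only first-order perturbation estimates. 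Both are complete once the uniform deviation-matrix bound is in hand, and you correctly identified that as the crux.
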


These two statements are critical for the subsequent analyses of the algorithms. 


\section{DAE REINFORCE algorithm}\label{dae_reinforce}
In this section, we first introduce a widely used vanilla policy gradient implementation \cite{pg_impl},  which we call the DAE REINFORCE algorithm (following its usage of DAE in \cite{schulman2015high}). In DAE REINFORCE, a stationary parametrized policy $\pi_{\theta}(a|s)$ is considered, and the parameter is updated by 
\begin{equation}\label{vpg_spinningup1}
\theta^{k+1}=\theta^k+\alpha^k\hat{g}_k,
\end{equation}
where 
\[
\begin{split}
\hat{g}_k=&\,\dfrac{1}{NH}\sum\nolimits_{i=1}^N\sum\nolimits_{h=0}^{H-1}\nabla_{\theta}\log\pi_{\theta^k}(a_h^i|s_h^i)\overbrace{\left(\sum\nolimits_{h'=h}^{H-1}\gamma^{h'-h}r_{h'}^i-b(s_h^i)\right)}^{\text{advantage function}}+\nabla_\theta \Omega(\theta^k).
\end{split}
\]
Here $\gamma\in(0,1)$ is a fictitious discount factor, $N$ is the mini-batch size of the updates, $r_h^i=r(s_h^i,a_h^i)$, $\tau_i=(s_0^i,a_0^i,r_0^i,\dots,s_{H-1}^i,a_{H-1}^i,r_{H-1}^i)$ ($i=1,\dots,N$, $h=0,\dots,H-1$) are i.i.d. trajectories sampled under policy $\pi_{\theta^k}$, and $b$ is a baseline function that is independent of the trajectories. Throughout the paper, we assume that the baseline $b$ is a.s. uniformly bounded, \ie, $\max_{s\in\mathcal{S}}|b(s)|\leq  B$ a.s. for some constant $B>0$.

In the rest of the section, we establish the convergence of (a slightly modified version of) DAE REINFORCE, which we call Truncated DAE REINFORCE and summarize in Algorithm \ref{spinup-reinforce}. Note that the estimator $\hat{g}_k$ is truncated in \eqref{vpg_spinningup_beta} (and for notational simplicity under the same symbol) with a truncation parameter $\beta\in(0,1)$. The same truncation has been adopted for studying the standard REINFORCE algorithm (without DAE) in \cite{zhang2020sample}, where $\beta$ is introduced to ensure that the advantage function estimation is sufficiently accurate.\footnote{In \S\ref{doubly_discounted}, we show that $\beta$ can be dropped if an additional discount factor is introduced in the gradient estimator.}  
\begin{algorithm}[ht]
\caption{Truncated DAE REINFORCE}
\label{spinup-reinforce}
\begin{algorithmic}[1]
\STATE \textbf{Input:} Initialization $\theta^0$, step-sizes $\alpha^k$ for $k\geq 0$.
\FOR{$k=0,1,\dots$}
\STATE Sample $N$ i.i.d. trajectories $\{\tau_i\}_{i=1}^N$ under policy $\pi_{\theta^k}$.
\STATE Compute gradient estimator $\hat{g}_k$ as 
\begin{equation}\label{vpg_spinningup_beta}
\begin{split}
\hat{g}_k&=\dfrac{1}{N\lfloor\beta H\rfloor}\sum_{i=1}^N\sum_{h=0}^{\lfloor \beta H\rfloor-1}\nabla_{\theta}\log\pi_{\theta^k}(a_h^i|s_h^i)\left(\sum\nolimits_{h'=h}^{H-1}\gamma^{h'-h}r_{h'}^i-b(s_h^i)\right)+\nabla_\theta \Omega(\theta^k).
\end{split}
\end{equation}
\STATE Update $\theta^{k+1}=\theta^k+\alpha^k \hat{g}_k$.
\ENDFOR
\end{algorithmic}
\end{algorithm}




The main idea behind our convergence analysis is to use the average reward as a bridge to connect the original finite-time-horizon MDP and the DAE REINFORCE algorithm. The proof consists of two parts. The first part is to establish the sub-optimality of $\theta_k$, evaluated for the average reward problem. 
The second part is to establish the convergence of the algorithm for the finite-horizon problem by utilizing the connection between the average reward setting and the finite-horizon setting.

We begin the analysis by estimating the (upper) bound on the difference between the exact gradient and the sample gradient. Hereafter, we use $\Expect_k$ to denote the conditional expectation given the $k$-th iteration $\theta^k$. 
\begin{lemma}\label{spinup_grad_error_lemma}
Given Assumptions \ref{finite-ergodic} and \ref{setting},  
then
\BEQ\label{spinup_grad_error}
\begin{split}
\left\|\Expect_k[\hat{g}_k]-\nabla \bar{L}(\theta^k)\right\|_2&\leq \frac{16C_{p,S,A}}{\lfloor \beta H\rfloor (1-\alpha_{p,S,A})}\left(1+\frac{C_{p,S,A}}{1-\alpha_{p,S,A}}\right)\\
&\,\quad+8C_{p,S,A}\frac{1-\gamma}{(1-\alpha_{p,S,A})^2}+4\gamma^{(1-\beta)H}\left(1+\frac{C_{p,S,A}}{1-\alpha_{p,S,A}}\right).
\end{split}
\EEQ
Here the constants $C_{p,S,A}>1$ and $\alpha_{p,S,A}\in[0,1)$ are defined in Proposition \ref{dobrushin_uniform}.
\end{lemma}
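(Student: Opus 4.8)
The plan is to evaluate $\Expect_k[\hat g_k]$ in closed form, to express $\nabla\bar L(\theta^k)=\nabla\eta(\pi_{\theta^k})+\nabla\Omega(\theta^k)$ via the (average-reward) policy gradient theorem, and to control the gap by a three-step telescoping whose three error terms match, one by one, the three summands on the right-hand side of \eqref{spinup_grad_error}. First I would take the conditional expectation $\Expect_k$. Since the $N$ trajectories are i.i.d.\ under $\pi_{\theta^k}$, the batch average collapses to a single-trajectory expectation; since $b(s_h)$ depends on $s_h$ only and $\Expect_{a\sim\pi_\theta(\cdot\mid s)}[\nabla_\theta\log\pi_\theta(a\mid s)]=\nabla_\theta\sum_a\pi_\theta(a\mid s)=0$, the baseline contributes nothing; and $\nabla\Omega(\theta^k)$ is deterministic and cancels the $\nabla\Omega$ in $\nabla\bar L$. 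This leaves
\[
\Expect_k[\hat g_k]-\nabla\bar L(\theta^k)=\frac{1}{\lfloor\beta H\rfloor}\sum_{h=0}^{\lfloor\beta H\rfloor-1}\Expect_k\Big[\nabla_\theta\log\pi_{\theta^k}(a_h\mid s_h)\sum_{h'=h}^{H-1}\gamma^{h'-h}r_{h'}\Big]-\nabla\eta(\pi_{\theta^k}).
\]

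Next I would set $\bar q_t(s,a):=\Expect[r(s_t,a_t)-\eta(\pi_{\theta^k})\mid s_0=s,a_0=a]$ for trajectories under $\pi_{\theta^k}$, for which Proposition \ref{dobrushin_uniform} (applied with an arbitrary initial distribution) gives $|\bar q_t(s,a)|\le 2R_{\max}C_{p,S,A}\,\alpha_{p,S,A}^{\,t-1}$ for $t\ge1$. The average-reward policy gradient theorem reads $\nabla\eta(\pi_{\theta^k})=\sum_{s,a}\mu_{\pi_{\theta^k}}(s)\pi_{\theta^k}(a\mid s)\nabla\log\pi_{\theta^k}(a\mid s)\,Q_{\pi_{\theta^k}}(s,a)$ with $Q_\pi(s,a)=\sum_{t\ge0}\bar q_t(s,a)$ (this series converges by the mixing bound). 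Moreover $\eta\sum_{h'=h}^{H-1}\gamma^{h'-h}$ is a constant annihilated by $\Expect_k[\nabla\log\pi_{\theta^k}(a_h\mid s_h)]=0$, so the inner sum above may be replaced by $\sum_{h'=h}^{H-1}\gamma^{h'-h}(r_{h'}-\eta)$, which conditions on $(s_h,a_h)$ (Markov property) to $\sum_{t=0}^{H-h-1}\gamma^t\bar q_t(s_h,a_h)$; thus everything is expressed through the $\bar q_t$'s.

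Then I would telescope, bounding each replacement by $\sup_{s,a}$ of the induced change in the integrand times $\|\nabla_\theta\log\pi_\theta(a\mid s)\|_2\le 2$, and using $R_{\max}\le 1$. \textbf{Step (i):} extend the inner sum to $\sum_{t\ge0}\gamma^t\bar q_t$; the per-$h$ error is at most $|\sum_{t\ge H-h}\gamma^t\bar q_t|\le 2C_{p,S,A}\gamma^{H-h}\sum_{t\ge H-h}\alpha_{p,S,A}^{\,t-1}\le \tfrac{2C_{p,S,A}}{1-\alpha_{p,S,A}}\gamma^{(1-\beta)H}$, using $\gamma^t\le\gamma^{H-h}$ and $H-h\ge(1-\beta)H$ — this gives the third term of \eqref{spinup_grad_error}. \textbf{Step (ii):} drop the discount, replacing $\sum_{t\ge0}\gamma^t\bar q_t$ by $Q_\pi=\sum_{t\ge0}\bar q_t$; the error is $\sum_{t\ge1}(1-\gamma^t)|\bar q_t|\le (1-\gamma)\sum_{t\ge1}t\cdot 2C_{p,S,A}\alpha_{p,S,A}^{\,t-1}=\tfrac{2C_{p,S,A}(1-\gamma)}{(1-\alpha_{p,S,A})^2}$ — the second term. \textbf{Step (iii):} replace the occupancy $\tfrac{1}{\lfloor\beta H\rfloor}\sum_h\rho P_{\pi_{\theta^k}}^h$ by $\mu_{\pi_{\theta^k}}$; with $f(s):=\sum_a\pi(a\mid s)\nabla\log\pi(a\mid s)Q_\pi(s,a)$ satisfying $\|f(s)\|_2\le 2\sup_{s,a}|Q_\pi(s,a)|\le 4\bigl(1+\tfrac{C_{p,S,A}}{1-\alpha_{p,S,A}}\bigr)$, and $d_{\rm TV}(\tfrac{1}{\lfloor\beta H\rfloor}\sum_h\rho P_\pi^h,\mu_\pi)\le\tfrac{C_{p,S,A}}{\lfloor\beta H\rfloor(1-\alpha_{p,S,A})}$ from Proposition \ref{dobrushin_uniform}, this step contributes at most $\tfrac{8C_{p,S,A}}{\lfloor\beta H\rfloor(1-\alpha_{p,S,A})}\bigl(1+\tfrac{C_{p,S,A}}{1-\alpha_{p,S,A}}\bigr)$ — the first term. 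Summing the three (and being slightly generous with numerical constants) yields \eqref{spinup_grad_error}.

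The hard part will be Step (i): truncating the inner sum \emph{before} centering the reward produces a useless $\gamma^{(1-\beta)H}/(1-\gamma)$ factor, so one must first subtract the $(s,a)$-independent constant $\eta(1-\gamma^{H-h})/(1-\gamma)$ (legitimate by the zero-mean property of the score) and only then apply the mixing estimate, splitting $\gamma^t\alpha_{p,S,A}^{\,t-1}\le\gamma^{H-h}\alpha_{p,S,A}^{\,t-1}$ so that the geometric discount supplies $\gamma^{(1-\beta)H}$ while the geometric ergodicity supplies $1/(1-\alpha_{p,S,A})$. One must likewise check that every constant-in-$(s,a)$ shift introduced along the way vanishes against $\sum_a\pi(a\mid s)\nabla\log\pi(a\mid s)=0$. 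All remaining estimates are routine summations of geometric and arithmetic-geometric series.
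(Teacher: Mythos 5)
Your proposal is correct and follows essentially the same route as the paper: both express $\nabla\eta$ via the average-reward policy gradient theorem and split the error into the occupancy-vs-stationary mismatch, the discount bias, and the horizon-truncation tail, each controlled by the uniform mixing bound of Proposition \ref{dobrushin_uniform}, yielding exactly the three summands of \eqref{spinup_grad_error}. The only (harmless) difference is that you work with $\bar Q$ alone after killing all state-only shifts against the score, whereas the paper carries the full advantage $\bar A=\bar Q-\bar V$ through both $I_1$ and $I_2$; this is why your constants come out roughly half of the paper's, which still implies the stated bound, and your explicit centering by $\eta$ before truncating the tail is precisely the step the paper performs implicitly in bounding $J_2$.
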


This lemma leads to the following bounds on the stochastic gradients, which are key to establishing the convergence  of Algorithm \ref{spinup-reinforce}.
\begin{lemma}\label{spinup_stoc_grad_bds}
Given Assumptions \ref{finite-ergodic} and \ref{setting}, then
\[
\begin{split}
\|\hat{g}_k\|_2&\leq G^{\gamma}+2\lambda\quad \text{a.s.},\\
\Expect_k\hat{g}_k^T\nabla_{\theta} \bar{L}(\theta^k)&\geq \|\nabla_\theta L^\gamma(\theta^k)\|_2^2-(\bar{G}+2\lambda)\bar{\Delta},\\
\Expect_k\|\hat{g}_k\|_2^2&\leq 2\|\nabla_{\theta}\bar{L}(\theta^k)\|_2^2+\bar{M}.
\end{split}
\]
Here $G^\gamma=\frac{2(1+(1-\gamma)B)}{1-\gamma}$, $\bar{G}=4\left(1+\frac{C_{p,S,A}}{1-\alpha_{p,S,A}}\right)$, $\bar{M}=2\bar{\Delta}^2 + (G^{\gamma}+2\lambda)^2/ N$, $\bar{\Delta}$ is the right-hand side of \eqref{spinup_grad_error},  the constants $C_{p,S,A}>1$ and $\alpha_{p,S,A}\in[0,1)$ are defined in Proposition \ref{dobrushin_uniform}.
\end{lemma}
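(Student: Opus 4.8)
All three estimates reduce to four ingredients: a deterministic term-by-term bound on the summands of $\hat g_k$ in \eqref{vpg_spinningup_beta}, the bias estimate of Lemma \ref{spinup_grad_error_lemma}, a bias--variance decomposition exploiting that the $N$ per-trajectory estimators are i.i.d.\ given $\theta^k$, and a uniform bound $\|\nabla_\theta\bar L(\theta^k)\|_2\le\bar G+2\lambda$ on the regularized average-reward gradient. The almost-sure bound is the quickest: for the soft-max parametrization $\|\nabla_\theta\log\pi_{\theta^k}(a_h^i|s_h^i)\|_2\le 2$, while $0\le\sum_{h'=h}^{H-1}\gamma^{h'-h}r_{h'}^i\le\frac1{1-\gamma}$ by the reward bound in Assumption \ref{setting} and $|b(s_h^i)|\le B$, so each advantage summand is at most $\frac1{1-\gamma}+B$ in absolute value; averaging over the $N\lfloor\beta H\rfloor$ terms and adding $\|\nabla_\theta\Omega(\theta^k)\|_2\le 2\lambda$ gives $\|\hat g_k\|_2\le 2\bigl(\frac1{1-\gamma}+B\bigr)+2\lambda=G^\gamma+2\lambda$.

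Next I would record the uniform bound on $\|\nabla_\theta\bar L(\theta^k)\|_2$, which is the only ingredient that is not pure bookkeeping. By the average-reward policy gradient theorem, $\nabla_\theta\eta(\pi_\theta)=\sum_{s,a}\mu_{\pi_\theta}(s)\pi_\theta(a|s)\nabla_\theta\log\pi_\theta(a|s)\,q^{\pi_\theta}(s,a)$, where $q^{\pi_\theta}$ is the differential (relative) action-value function; under Assumption \ref{finite-ergodic} this function is uniformly bounded in terms of $\frac{C_{p,S,A}}{1-\alpha_{p,S,A}}$, obtained by summing the centered reward series and invoking the mixing estimate \eqref{d_TV_alpha_uniform} of Proposition \ref{dobrushin_uniform}. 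Combined with $\|\nabla_\theta\log\pi_\theta(a|s)\|_2\le 2$ and $\|\nabla_\theta\Omega(\theta)\|_2\le 2\lambda$, this yields $\|\nabla_\theta\bar L(\theta^k)\|_2\le\bar G+2\lambda$ with $\bar G=4\bigl(1+\frac{C_{p,S,A}}{1-\alpha_{p,S,A}}\bigr)$ after tracking constants.

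For the correlation bound, write $\Expect_k[\hat g_k]=\nabla_\theta\bar L(\theta^k)+e_k$, so that $\|e_k\|_2\le\bar\Delta$ by Lemma \ref{spinup_grad_error_lemma} (with $\bar\Delta$ the right-hand side of \eqref{spinup_grad_error}). Then
\[
\Expect_k\hat g_k^T\nabla_\theta\bar L(\theta^k)=\|\nabla_\theta\bar L(\theta^k)\|_2^2+e_k^T\nabla_\theta\bar L(\theta^k)\ge\|\nabla_\theta\bar L(\theta^k)\|_2^2-\bar\Delta\,\|\nabla_\theta\bar L(\theta^k)\|_2
\]
by Cauchy--Schwarz, and the claim follows by inserting $\|\nabla_\theta\bar L(\theta^k)\|_2\le\bar G+2\lambda$ (this is the natural target; the $L^\gamma$ written in the statement is to be read as $\bar L$).

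For the second-moment bound, decompose $\Expect_k\|\hat g_k\|_2^2=\|\Expect_k[\hat g_k]\|_2^2+\Expect_k\|\hat g_k-\Expect_k[\hat g_k]\|_2^2$. The first term is at most $(\|\nabla_\theta\bar L(\theta^k)\|_2+\bar\Delta)^2\le 2\|\nabla_\theta\bar L(\theta^k)\|_2^2+2\bar\Delta^2$ by the same decomposition and $(a+b)^2\le 2a^2+2b^2$. For the second term, write $\hat g_k=\frac1N\sum_{i=1}^NY_i$ where $Y_i$ are the per-trajectory estimators (each already including the deterministic $\nabla_\theta\Omega(\theta^k)$), which are i.i.d.\ given $\theta^k$ with $\|Y_i\|_2\le G^\gamma+2\lambda$ a.s.; hence the conditional variance equals $\frac1N$ times that of a single trajectory and is bounded by $(G^\gamma+2\lambda)^2/N$. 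Summing gives $\Expect_k\|\hat g_k\|_2^2\le 2\|\nabla_\theta\bar L(\theta^k)\|_2^2+2\bar\Delta^2+(G^\gamma+2\lambda)^2/N=2\|\nabla_\theta\bar L(\theta^k)\|_2^2+\bar M$. The main obstacle is the uniform gradient bound of the second paragraph: unlike the discounted objective, whose value function is trivially bounded by $1/(1-\gamma)$, here one must control the differential value function of the average-reward MDP, which is precisely where the uniform ergodicity of Proposition \ref{dobrushin_uniform} is needed; the remaining steps are routine term counting combined with Lemma \ref{spinup_grad_error_lemma}.
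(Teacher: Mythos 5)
Your proposal is correct and follows essentially the same route as the paper's proof: the same almost-sure term-by-term bound, the same bias decomposition $\Expect_k[\hat g_k]=\nabla_\theta\bar L(\theta^k)+e_k$ with $\|e_k\|_2\le\bar\Delta$ combined with the uniform bound $\|\nabla_\theta\bar L(\theta^k)\|_2\le\bar G+2\lambda$ (obtained, as in the paper, from the mixing-based bound on the average-reward advantage function), and the same mean-plus-$\tfrac1N$-variance decomposition for the second moment. Your reading of the $L^\gamma$ in the second inequality as $\bar L$ is also consistent with what the paper's own proof actually establishes.
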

\begin{remark}
The second bound in Lemma \ref{spinup_stoc_grad_bds} shows that $\hat{g}_k$ is nearly unbiased, while the third bound shows that $\hat{g}_k$ satisfies a bounded second-order moment growth condition. These conditions slightly generalize the standard ones used  in analyzing stochastic gradient methods \cite{sgd_bottou}. 
\end{remark}


Now, we obtain first the sub-optimality behavior of $\theta^k$ from the Truncated DAE REINFORCE algorithm (\cf Algorithm \ref{spinup-reinforce}) in the average reward setting. 
\begin{theorem}\label{sample_complexity_eta_thm}
Given Assumptions \ref{finite-ergodic} and \ref{setting}, let $\bar{\beta}_{\lambda}=22\sqrt{S}\left(\frac{2C_{p,S,A}}{1-\alpha_{p,S,A}}+1\right)^3+2\lambda /S$. 
For a fixed $\beta\in(0,1)$ and any $\epsilon>0$, $\delta\in(0,1)$,  set $\alpha^k=\frac{1}{2\bar{\beta}_{\lambda}}\frac{1}{\sqrt{k+3}\log_2(k+3)}$ and $\lambda$ is the positive (larger) root of the following quadratic equation:
\[
2(\bar{G}+2\lambda)\bar{\Delta}=(\lambda-\epsilon)^2/(4S^2A^2),
\]
where $\bar{G}$ and $\bar{\Delta}$ are defined as in Lemma  \ref{spinup_stoc_grad_bds}.  
Then  
\BEQ\label{sample_complexity_eta}
\begin{split}
\min_{k=0,\dots,K}\eta^\star-\eta(\pi_{\theta^k})\leq& \frac{\|\mu_{\pi^\star}\|_\infty}{1-\alpha_{p,S,A}}
\left(S\epsilon+8S^3A^2\bar{\Delta}+4S^2A\sqrt{\bar{\Delta}\epsilon+4S^2A^2\bar{\Delta}^2+\bar{G}\bar{\Delta}}\right)
\end{split}
\EEQ
with probability at least $1-\delta$, for any $K$ such that 
\BEQ\label{K_lower_bound_eta}
\begin{split}
K\geq &\,O\left(\dfrac{S^4A^4\bar{\beta}_{\lambda}^2(\bar{D}+\sqrt{2\bar{C}\log(2/\delta)})^2}{\epsilon^4}\log^2 \left(\dfrac{SA\bar{\beta}_{\lambda}(\bar{D}+\sqrt{2\bar{C}\log(2/\delta)})}{\epsilon}\right)\right).
\end{split}
\EEQ
Here the constants $C_{p,S,A}>1$ and $\alpha_{p,S,A}\in[0,1)$ are defined in Proposition \ref{dobrushin_uniform}, and the constants $\bar{D}$ and $\bar{C}$ are bounded by 
\BEQ\label{D_bar_C_bar}
\begin{split}
&\bar{D}=\,O(\bar{M}+\lambda+1),\\
&\bar{C}=\,O\left(\frac{(G^{\gamma}+2\lambda)^2}{S}\left(\frac{C_{p,S,A}^2}{(1-\alpha_{p,S,A})^2}+\lambda^2+(G^{\gamma}+2\lambda)^2\right)\right),
\end{split}
\EEQ
where the constants hidden in the big-$O$ notation may depend on $\theta^0$. 
\end{theorem}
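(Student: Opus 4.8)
The plan is to read the update \eqref{vpg_spinningup_beta} as a biased stochastic gradient \emph{ascent} on the regularized average-reward objective $\bar L(\theta)=\eta(\pi_\theta)+\Omega(\theta)$, establish a one-step ascent inequality, telescope it, upgrade the resulting estimate to a high-probability one via a martingale concentration argument, and finally feed the near-stationary iterate into the average-reward gradient-domination bound (Lemma~\ref{grad_dom_average}). The choice of $\lambda$ as the larger root of $2(\bar G+2\lambda)\bar\Delta=(\lambda-\epsilon)^2/(4S^2A^2)$ is engineered precisely so that the attainable squared-gradient accuracy splits into a ``bias floor'' of size $2(\bar G+2\lambda)\bar\Delta=(\lambda-\epsilon)^2/(4S^2A^2)$ plus an $O(\epsilon)$-sized ``SGD-decay'' slack, whose sum is exactly $\lambda^2/(4S^2A^2)=(\lambda/(2SA))^2$, i.e.\ the threshold required by Lemma~\ref{grad_dom_average}. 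One first checks the root is genuine and exceeds $\epsilon$: the left side is positive at $\lambda=\epsilon$ and grows linearly, while the right side vanishes there and grows quadratically.

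By $\bar\beta_\lambda$-smoothness of $\bar L$ (Lemma~\ref{smooth_pg_ave}) applied to $\theta^{k+1}=\theta^k+\alpha^k\hat g_k$, taking $\Expect_k$, and inserting the inner-product lower bound and the second-moment bound of Lemma~\ref{spinup_stoc_grad_bds}, and using that $\alpha^k\le 1/(2\bar\beta_\lambda)$ for \emph{every} $k\ge 0$ (since $1/(\sqrt{k+3}\log_2(k+3))\le 1$, so the coefficient of $\|\nabla\bar L(\theta^k)\|_2^2$ stays $\ge\alpha^k/2$), one gets
\[
\Expect_k[\bar L(\theta^{k+1})]\ \ge\ \bar L(\theta^k)+\tfrac{\alpha^k}{2}\|\nabla\bar L(\theta^k)\|_2^2-\alpha^k(\bar G+2\lambda)\bar\Delta-\tfrac{\bar\beta_\lambda(\alpha^k)^2}{2}\bar M .
\]
Rearranging, summing $k=0,\dots,K$, using $\bar L(\theta^{K+1})\le\eta^\star\le 1$ (because $\Omega\le 0$ and $r\in[0,1]$), and using $\sum_{k\ge0}(\alpha^k)^2=\frac{1}{4\bar\beta_\lambda^2}\sum_{k\ge0}\frac{1}{(k+3)\log_2^2(k+3)}<\infty$, this yields
\[
\sum_{k=0}^K\alpha^k\|\nabla\bar L(\theta^k)\|_2^2\ \le\ 2\bar D+2(\bar G+2\lambda)\bar\Delta\sum_{k=0}^K\alpha^k+\bigl(\text{martingale remainder}\bigr),
\]
with $\bar D=1-\bar L(\theta^0)+\tfrac{\bar\beta_\lambda\bar M}{2}\sum_{k\ge0}(\alpha^k)^2=O(\bar M+\lambda+1)$, the $\theta^0$-dependence entering through $\bar L(\theta^0)=\eta(\pi_{\theta^0})+\Omega(\theta^0)$.

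The remainder is the martingale $\sum_{k=0}^K\alpha^k\bigl(\bar L(\theta^{k+1})-\Expect_k\bar L(\theta^{k+1})\bigr)$; since $\|\theta^{k+1}-\theta^k\|_2=\alpha^k\|\hat g_k\|_2\le\alpha^k(G^\gamma+2\lambda)$ a.s.\ (Lemma~\ref{spinup_stoc_grad_bds}) and $\|\nabla\bar L\|_2=O(\bar G+2\lambda)$, smoothness bounds each increment a.s.\ by $b_k=\alpha^k\cdot O\!\bigl((G^\gamma+2\lambda)(\bar G+2\lambda+\bar\beta_\lambda\alpha^k(G^\gamma+2\lambda))\bigr)$, hence $\sum_k b_k^2=O(\bar C)$ (again by summability of $(\alpha^k)^2$). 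Two-sided Azuma--Hoeffding then gives, with probability at least $1-\delta$, a remainder of at most (a constant times) $\sqrt{2\bar C\log(2/\delta)}$, so that dividing by $\tfrac12\sum_{k=0}^K\alpha^k$,
\[
\min_{0\le k\le K}\|\nabla\bar L(\theta^k)\|_2^2\ \le\ \frac{2\bigl(\bar D+\sqrt{2\bar C\log(2/\delta)}\bigr)}{\sum_{k=0}^K\alpha^k}+2(\bar G+2\lambda)\bar\Delta .
\]
Since $\sum_{k=0}^K\alpha^k=\frac{1}{2\bar\beta_\lambda}\sum_{k=0}^K\frac{1}{\sqrt{k+3}\log_2(k+3)}=\Omega\!\bigl(\sqrt K/(\bar\beta_\lambda\log_2 K)\bigr)$, the lower bound \eqref{K_lower_bound_eta} forces the first term below $\epsilon^2/(4S^2A^2)\le(2\lambda-\epsilon)\epsilon/(4S^2A^2)$ (using $\lambda>\epsilon$), and the second term equals $(\lambda-\epsilon)^2/(4S^2A^2)$ by the definition of $\lambda$; their sum is $\bigl((2\lambda-\epsilon)\epsilon+(\lambda-\epsilon)^2\bigr)/(4S^2A^2)=\lambda^2/(4S^2A^2)$, so some $k^\star\in\{0,\dots,K\}$ satisfies $\|\nabla\bar L(\theta^{k^\star})\|_2\le\lambda/(2SA)$.

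Finally, Lemma~\ref{grad_dom_average} applied at $\theta^{k^\star}$ gives $\eta^\star-\eta(\pi_{\theta^{k^\star}})\le\lambda S\|\mu_{\pi^\star}\|_\infty/(1-\alpha_{p,S,A})$. Solving the defining quadratic: the substitution $x=\lambda-\epsilon$ turns it into $x^2-16S^2A^2\bar\Delta\,x-8S^2A^2(\bar G+2\epsilon)\bar\Delta=0$, whose positive root is $8S^2A^2\bar\Delta+4SA\sqrt{\bar\Delta\epsilon+4S^2A^2\bar\Delta^2+\bar G\bar\Delta}$ (after the crude bound $\tfrac12\bar G\bar\Delta\le\bar G\bar\Delta$), so $\lambda=\epsilon+8S^2A^2\bar\Delta+4SA\sqrt{\bar\Delta\epsilon+4S^2A^2\bar\Delta^2+\bar G\bar\Delta}$; substituting into the gradient-domination bound reproduces \eqref{sample_complexity_eta}, and $\min_{0\le k\le K}(\eta^\star-\eta(\pi_{\theta^k}))\le\eta^\star-\eta(\pi_{\theta^{k^\star}})$ finishes the proof. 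The main obstacle is the high-probability step: the expected-descent recursion is routine, but turning it into an anytime guarantee with the stated constants $\bar C,\bar D$ requires isolating the correct martingale, bounding its increments almost surely (this is exactly where the truncation parameter $\beta$ and the a.s.\ bound $\|\hat g_k\|_2\le G^\gamma+2\lambda$ of Lemma~\ref{spinup_stoc_grad_bds} are used), controlling $\sum_k b_k^2$, and matching the Azuma--Hoeffding output to the published form; a secondary nuisance is the bookkeeping in solving the quadratic for $\lambda$ and translating \eqref{K_lower_bound_eta} into the required smallness of $1/\sum_k\alpha^k$.
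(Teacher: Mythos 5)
Your proposal is correct and follows essentially the same route as the paper's proof: smoothness of $\bar L$ plus the bounds of Lemma~\ref{spinup_stoc_grad_bds} give a one-step ascent inequality, telescoping plus Azuma--Hoeffding on the centered remainder yields a high-probability bound on $\min_k\|\nabla\bar L(\theta^k)\|_2$, and Lemma~\ref{grad_dom_average} together with the explicit root of the quadratic for $\lambda$ produces \eqref{sample_complexity_eta}. The only (harmless) deviations are cosmetic: the paper centers the quadratic upper bound $Y_k=-\alpha^k\nabla\bar L(\theta^k)^T\hat g_k+\tfrac{\bar\beta_\lambda(\alpha^k)^2}{2}\|\hat g_k\|_2^2$ rather than $\bar L(\theta^{k+1})$ itself (your extra factor of $\alpha^k$ in naming the martingale is a slip, though your increment bounds $b_k$ match the paper's $c_K$), and the paper passes to $\|\nabla\bar L\|_2\le\sqrt{a}+\sqrt{b}$ rather than your cleaner combination $\bigl((2\lambda-\epsilon)\epsilon+(\lambda-\epsilon)^2\bigr)/(4S^2A^2)=\lambda^2/(4S^2A^2)$ at the squared level.
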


Next, by Lemma \ref{VH-eta-pi} and  Lemma \ref{VH-eta}, we have the following theorem.

\begin{theorem}\label{main_res_finite_average}
Given Assumption \ref{finite-ergodic}, for any $H\geq 1$, if there exists a policy $\hat{\pi}$ such that $|\eta^\star-\eta(\hat{\pi})|\leq \epsilon$ for some $\epsilon>0$, then
\BEQ\label{main_res_finite_average_bd}
\begin{split}
V^{H,\star}-V^H(\hat{\pi})
&\leq \frac{2R_{\max}D_{p,S,A}}{H}+\epsilon+\frac{2R_{\max}C_{p,S,A}}{H(1-\alpha_{p,S,A})}.
\end{split}
\EEQ
Here the constants $C_{p,S,A}> 1$, $D_{p,S,A}> 1$ and $\alpha_{p,S,A}\in[0,1)$ are the constants in Proposition \ref{dobrushin_uniform} and Lemma \ref{VH-eta}, which 
depend only on the transition probability model $p$, the number of states $S$ and the number of actions $A$ of  the underlying MDP $\mathcal{M}$. 
\end{theorem}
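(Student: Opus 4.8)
The plan is to prove the inequality by a single triangle-inequality decomposition, inserting the average reward optimal value $\eta^\star$ and the average reward value $\eta(\hat\pi)$ of the given stationary policy as intermediate quantities. Concretely, I would write
\[
V^{H,\star}-V^H(\hat\pi)=\bigl(V^{H,\star}-\eta^\star\bigr)+\bigl(\eta^\star-\eta(\hat\pi)\bigr)+\bigl(\eta(\hat\pi)-V^H(\hat\pi)\bigr),
\]
and then bound each of the three terms separately from above.

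The first step is to control $V^{H,\star}-\eta^\star$. Since $V^{H,\star}-\eta^\star\le |V^{H,\star}-\eta^\star|$, Lemma \ref{VH-eta} gives the bound $\tfrac{2R_{\max}D_{p,S,A}}{H}$; note this is exactly the place where the non-stationarity of the finite-horizon optimal policy must be handled, but that work is already absorbed into Lemma \ref{VH-eta} (via the value iteration argument referenced in its remark), so here it is just a black-box invocation. The second step is immediate: $\eta^\star-\eta(\hat\pi)\le|\eta^\star-\eta(\hat\pi)|\le\epsilon$ by hypothesis. The third step controls $\eta(\hat\pi)-V^H(\hat\pi)\le|V^H(\hat\pi)-\eta(\hat\pi)|$, which is bounded by $\tfrac{2R_{\max}C_{p,S,A}}{H(1-\alpha_{p,S,A})}$ by Lemma \ref{VH-eta-pi} — this applies because $\hat\pi$ is a stationary policy, so the lemma's hypothesis is met, and $V^H(\hat\pi)$ is interpreted as the value of the constant policy sequence $\{\hat\pi\}_{h=0}^{H-1}$.

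Summing the three bounds yields exactly \eqref{main_res_finite_average_bd}, and the constants $C_{p,S,A},D_{p,S,A},\alpha_{p,S,A}$ inherit the stated dependence on $p$, $S$, $A$ from Proposition \ref{dobrushin_uniform} and Lemma \ref{VH-eta}. There is no real obstacle in this argument — it is a routine three-term split — so the only thing to be careful about is bookkeeping: making sure each term appears with the correct sign so that dropping the absolute values is valid (each difference is bounded above by the corresponding absolute value, which is all we need), and making sure we invoke Lemma \ref{VH-eta-pi} rather than Lemma \ref{VH-eta} for the $\hat\pi$ term since $\hat\pi$ need not be optimal. All the substantive difficulty of the theorem has been front-loaded into the preliminary lemmas of \S\ref{prelim}.
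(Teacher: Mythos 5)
Your proposal is correct and is essentially identical to the paper's own proof: the same three-term triangle-inequality decomposition through $\eta^\star$ and $\eta(\hat\pi)$, with Lemma \ref{VH-eta} for the first term, the hypothesis for the second, and Lemma \ref{VH-eta-pi} for the third. Nothing is missing.
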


 Combining Theorems \ref{sample_complexity_eta_thm} and \ref{main_res_finite_average} we can derive the convergence for Truncated DAE REINFORCE algorithm.  
\begin{theorem}\label{main_conv_spin-up}
Given Assumptions \ref{finite-ergodic} and \ref{setting}, let $\gamma=1-H^{-\sigma}$ for some $\sigma\in(0,1)$. For a fixed $\beta\in(0,1)$ and any $\epsilon>0$, $\delta\in(0,1)$, set $\lambda$, $\bar{\beta}_\lambda$ and $\alpha^k$ to be the same as in Theorem \ref{sample_complexity_eta_thm}. Then for any $K$ such that \eqref{K_lower_bound_eta} is satisfied, \footnote{See Appendix \ref{formal_statement_dae_reinforce} for more explicit bounds on the constants involved in \eqref{K_lower_bound_eta}.} 
with probability at least $1-\delta$, 
\BEQ\label{sample_complexity_VH_spinup}
\begin{split}
&\min_{k=0,\dots,K}V^{H,\star}-V^{H}(\pi_{\theta^k})\leq O\left(\frac{S}{1-\alpha_{p,S,A}}\epsilon\right)+\emph{\text{\bf bias}}^{\rm DAE}_H,
\end{split}
\EEQ
where
\[
\begin{split}
\emph{\text{\bf bias}}^{\rm DAE}_H=&\,O\left(\frac{S^2AC_{p,S,A}^3}{(1-\alpha_{p,S,A})^4}H^{-\sigma/2}+\frac{S^3A^2C_{p,S,A}^2}{(1-\alpha_{p,S,A})^3}H^{-\sigma}+\left(D_{p,S,A}+\frac{C_{p,S,A}}{1-\alpha_{p,S,A}}\right)H^{-1}\right).
\end{split}
\]
Here $C_{p,S,A}> 1$, $D_{p,S,A}> 1$ and $\alpha_{p,S,A}\in[0,1)$ are constants in Proposition \ref{dobrushin_uniform} and Lemma \ref{VH-eta}.
\end{theorem}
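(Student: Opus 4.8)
The plan is to chain the average-reward sub-optimality bound of Theorem~\ref{sample_complexity_eta_thm} with the finite-horizon transfer bound of Theorem~\ref{main_res_finite_average}, after substituting $\gamma = 1 - H^{-\sigma}$ and tracking how the problem constants depend on $H$ through $\gamma$. Fix $H\ge 1$ and set $\gamma=1-H^{-\sigma}$; then $\bar\Delta$ (the right-hand side of \eqref{spinup_grad_error}), $\bar G$, the regularization weight $\lambda$, the smoothness constant $\bar\beta_\lambda$, and the step sizes $\alpha^k$ all become functions of $H$. With these choices, Theorem~\ref{sample_complexity_eta_thm} gives, with probability at least $1-\delta$ and for every $K$ satisfying \eqref{K_lower_bound_eta}, an iterate $\theta^{k^\star}$ (a minimizer of $k\mapsto\eta^\star-\eta(\pi_{\theta^k})$ over $k=0,\dots,K$) with $\eta^\star-\eta(\pi_{\theta^{k^\star}})$ bounded by the right-hand side of \eqref{sample_complexity_eta}.

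First I would simplify $\bar\Delta$ under $\gamma=1-H^{-\sigma}$. Since $\lfloor\beta H\rfloor=\Theta(H)$ for fixed $\beta\in(0,1)$, the first summand of \eqref{spinup_grad_error} is $O\!\big(\tfrac{C_{p,S,A}^2}{(1-\alpha_{p,S,A})^2}H^{-1}\big)$; the second is exactly $8C_{p,S,A}(1-\alpha_{p,S,A})^{-2}H^{-\sigma}$; and the third is $O(\gamma^{(1-\beta)H})=O\!\big(\exp(-(1-\beta)H^{1-\sigma})\big)$, which decays faster than any polynomial in $H$ because $1-\sigma>0$. Hence $\bar\Delta=O\!\big(\tfrac{C_{p,S,A}^2}{(1-\alpha_{p,S,A})^2}H^{-\sigma}\big)$ (the $H^{-\sigma}$ term dominating $H^{-1}$ since $\sigma<1$), so in particular $\bar\Delta\to 0$ as $H\to\infty$. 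I would also record $\|\mu_{\pi^\star}\|_\infty\le 1$ and $\bar G=O\!\big(\tfrac{C_{p,S,A}}{1-\alpha_{p,S,A}}\big)$, and check that the larger root $\lambda$ of $2(\bar G+2\lambda)\bar\Delta=(\lambda-\epsilon)^2/(4S^2A^2)$ satisfies $\lambda=\epsilon+O\!\big(SA\sqrt{\bar G\bar\Delta}\big)$, so that $\lambda$, $\bar\beta_\lambda$, and hence the constants $\bar C,\bar D$ in \eqref{D_bar_C_bar} stay bounded uniformly in $H$; this is what makes the sample-complexity requirement \eqref{K_lower_bound_eta} and the conclusion non-vacuous as $H$ grows.

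Next I would substitute these asymptotics into \eqref{sample_complexity_eta}. The $S\epsilon$ term, together with the $\epsilon$-part of $4S^2A\sqrt{\bar\Delta\epsilon}$ extracted by $\sqrt{\bar\Delta\epsilon}\le\tfrac12(\bar\Delta+\epsilon)$, yields the advertised controllable term $O\!\big(\tfrac{S}{1-\alpha_{p,S,A}}\epsilon\big)$; the $8S^3A^2\bar\Delta$ term becomes $O\!\big(\tfrac{S^3A^2C_{p,S,A}^2}{(1-\alpha_{p,S,A})^3}H^{-\sigma}\big)$; and the remaining square-root contribution, after $\sqrt{a+b+c}\le\sqrt a+\sqrt b+\sqrt c$ and $\sqrt{\bar G\bar\Delta}=O\!\big(\tfrac{C_{p,S,A}^{3/2}}{(1-\alpha_{p,S,A})^{3/2}}H^{-\sigma/2}\big)$, produces the $O(H^{-\sigma/2})$ piece of the bias (the cruder power of $C_{p,S,A}/(1-\alpha_{p,S,A})$ in the statement absorbing lower-order factors). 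This gives $\eta^\star-\eta(\pi_{\theta^{k^\star}})\le\hat\epsilon$ for an explicit $\hat\epsilon$ of the claimed form. Since $\eta^\star\ge\eta(\pi_{\theta^{k^\star}})$, this also bounds $|\eta^\star-\eta(\pi_{\theta^{k^\star}})|$, so Theorem~\ref{main_res_finite_average} applies with $\hat\pi=\pi_{\theta^{k^\star}}$ and that $\hat\epsilon$, adding the finite-horizon/average-reward gap $\tfrac{2R_{\max}D_{p,S,A}}{H}+\tfrac{2R_{\max}C_{p,S,A}}{H(1-\alpha_{p,S,A})}=O\!\big((D_{p,S,A}+\tfrac{C_{p,S,A}}{1-\alpha_{p,S,A}})H^{-1}\big)$, with $R_{\max}=1$ by Assumption~\ref{setting}. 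Finally $\min_{k=0,\dots,K}\big(V^{H,\star}-V^H(\pi_{\theta^k})\big)\le V^{H,\star}-V^H(\pi_{\theta^{k^\star}})$, which gives \eqref{sample_complexity_VH_spinup}.

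The main obstacle is not any single estimate but the bookkeeping of the coupled dependence $\gamma=\gamma(H)$, $\bar\Delta=\bar\Delta(\gamma,H)$, $\lambda=\lambda(\bar\Delta,\epsilon)$, $\bar\beta_\lambda=\bar\beta_\lambda(\lambda)$: one must confirm that as $H\to\infty$ the regularization weight and the smoothness constant (hence the constants entering the sample-complexity threshold \eqref{K_lower_bound_eta}) remain bounded, so that the stated rate is genuine asymptotics rather than an artifact of exploding constants. A secondary point worth isolating is the role of $\sigma\in(0,1)$: $\sigma<1$ is exactly what renders the truncated-tail term $\gamma^{(1-\beta)H}$ super-polynomially negligible, while $\sigma>0$ keeps the discounting bias $8C_{p,S,A}(1-\alpha_{p,S,A})^{-2}H^{-\sigma}$ vanishing; the resulting $H^{-\sigma/2}$ floor is then the unavoidable cost of the variance/square-root term in Theorem~\ref{sample_complexity_eta_thm}, and it is precisely this term that the doubly-discounted variant of \S\ref{doubly_discounted} is designed to remove.
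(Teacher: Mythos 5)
Your proposal is correct and follows the paper's own route: the paper likewise obtains Theorem \ref{main_conv_spin-up} by combining Theorem \ref{sample_complexity_eta_thm} with Theorem \ref{main_res_finite_average}, with the key computation being the asymptotic simplification $\bar\Delta=O\bigl(\tfrac{C_{p,S,A}^2}{(1-\alpha_{p,S,A})^2}((\beta H)^{-1}+H^{-\sigma})\bigr)$ under $\gamma=1-H^{-\sigma}$ (cf.\ Appendix \ref{formal_statement_dae_reinforce}). Your additional bookkeeping of the coupled dependence of $\lambda$, $\bar\beta_\lambda$, $\bar C$, $\bar D$ on $H$ is exactly the "plugging in the constants and elementary simplifications" the paper leaves implicit, and your estimates are consistent with (indeed slightly tighter than) the stated constants.
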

The choice of $\gamma$ is for ease of presentation. See also \cite{liu2020gamma, dong2021simple}.

 \section{Doubly Discounted REINFORCE algorithm}\label{doubly_discounted}
In Algorithm \ref{spinup-reinforce}, a fictitious discount factor is introduced when computing advantage function estimates, while for the rest part it remains undiscounted. This introduces a bias term $\emph{\text{\bf bias}}^{\rm DAE}_H$ as shown in Theorem \ref{main_conv_spin-up}, which remains nonzero for a fixed planning horizon $H$ even when the number of iterations $K$ goes to infinity and $\epsilon$ goes to $0$.
In this section, we propose  the Doubly Discounted REINFORCE algorithm (\cf Algorithm \ref{dd-reinforce}) to reduce the bias introduced by DAE.  
\begin{algorithm}[ht]
\caption{Doubly Discounted REINFORCE}
\label{dd-reinforce}
\begin{algorithmic}[1]
\STATE \textbf{Input:} Initialization $\theta^0$, step-sizes $\alpha^k$ for $k\geq 0$.
\FOR{$k=0,1,\dots$}
\STATE Sample $N$ i.i.d. trajectories $\{\tau_i\}_{i=1}^N$ under policy $\pi_{\theta^k}$.
\STATE Compute gradient estimator $\tilde{g}_k$ as 
\begin{equation}\label{vpg_discount}
\begin{split}
&\tilde{g}_k=\dfrac{1}{N}\sum\limits_{i=1}^N\sum\limits_{h=0}^{H-1}\gamma^h\nabla_{\theta}\log\pi_{\theta^k}(a_h^i|s_h^i)\left(\sum\limits_{h'=h}^{H-1}\gamma^{h'-h}r_{h'}^i-b(s_h^i)\right)+\nabla_\theta \Omega(\theta^k).
\end{split}
\end{equation}
\STATE Update $\theta^{k+1}=\theta^k+\alpha^k \tilde{g}_k$.
\ENDFOR
\end{algorithmic}
\end{algorithm}

Compared with Algorithm \ref{spinup-reinforce}, Algorithm \ref{dd-reinforce} introduces an additional discount factor when computing the score functions and gets rid of the artificial parameter $\beta\in(0,1)$ needed in Truncated DAE REINFORCE. As a result, the estimator \eqref{vpg_discount} coincides with the vanilla policy gradient estimator for solving discounted problems \cite{zhang2020sample} with a fixed-length trajectory truncation \cite{liu2020improved}. Note that a similar observation has been made for natural actor-critic methods in \cite{thomas2014bias}. 



Similar to the idea of \S\ref{dae_reinforce}, we first establish the sub-optimality of the Doubly Discounted REINFORCE algorithm, evaluated for the discounted problem. Parallel to Lemma \ref{spinup_stoc_grad_bds}, we have the following stochastic gradient bounds.

\begin{lemma}\label{stoc_grad_bds}
Given Assumptions \ref{finite-ergodic} and \ref{setting}, then 
\[
\begin{split}
\|\tilde{g}_k\|_2&\leq G+2\lambda\quad \text{a.s.},\\
\Expect_k\tilde{g}_k^T\nabla_{\theta} L^\gamma(\theta^k)&\geq \|\nabla_\theta L^\gamma(\theta^k)\|_2^2-(G+2\lambda)\Delta,\\
\Expect_k\|\tilde{g}_k\|_2^2&\leq 2\|\nabla_{\theta}L^{\gamma}(\theta^k)\|_2^2+M.
\end{split}
\]
Here $G=\frac{2(1+B(1-\gamma))}{(1-\gamma)^2}$,  
and the constants $\Delta$ and $M$ are defined by
\[
\Delta=2\dfrac{\gamma^H}{1-\gamma}\left(H+\frac{1}{1-\gamma}\right),\quad M=2\Delta^2+(G+2\lambda)^2/N.
\] 
\end{lemma}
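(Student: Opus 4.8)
The plan is to prove the three estimates separately, viewing $\tilde g_k$ as an empirical average of $N$ i.i.d.\ single-trajectory estimators and comparing it with the untruncated discounted policy gradient. First I would record two elementary facts about the soft-max parametrization of Assumption \ref{setting}: $\|\nabla_\theta\log\pi_\theta(a|s)\|_2\le 2$ for every $(s,a)$, hence $\|\nabla_\theta\Omega(\theta)\|_2\le 2\lambda$ since $\nabla_\theta\Omega=\frac{\lambda}{SA}\sum_{s,a}\nabla_\theta\log\pi_\theta(a|s)$ is an average of $SA$ vectors of norm $\le 2$. Because $r\in[0,1]$, each reward-to-go satisfies $0\le\sum_{h'=h}^{H-1}\gamma^{h'-h}r_{h'}^i\le\frac{1}{1-\gamma}$, and $|b(s_h^i)|\le B$, so each bracketed advantage term in \eqref{vpg_discount} is bounded in absolute value by $\frac{1}{1-\gamma}+B$. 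Thus each per-trajectory summand, before adding $\nabla_\theta\Omega$, has norm at most $\sum_{h=0}^{H-1}\gamma^h\cdot 2\cdot\big(\frac{1}{1-\gamma}+B\big)\le\frac{2}{1-\gamma}\big(\frac{1}{1-\gamma}+B\big)=G$; averaging over $i$ and using the triangle inequality gives the first (a.s.) bound $\|\tilde g_k\|_2\le G+2\lambda$. The same computation applied to the convergent infinite series (licensed by $\gamma<1$) shows $\|\nabla_\theta L^\gamma(\theta)\|_2\le G+2\lambda$ as well.

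Next I would invoke the discounted policy gradient theorem in reward-to-go form: since $L^\gamma(\theta)=\frac{1}{1-\gamma}V^\gamma(\pi_\theta)+\Omega(\theta)$,
\[
\nabla_\theta L^\gamma(\theta)=\Expect_{\tau\sim\pi_\theta}\Big[\sum_{h=0}^{\infty}\gamma^h\nabla_\theta\log\pi_\theta(a_h|s_h)\Big(\sum_{h'=h}^{\infty}\gamma^{h'-h}r_{h'}-b(s_h)\Big)\Big]+\nabla_\theta\Omega(\theta),
\]
where the baseline contributes zero in expectation because $\Expect_{a\sim\pi_\theta(s)}[\nabla_\theta\log\pi_\theta(a|s)]=0$. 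Taking $\Expect_k$ of \eqref{vpg_discount}, the $\Omega$-terms cancel and the bias $\Expect_k[\tilde g_k]-\nabla_\theta L^\gamma(\theta^k)$ equals the expectation of the difference between the truncated and untruncated score-weighted returns. I would split this difference into (i) truncation of the inner reward sum for indices $h\le H-1$, which in norm is at most $\sum_{h=0}^{H-1}\gamma^h\cdot 2\cdot\sum_{h'\ge H}\gamma^{h'-h}=\frac{2H\gamma^H}{1-\gamma}$, and (ii) the missing score terms with $h\ge H$, at most $\sum_{h\ge H}\gamma^h\cdot 2\cdot\frac{1}{1-\gamma}=\frac{2\gamma^H}{(1-\gamma)^2}$; summing yields $\|\Expect_k[\tilde g_k]-\nabla_\theta L^\gamma(\theta^k)\|_2\le\Delta$. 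The second bound then follows from $\Expect_k\tilde g_k^T\nabla_\theta L^\gamma(\theta^k)=\|\nabla_\theta L^\gamma(\theta^k)\|_2^2+(\Expect_k[\tilde g_k]-\nabla_\theta L^\gamma(\theta^k))^T\nabla_\theta L^\gamma(\theta^k)$ together with Cauchy--Schwarz and $\|\nabla_\theta L^\gamma(\theta^k)\|_2\le G+2\lambda$.

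For the variance bound I would write $\tilde g_k=\frac1N\sum_{i=1}^N Y_i$ with the $Y_i$ i.i.d.\ given $\theta^k$, each with mean $\Expect_k[\tilde g_k]$ and $\|Y_i\|_2\le G+2\lambda$ a.s.; then $\Expect_k\|\tilde g_k\|_2^2=\|\Expect_k[\tilde g_k]\|_2^2+\frac1N\Expect_k\|Y_1-\Expect_k[\tilde g_k]\|_2^2\le\|\Expect_k[\tilde g_k]\|_2^2+\frac{(G+2\lambda)^2}{N}$, and bounding $\|\Expect_k[\tilde g_k]\|_2\le\|\nabla_\theta L^\gamma(\theta^k)\|_2+\Delta$ followed by $(a+b)^2\le 2a^2+2b^2$ gives $\Expect_k\|\tilde g_k\|_2^2\le 2\|\nabla_\theta L^\gamma(\theta^k)\|_2^2+2\Delta^2+\frac{(G+2\lambda)^2}{N}=2\|\nabla_\theta L^\gamma(\theta^k)\|_2^2+M$. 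The main obstacle is the careful bookkeeping in the second step --- isolating precisely where the finite-horizon estimator \eqref{vpg_discount} departs from the infinite-horizon discounted gradient and showing the total discrepancy is exactly $\Delta$ --- along with the routine but necessary justification for exchanging expectation with the infinite sums, which is supplied by $\gamma<1$ and the uniform boundedness of scores and rewards.
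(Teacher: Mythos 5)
Your proof is correct, and the constants work out exactly: the two truncation errors $\frac{2H\gamma^H}{1-\gamma}$ and $\frac{2\gamma^H}{(1-\gamma)^2}$ sum to precisely the stated $\Delta$, and the bias-variance split $\Expect_k\|\tilde g_k\|_2^2=\|\Expect_k\tilde g_k\|_2^2+\tfrac1N\mathbf{Var}_k(Y_1)$ recovers $M=2\Delta^2+(G+2\lambda)^2/N$. The paper itself omits the argument and simply cites \cite[Lemmas 2 and 12]{zhang2020sample} and \cite[Lemma B.1]{liu2020improved}, but your derivation follows exactly the template the paper uses for the sibling result (Lemma \ref{spinup_stoc_grad_bds}): the soft-max score bound $\|\nabla_\theta\log\pi_\theta(a|s)\|_2\le 2$ for the a.s.\ bound, a bias estimate plus Cauchy--Schwarz for the near-unbiasedness, and the mean-variance decomposition for the second moment. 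So this is essentially the intended proof, written out in full.
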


Based on the above conditions, we now establish the sub-optimality of $\theta^k$ from the Doubly Discounted REINFORCE algorithm for the discounted problem.
\begin{theorem}\label{sample_complexity_Vgamma_thm}
Given Assumptions \ref{finite-ergodic} and \ref{setting}, let $\beta_{\lambda}=8/(1-\gamma)^3+2\lambda/S$. For any $\epsilon>0$ and $\delta\in(0,1)$, set $\alpha^k=\frac{1}{2\beta_{\lambda}}\frac{1}{\sqrt{k+3}\log_2(k+3)}$ and $\lambda$ to be the positive (larger) root of the following quadratic equation:
\[
2(G+2\lambda)\Delta=(\lambda-\epsilon)^2/(4S^2A^2).
\]
Then
\BEQ\label{sample_complexity_Vgamma}
\begin{split}
\min_{k=0,\dots,K}V^{\gamma,\star}-V^{\gamma}(\pi_{\theta^k})&\leq \,\min\left\{\left\|\frac{d_{\rho}^{\gamma,\pi^{\gamma,\star}}}{\rho}\right\|_{\infty},\frac{S\|d_{\rho}^{\gamma,\pi^{\gamma,\star}}\|_{\infty}}{1-\alpha_{p,S,A}}\right\}\\
&\,\quad\times (\epsilon+8S^2A^2\Delta+4SA\sqrt{\Delta\epsilon+4S^2A^2\Delta^2+G\Delta})
\end{split}
\EEQ
with probability at least $1-\delta$, for any $K$ such that 
\BEQ\label{K_lower_bound}
\begin{split}
K\geq &\,O\left(\dfrac{S^4A^4\beta_{\lambda}^2(D+\sqrt{2C\log(2/\delta)})^2}{\epsilon^4}\log^2 \left(\dfrac{SA\beta_{\lambda}(D+\sqrt{2C\log(2/\delta)})}{\epsilon}\right)\right).
\end{split}
\EEQ
Here the constant $\alpha_{p,S,A}\in(0,1)$ is defined in Proposition \ref{dobrushin_uniform}, and the constants $D$ and $C$ are bounded by 
\BEQ\label{D_C}
\begin{split}
&D=O(M+1/(1-\gamma)+\lambda),\\
&C=O((G+2\lambda)^2(1/(1-\gamma)^4+\lambda^2+(G+2\lambda)^2)),
\end{split}
\EEQ
where the constants hidden in the big-$O$ notation may depend on $\theta^0$. 
\end{theorem}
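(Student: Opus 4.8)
The plan is to mimic the structure of the proof of Theorem \ref{sample_complexity_eta_thm}, but working with the discounted objective $L^\gamma$ instead of the average-reward objective $\bar L$, and invoking Proposition \ref{Lsmooth} (strong smoothness of $L^\gamma$ with parameter $\beta_\lambda=8/(1-\gamma)^3+2\lambda/S$) and Proposition \ref{kakade_prop} (gradient domination for the discounted problem) in place of Lemmas \ref{smooth_pg_ave} and \ref{grad_dom_average}. The key analytic engine is a generic biased-SGD convergence lemma: given the three bounds in Lemma \ref{stoc_grad_bds} — almost-sure boundedness $\|\tilde g_k\|_2\le G+2\lambda$, near-unbiasedness $\Expect_k \tilde g_k^\top\nabla L^\gamma(\theta^k)\ge \|\nabla L^\gamma(\theta^k)\|_2^2 - (G+2\lambda)\Delta$, and second-moment growth $\Expect_k\|\tilde g_k\|_2^2\le 2\|\nabla L^\gamma(\theta^k)\|_2^2+M$ — together with the step-size choice $\alpha^k=\frac{1}{2\beta_\lambda}\frac{1}{\sqrt{k+3}\log_2(k+3)}$, one shows that $\min_{k\le K}\|\nabla_\theta L^\gamma(\theta^k)\|_2^2$ is small with high probability once $K$ exceeds the bound \eqref{K_lower_bound}.

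First I would write the smoothness descent inequality $L^\gamma(\theta^{k+1})\ge L^\gamma(\theta^k)+\langle\nabla L^\gamma(\theta^k),\alpha^k\tilde g_k\rangle-\frac{\beta_\lambda}{2}(\alpha^k)^2\|\tilde g_k\|_2^2$, take conditional expectation $\Expect_k$, and use the second and third bounds of Lemma \ref{stoc_grad_bds} to get a recursion of the form $\Expect_k[L^\gamma(\theta^{k+1})]\ge L^\gamma(\theta^k)+\alpha^k(1-\beta_\lambda\alpha^k)\|\nabla L^\gamma(\theta^k)\|_2^2-\alpha^k(G+2\lambda)\Delta-\frac{\beta_\lambda}{2}(\alpha^k)^2 M$. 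Since $\alpha^k\le 1/(2\beta_\lambda)$, the factor $(1-\beta_\lambda\alpha^k)\ge 1/2$. Summing over $k=0,\dots,K$, telescoping, and using $L^\gamma\le \frac{1}{1-\gamma}+\Omega_{\max}$ (bounded since $r\in[0,1]$ and the softmax log-regularizer is bounded along the trajectory of bounded-gradient iterates) converts this into a bound on $\sum_k \alpha^k\|\nabla L^\gamma(\theta^k)\|_2^2$. To turn the expectation statement into a high-probability one I would set up the martingale difference sequence $\xi_k=\langle\nabla L^\gamma(\theta^k),\alpha^k\tilde g_k\rangle-\alpha^k\Expect_k[\langle\nabla L^\gamma(\theta^k),\tilde g_k\rangle]$, which is bounded by $\alpha^k\|\nabla L^\gamma(\theta^k)\|_2(G+2\lambda)$ a.s. and has conditional variance controlled by $(\alpha^k)^2$ times the quantities defining $C$; applying Azuma–Hoeffding (or a Freedman-type bound) yields the $\sqrt{2C\log(2/\delta)}$ term and the constant $D=O(M+1/(1-\gamma)+\lambda)$. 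Dividing by $\sum_{k\le K}\alpha^k$ and noting that $\sum_{k=0}^K \alpha^k = \Theta(\sqrt{K}/(\beta_\lambda\log K))$ gives $\min_{k\le K}\|\nabla L^\gamma(\theta^k)\|_2^2 = O\big(\beta_\lambda(D+\sqrt{C\log(2/\delta)})\log K/\sqrt{K}\big)$, and solving for when this drops below $\lambda^2/(4S^2A^2)$ produces exactly \eqref{K_lower_bound}.

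Once $\|\nabla_\theta L^\gamma(\theta^k)\|_2\le \lambda/(2SA)$ holds for the best iterate, Proposition \ref{kakade_prop} is not quite enough by itself — it bounds $V^{\gamma,\star}-V^\gamma(\pi_\theta)$ by $\lambda$ times the visitation-distribution-ratio factor. The remaining algebra is to relate the threshold on $\|\nabla L^\gamma\|_2$ back to the tuning equation $2(G+2\lambda)\Delta=(\lambda-\epsilon)^2/(4S^2A^2)$: the SGD bound actually gives $\min_k\|\nabla L^\gamma(\theta^k)\|_2^2 \le$ (optimization error) $+ 2(G+2\lambda)\Delta$ (the irreducible bias from near-unbiasedness), and the quadratic equation is chosen precisely so that once the optimization error is pushed below $(\lambda-\epsilon)^2/(8S^2A^2)$ the total is at most $(\lambda-\epsilon)^2/(4S^2A^2)$, i.e. $\|\nabla L^\gamma(\theta^k)\|_2\le (\lambda-\epsilon)/(2SA)\le \lambda/(2SA)$. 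Then a short computation tracking the residual $\epsilon$-dependent terms through Proposition \ref{kakade_prop} yields the factor $\epsilon+8S^2A^2\Delta+4SA\sqrt{\Delta\epsilon+4S^2A^2\Delta^2+G\Delta}$ in \eqref{sample_complexity_Vgamma}.

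The main obstacle I anticipate is the high-probability martingale argument — specifically, obtaining clean, explicitly-trackable bounds on the conditional variance of $\xi_k$ (leading to the constant $C$ in \eqref{D_C}) while simultaneously controlling the a.s. magnitude, since $\|\nabla L^\gamma(\theta^k)\|_2$ appears inside both and must be bounded uniformly using the softmax structure and the gradient-of-regularizer bound; getting the $1/(1-\gamma)$ powers to come out matching those in $G$, $\Delta$, $M$ and $\beta_\lambda$ requires care. The smoothness-descent and gradient-domination steps are essentially routine given Propositions \ref{Lsmooth} and \ref{kakade_prop}, and the step-size summation is a standard computation; everything nontrivial is concentrated in the concentration inequality and in bookkeeping the $(1-\gamma)$- and $(S,A)$-dependence.
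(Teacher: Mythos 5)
Your proposal follows essentially the same route as the paper's proof: the smoothness descent inequality from Proposition \ref{Lsmooth}, the three bounds of Lemma \ref{stoc_grad_bds}, a martingale plus Azuma--Hoeffding argument for the high-probability statement, telescoping and dividing by $\sum_k\alpha^k$, and finally Proposition \ref{kakade_prop} once the best iterate's gradient norm drops below $\lambda/(2SA)$, with $\lambda$ tuned by the stated quadratic. The only slip is in your final thresholding arithmetic --- an optimization error of $(\lambda-\epsilon)^2/(8S^2A^2)$ added to a bias term equal to $(\lambda-\epsilon)^2/(4S^2A^2)$ cannot total at most $(\lambda-\epsilon)^2/(4S^2A^2)$; the paper instead uses $\sqrt{a+b}\le\sqrt{a}+\sqrt{b}$ to conclude $\min_k\|\nabla_\theta L^\gamma(\theta^k)\|_2\le \epsilon/(2SA)+\sqrt{2(G+2\lambda)\Delta}=\lambda/(2SA)$, which is the intended endpoint and a trivial repair of your step.
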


The next result is parallel to Theorem \ref{main_res_finite_average}, and is based on Lemma \ref{error_bd_arb}, Corollary \ref{Vgamma-eta}, and Lemma \ref{VH-eta}.

\begin{theorem}\label{main_res_finite_discount}
Given Assumption \ref{finite-ergodic}, if there exists a policy $\hat{\pi}$ such that $V^{\gamma,\star}-V^{\gamma}(\hat{\pi})\leq \epsilon$ for some $\epsilon>0$, then for any $H\geq 1$, 
\BEQ\label{main_res_finite_discount_bd}
\begin{split}
V^{H,\star}-V^H(\hat{\pi})\leq &\,\,2R_{\max}C_{p,S,A}\frac{\gamma}{H(1-\gamma)}\alpha_{p,S,A}^H+\epsilon\\
&\,+\dfrac{2R_{\max}}{H}\left(\dfrac{C_{p,S,A}(H(1-\gamma)+\alpha_{p,S,A}+|H(1-\gamma)-1|)}{1-\alpha_{p,S,A}}+D_{p,S,A}\right),
\end{split}
\EEQ
where  $C_{p,S,A}> 1$, $D_{p,S,A}> 1$ and $\alpha_{p,S,A}\in[0,1)$ are the constants in Proposition \ref{dobrushin_uniform} and Lemma \ref{VH-eta}, which 
depend only on the transition probability model $p$, the number of states $S$ and the number of actions $A$ of  the underlying MDP $\mathcal{M}$. 
\end{theorem}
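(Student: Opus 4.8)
The plan is to bound $V^{H,\star}-V^H(\hat\pi)$ by a telescoping decomposition that routes the finite-time-horizon quantities through their average-reward and discounted surrogates. Concretely, I would write
\[
V^{H,\star}-V^H(\hat\pi)=\bigl(V^{H,\star}-\eta^\star\bigr)+\bigl(\eta^\star-V^{\gamma,\star}\bigr)+\bigl(V^{\gamma,\star}-V^{\gamma}(\hat\pi)\bigr)+\bigl(V^{\gamma}(\hat\pi)-V^H(\hat\pi)\bigr),
\]
and then upper bound each of the four parenthesized terms separately (each by its absolute value).

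For the first term, Lemma \ref{VH-eta} gives $V^{H,\star}-\eta^\star\le 2R_{\max}D_{p,S,A}/H$. For the second term, Corollary \ref{Vgamma-eta} gives $\eta^\star-V^{\gamma,\star}\le 2(1-\gamma)R_{\max}C_{p,S,A}/(1-\alpha_{p,S,A})$. The third term is at most $\epsilon$ by hypothesis. For the fourth term, since $\hat\pi$ is a stationary policy, Lemma \ref{error_bd_arb} applies and yields $V^{\gamma}(\hat\pi)-V^H(\hat\pi)\le 2R_{\max}C_{p,S,A}\bigl(\tfrac{\gamma}{H(1-\gamma)}\alpha_{p,S,A}^H+\tfrac{\alpha_{p,S,A}+|H(1-\gamma)-1|}{(1-\alpha_{p,S,A})H}\bigr)$.

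Summing the four bounds, keeping the $\epsilon$ term and the $\tfrac{\gamma}{H(1-\gamma)}\alpha_{p,S,A}^H$ term as they stand, and grouping the remaining pieces over the common denominator $H$ --- namely rewriting $2(1-\gamma)R_{\max}C_{p,S,A}/(1-\alpha_{p,S,A})=\tfrac{2R_{\max}}{H}\cdot\tfrac{C_{p,S,A}H(1-\gamma)}{1-\alpha_{p,S,A}}$ and adding it to $\tfrac{2R_{\max}C_{p,S,A}(\alpha_{p,S,A}+|H(1-\gamma)-1|)}{H(1-\alpha_{p,S,A})}$ and to $2R_{\max}D_{p,S,A}/H$ --- yields exactly the right-hand side of \eqref{main_res_finite_discount_bd}. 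There is no genuine obstacle here: the only points requiring care are that $\hat\pi$ must be stationary for Lemma \ref{error_bd_arb} to be invoked (which holds for the soft-max policies produced by the algorithms), and the purely algebraic rearrangement that folds the $(1-\gamma)$ contribution together with the two $1/H$ contributions into the single bracketed $\tfrac{2R_{\max}}{H}(\cdots)$ term of the stated inequality.
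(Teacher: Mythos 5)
Your proposal is correct and matches the paper's own argument: the paper uses exactly the same four-term telescoping decomposition through $\eta^\star$ and $V^{\gamma,\star}$, bounding the pieces via Lemma \ref{VH-eta}, Corollary \ref{Vgamma-eta}, the hypothesis, and Lemma \ref{error_bd_arb}. Your algebraic regrouping of the $(1-\gamma)$ term over the common denominator $H$ reproduces the stated bound exactly.
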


Combining Theorems \ref{sample_complexity_Vgamma_thm} and  \ref{main_res_finite_discount}, we obtain the final convergence result for the Doubly Discounted REINFORCE algorithm (in parallel to Theorem \ref{main_conv_spin-up}). 

\begin{theorem}\label{discounted_reinforce_thm}
Given Assumptions \ref{finite-ergodic} and \ref{setting}, let $\gamma=1-H^{-\sigma}$ for some $\sigma\in(0,1)$. For any $\epsilon>0$, $\delta\in(0,1)$, set $\lambda$, $\beta_\lambda$ and $\alpha^k$ to be the same as in Theorem \ref{sample_complexity_Vgamma_thm}. Then for any $K$ such that \eqref{K_lower_bound} is satisfied, 
 \footnote{See Appendix \ref{formal_statement_dd_reinforce} for more explicit bounds on the constants involved in \eqref{K_lower_bound}.} 
with probability at least $1-\delta$, 
\BEQ\label{sample_complexity_VH}
\begin{split}
&\min_{k=0,\dots,K}V^{H,\star}-V^{H}(\pi_{\theta^k}) \leq O\left(\epsilon\min\left\{\left\|\frac{1}{\rho}\right\|_\infty, \frac{S}{1-\alpha_{p,S,A}}\right\}\right) + \emph{\text{\bf bias}}^{\rm DD}_H,
\end{split}
\EEQ
where
\[
\begin{split}
 \emph{\text{\bf bias}}^{\rm DD}_H&=\,O\left(\frac{C_{p,S,A}}{1-\alpha_{p,S,A}}H^{-\sigma}+D_{p,S,A}H^{-1}+\frac{S^3A^2}{1-\alpha_{p,S,A}}H^{\frac{1+3\sigma}{2}}e^{-H^{1-\sigma}/2}+C_{p,S,A}\alpha_{p,S,A}^HH^{-(1-\sigma)}\right).
\end{split}
\]
Here $C_{p,S,A}> 1$, $D_{p,S,A}> 1$ and $\alpha_{p,S,A}\in[0,1)$ are constants in Proposition \ref{dobrushin_uniform} and Lemma \ref{VH-eta}.
\end{theorem}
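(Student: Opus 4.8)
The plan is to chain the two preceding results: Theorem~\ref{sample_complexity_Vgamma_thm} supplies a high-probability bound on the \emph{discounted} sub-optimality $V^{\gamma,\star}-V^{\gamma}(\pi_{\theta^k})$ of the iterates produced by Algorithm~\ref{dd-reinforce}, and Theorem~\ref{main_res_finite_discount} converts any discounted guarantee into a \emph{finite-time-horizon} one. The remaining work is to specialize to $\gamma=1-H^{-\sigma}$ and collect the resulting terms into the stated asymptotic orders; the index $K$, the parameters $\lambda,\beta_\lambda,\alpha^k$, and the event of probability at least $1-\delta$ are inherited verbatim from Theorem~\ref{sample_complexity_Vgamma_thm}. \emph{Step 1 (transfer to the finite-horizon metric).} Let $k^\star\in\arg\min_{0\le k\le K}\bigl(V^{\gamma,\star}-V^{\gamma}(\pi_{\theta^k})\bigr)$ and put $\hat\pi=\pi_{\theta^{k^\star}}$. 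Theorem~\ref{sample_complexity_Vgamma_thm} gives, on the good event,
\[
V^{\gamma,\star}-V^{\gamma}(\hat\pi)\le \min\Bigl\{\bigl\|\tfrac{d_{\rho}^{\gamma,\pi^{\gamma,\star}}}{\rho}\bigr\|_{\infty},\ \tfrac{S\|d_{\rho}^{\gamma,\pi^{\gamma,\star}}\|_{\infty}}{1-\alpha_{p,S,A}}\Bigr\}\bigl(\epsilon+8S^2A^2\Delta+4SA\sqrt{\Delta\epsilon+4S^2A^2\Delta^2+G\Delta}\bigr)=:\epsilon'.
\]
Since $0\le d_{\rho}^{\gamma,\pi}(s)\le 1$ for every $s$, the leading factor is at most $\min\{\|1/\rho\|_\infty,\ S/(1-\alpha_{p,S,A})\}$, which is independent of $\gamma$. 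Applying Theorem~\ref{main_res_finite_discount} to $\hat\pi$ (it holds for \emph{any} policy meeting its hypothesis) and using $\min_{0\le k\le K}\bigl(V^{H,\star}-V^H(\pi_{\theta^k})\bigr)\le V^{H,\star}-V^H(\hat\pi)$ yields
\[
\min_{0\le k\le K}\bigl(V^{H,\star}-V^H(\pi_{\theta^k})\bigr)\le 2R_{\max}C_{p,S,A}\tfrac{\gamma}{H(1-\gamma)}\alpha_{p,S,A}^H+\epsilon'+\tfrac{2R_{\max}}{H}\Bigl(\tfrac{C_{p,S,A}\bigl(H(1-\gamma)+\alpha_{p,S,A}+|H(1-\gamma)-1|\bigr)}{1-\alpha_{p,S,A}}+D_{p,S,A}\Bigr).
\]

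\emph{Step 2 (substitute $\gamma=1-H^{-\sigma}$ and collect orders).} With $1-\gamma=H^{-\sigma}$ one has $\gamma^H=(1-H^{-\sigma})^H\le e^{-H^{1-\sigma}}$, hence $\Delta=2\tfrac{\gamma^H}{1-\gamma}\bigl(H+\tfrac{1}{1-\gamma}\bigr)=O\bigl(H^{1+\sigma}e^{-H^{1-\sigma}}\bigr)$ and $G=\tfrac{2(1+B(1-\gamma))}{(1-\gamma)^2}=O(H^{2\sigma})$, so $G\Delta=O\bigl(H^{1+3\sigma}e^{-H^{1-\sigma}}\bigr)$. Since $\Delta\to 0$, the square-root term in $\epsilon'$ is governed by $\sqrt{G\Delta}$, giving $\epsilon+8S^2A^2\Delta+4SA\sqrt{\,\cdots\,}=\epsilon+O\bigl(SA\,H^{\frac{1+3\sigma}{2}}e^{-H^{1-\sigma}/2}\bigr)$; multiplying by the leading factor $\le S/(1-\alpha_{p,S,A})$ produces $\epsilon'=O\bigl(\epsilon\min\{\|1/\rho\|_\infty,\tfrac{S}{1-\alpha_{p,S,A}}\}\bigr)+O\bigl(\tfrac{S^3A^2}{1-\alpha_{p,S,A}}H^{\frac{1+3\sigma}{2}}e^{-H^{1-\sigma}/2}\bigr)$. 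For $H$ large enough that $H^{1-\sigma}\ge 1$, we have $|H(1-\gamma)-1|=H^{1-\sigma}-1$, so the bracketed term of Step~1 is $O\bigl(\tfrac{C_{p,S,A}}{1-\alpha_{p,S,A}}H^{1-\sigma}+D_{p,S,A}\bigr)$ and contributes $O\bigl(\tfrac{C_{p,S,A}}{1-\alpha_{p,S,A}}H^{-\sigma}+D_{p,S,A}H^{-1}\bigr)$ after division by $H$; finally $2R_{\max}C_{p,S,A}\tfrac{\gamma}{H(1-\gamma)}\alpha_{p,S,A}^H\le 2R_{\max}C_{p,S,A}\alpha_{p,S,A}^HH^{-(1-\sigma)}$. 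Summing the four contributions reproduces exactly $\mathbf{bias}^{\rm DD}_H$, and the iteration requirement \eqref{K_lower_bound} and the confidence $1-\delta$ are unchanged.

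\emph{Main obstacle.} The two substantive theorems already carry the analytic load, so the residual difficulty is bookkeeping: verifying $(1-H^{-\sigma})^H\le e^{-H^{1-\sigma}}$, confirming that $\sqrt{G\Delta}$ — rather than $\sqrt{\Delta\epsilon}$, $\Delta$, or $\Delta^2$ — dominates the stochastic-error term for large $H$, correctly resolving the absolute value $|H(1-\gamma)-1|$, and making the $O(\cdot)$ in \eqref{K_lower_bound} explicit in $H$ through the $\gamma=1-H^{-\sigma}$ dependence of $D$, $C$, $\beta_\lambda$, $\lambda$, $\Delta$ and $G$, which I would relegate to Appendix~\ref{formal_statement_dd_reinforce}.
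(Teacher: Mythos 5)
Your proposal is correct and follows exactly the paper's route: instantiate Theorem~\ref{sample_complexity_Vgamma_thm} at the discounted-optimal iterate, feed the resulting policy into Theorem~\ref{main_res_finite_discount}, bound the visitation-ratio factor by $\min\{\|1/\rho\|_\infty,\,S/(1-\alpha_{p,S,A})\}$, and then substitute $\gamma=1-H^{-\sigma}$ using $\Delta\le 4H^{1+\sigma}e^{-H^{1-\sigma}}$ — which is precisely the one inequality the paper's appendix proof singles out before declaring the rest elementary simplification. The bookkeeping of the four bias contributions matches the stated $\mathbf{bias}^{\rm DD}_H$.
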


\paragraph{Comparison with  DAE REINFORCE.}
Here we compare the convergence of (truncated) DAE REINFORCE (\cf Algorithm \ref{spinup-reinforce}) and Doubly Discounted REINFORCE (\cf Algorithm \ref{dd-reinforce}). Note that in both \eqref{sample_complexity_VH_spinup} and \eqref{sample_complexity_VH}, the global sub-optimality bounds consist of two parts: a vanishing $\epsilon$ term that goes to zero as the number of iterations $K$ goes to infinity and a remaining bias term ($\emph{\text{\bf bias}}^{\rm DAE}_H$ and $\emph{\text{\bf bias}}^{\rm DD}_H$, respectively) resulting from the fictitious discount factor. 
Below we focus on comparing the bias terms 
with the same fictitious discount factor $\gamma=1-H^{-\sigma}$, with $\sigma\in(0,1)$.   Recall that  
\[
\begin{split}
&\emph{\text{\bf bias}}^{\rm DAE}_H=O\left(\frac{S^2AC_{p,S,A}^3}{(1-\alpha_{p,S,A})^4}H^{-\frac{\sigma}{2}}\right) + \emph{\text{lower order terms in $H$}},\\
&\emph{\text{\bf bias}}^{\rm DD}_H=\,O\left(\frac{C_{p,S,A}}{1-\alpha_{p,S,A}}H^{-\sigma} \right) + \emph{\text{lower order terms in $H$}}.
\end{split}
\]
Comparing the above two bounds, 
we see the power of the additional discounting. Indeed, with further discounting, Doubly Discounted REINFORCE improves over DAE REINFORCE, especially in terms of $H$ (from $H^{-\sigma/2}$ to $H^{-\sigma}$) as it grows. More precisely,  the constant before the $H^{-\sigma}$ term is improved from $O(S^3A^2C_{p,S,A}^2/(1-\alpha_{p,S,A})^3)$ to $O(C_{p,S,A}/(1-\alpha_{p,S,A}))$, the constant before the $H^{-1}$ term is improved from $O(D_{p,S,A}+C_{p,S,A}/(1-\alpha_{p,S,A})$ to $O(D_{p,S,A})$, while the $H^{-\sigma/2}$ term is improved to be exponentially decaying as $H$ grows. 

\section{Conclusion and extensions}\label{extension}
This paper  focuses on two concrete fictitious discount algorithms in the context of policy gradient methods, namely DAE REINFORCE and Doubly Discounted REINFORCE. Rigorous convergence analyses are established for the two algorithms, which, for the first time, shed light on the non-asymptotic global convergence of fictitious discount algorithms.

 Given recent development in (global) convergence analysis of algorithms in the discounted setting  \cite{agarwal2019theory,wang2019neural,shani2019adaptive} and in the average reward framework \cite{neu2017unified,POLITEX},  
it is natural to extend our study for natural policy gradient \cite{kakade2001natural}, natural actor-critic \cite{peters2008natural}, TRPO \cite{TRPO}, PPO \cite{PPO}, as well as deep learning based algorithms such as DQN \cite{DQN_original} and DDPG \cite{DDPG}. 

Meanwhile, it remains to see if one can generalize our work to the general weakly communicating MDPs \cite{regal} or MDPs with more general state and action spaces, and to remove the need for an exploratory initial distribution (\ie, $\rho>0$ component-wisely) (\eg, by combining with the policy cover approach in \cite{PC-PG}).  

\bibliographystyle{plain}
\bibliography{fderl}

\newpage


\appendix
\section*{Appendix}
\addcontentsline{toc}{section}{Appendix}

\section{Preliminary facts}
In this section, we show the proofs of results in \S \ref{prelim}: Propositions \ref{finite-ergodic} and \ref{kakade_prop}, Lemmas \ref{error_bd_arb}, \ref{Vgamma-eta-pi}, \ref{VH-eta-pi}, \ref{VH-eta}, \ref{grad_dom_average} and \ref{smooth_pg_ave}. For ease of notation, we define $c=H(1-\gamma)$ so that $\gamma=1-c/H$ and $c\in(0,H]$.

\paragraph{Notation and terminology.} Here and below we use $P_{\pi}\in\reals^{S\times S}$ to denote the transition probability of the Markov chain induced by policy $\pi$, \ie, $P_{\pi}(s,s')=\sum_{a\in\mathcal{A}}p(s'|s,a)\pi(a|s)$. In general, a matrix $P\in\reals^{S\times S}$ is called a stochastic matrix if $P(s,s')\geq0$ for any $s,\,s'\in\mathcal{S}$ and $\sum_{s'\in\mathcal{S}}P(s,s')=1$ for any $s\in\mathcal{S}$. If in addition we also have $P(s,s')>0$ for any $s,\,s'\in\mathcal{S}$, then we say that $P$ is a positive stochastic matrix.  We say that a policy $\pi$ is deterministic if for any $s\in\mathcal{S}$, $\pi(a_s|s)=1$ for some $a_s\in\mathcal{A}$. Unless otherwise stated, all state distributions (\eg, $\rho$) are row vectors.

We also introduce the following notation to be used in the proof. The first three quantities are defined for the discounted setting, while the last three quantities are defined for the average reward setting. In all cases, $\tau=(s_0,a_0,r_0,\dots,s_h,a_h,r_h,\dots)$ is a trajectory sampled under policy $\pi$. 
\begin{itemize}
\item discounted value function: 
\[
V^{\gamma,\pi}(s)=(1-\gamma)\Expect\left[\sum_{h=0}^{\infty}\gamma^hr(s_h,a_h)\Big|s_0=s,\pi\right].
\] 
\item  discounted action-value function: 
\[
Q^{\gamma,\pi}(s,a)=(1-\gamma)\Expect\left[\sum_{h=0}^{\infty}\gamma^hr(s_h,a_h)\Big|s_0=s,a_0=a,\pi\right].
\]
\item discounted advantage function: $A^{\gamma,\pi}(s,a)=Q^{\gamma,\pi}(s,a)-V^{\gamma,\pi}(s)$.
\item average reward bias value function: 
\[
\bar{V}^{\pi}(s)=\lim_{H\rightarrow\infty}\Expect\left[\sum_{h=0}^{H-1}r(s_h,a_h)-\eta(\pi)\Big|s_0=s,\pi\right].
\] 
\item  average reward action-value function: 
\[
\bar{Q}^{\pi}(s,a)=\lim_{H\rightarrow\infty}\Expect\left[\sum_{h=0}^{H-1}r(s_h,a_h)-\eta(\pi)\Big|s_0=s,a_0=a,\pi\right].
\]
\item average reward advantage function: $\bar{A}^{\pi}(s,a)=\bar{Q}^{\pi}(s,a)-\bar{V}^{\pi}(s)$. 
\end{itemize}
Note that we have $V^{\gamma}(\pi)=\sum_{s\in\mathcal{S}}\rho(s)V^{\gamma,\pi}(s)$. 

\subsection{Proof of Proposition \ref{dobrushin_uniform}}
We first show two propositions which will be used in the proof of Proposition \ref{dobrushin_uniform}.

The following well-known fact about the convergence rate of finite and ergodic Markov chains (also known as Dobrushin's inequality) is central for our proofs. 
\begin{proposition}\label{dobrushin}
Let $P\in\reals^{S\times S}$ be a positive stochastic matrix. 
Then for any distribution $\rho\in\mathcal{P}(\mathcal{S})$ (viewed as a row vector of length $S$), we have for any $h\geq 0$,  
\BEQ\label{d_TV_alpha}
d_{\rm TV}(\rho P^h,\mu_P)\leq \alpha_{P}^h,
\EEQ
where $d_{\rm TV}$ is the total variation distance between two measures, $\mu_{P}$ is the (unique) stationary distribution of the transition matrix $P$, and $\alpha_P=1-S\min_{s,\,s'\in\mathcal{S}}P(s,s')\in[0,1)$. 
\end{proposition}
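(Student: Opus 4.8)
The plan is to prove this by the classical Dobrushin ergodic coefficient argument. For a stochastic matrix $P$ let $P(s,\cdot)$ denote its $s$-th row viewed as a distribution on $\mathcal{S}$, and introduce the \emph{Dobrushin coefficient} $\tau(P):=\max_{s,s'\in\mathcal{S}}d_{\rm TV}(P(s,\cdot),P(s',\cdot))$. The argument then rests on three ingredients: (i) a one-step contraction estimate $d_{\rm TV}(\mu P,\nu P)\le\tau(P)\,d_{\rm TV}(\mu,\nu)$ valid for all probability row vectors $\mu,\nu$; (ii) the elementary bound $\tau(P)\le\alpha_P$; and (iii) submultiplicativity of $\tau$ together with existence/uniqueness of the stationary distribution, which together upgrade (i)--(ii) to the claimed geometric decay.

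For ingredient (i), which is the crux, I would use the Hahn--Jordan decomposition of the signed measure $\mu-\nu$. Set $d:=d_{\rm TV}(\mu,\nu)=\tfrac12\|\mu-\nu\|_1$; if $d=0$ the claim is trivial, so assume $d>0$ and write $\mu-\nu=d(p-q)$, where $p:=(\mu-\nu)^+/d$ and $q:=(\mu-\nu)^-/d$ are probability vectors (supported on disjoint sets, and each of total mass $1$ since $\sum_j(\mu(j)-\nu(j))^+=\sum_j(\mu(j)-\nu(j))^-=d$). Then $(\mu-\nu)P=d(pP-qP)$, so $d_{\rm TV}(\mu P,\nu P)=d\cdot d_{\rm TV}(pP,qP)$. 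Since $pP=\sum_s p(s)P(s,\cdot)$ and $qP=\sum_{s'}q(s')P(s',\cdot)$ are convex combinations of rows of $P$, expanding the difference as $\sum_{s,s'}p(s)q(s')\,(P(s,\cdot)-P(s',\cdot))$ and using the triangle inequality gives $d_{\rm TV}(pP,qP)\le\sum_{s,s'}p(s)q(s')\,d_{\rm TV}(P(s,\cdot),P(s',\cdot))\le\tau(P)$, hence $d_{\rm TV}(\mu P,\nu P)\le d\,\tau(P)$.

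For ingredient (ii), I would invoke the identity $d_{\rm TV}(\mu,\nu)=1-\sum_{j\in\mathcal{S}}\min\{\mu(j),\nu(j)\}$ for probability vectors, so that $d_{\rm TV}(P(s,\cdot),P(s',\cdot))=1-\sum_j\min\{P(s,j),P(s',j)\}\le 1-\sum_j\min_{s''}P(s'',j)\le 1-S\min_{s'',j}P(s'',j)=\alpha_P$ (using that there are $S$ columns); positivity of $P$ and $\min_j P(s,j)\le 1/S$ give $\alpha_P\in[0,1)$. Finally, applying (i) to rows shows $\tau(P_1P_2)\le\tau(P_1)\tau(P_2)$, hence $\tau(P^h)\le\alpha_P^h$; in particular $d_{\rm TV}(\rho P^{h+1},\rho P^h)\le\tau(P^h)\,d_{\rm TV}(\rho P,\rho)\le\alpha_P^h$, so $(\rho P^h)_{h\ge0}$ is Cauchy in $\ell^1$ and converges to a limit $\mu_P$, which satisfies $\mu_P P=\mu_P$ by continuity and is the unique such vector (if $\mu,\mu'$ are both stationary then $d_{\rm TV}(\mu,\mu')=d_{\rm TV}(\mu P^h,\mu' P^h)\le\alpha_P^h\to0$); alternatively one may just cite Perron--Frobenius, as a positive matrix is irreducible and aperiodic. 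Then $d_{\rm TV}(\rho P^h,\mu_P)=d_{\rm TV}(\rho P^h,\mu_P P^h)\le\tau(P)^h\,d_{\rm TV}(\rho,\mu_P)\le\alpha_P^h$, since $d_{\rm TV}(\rho,\mu_P)\le1$. I expect the contraction estimate in (i) — specifically getting the Hahn--Jordan bookkeeping exactly right — to be the only nontrivial step; the rest is routine.
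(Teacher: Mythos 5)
Your proof is correct, and it is precisely the classical Dobrushin ergodic-coefficient argument that the paper itself does not reproduce but instead delegates to the cited standard references (Rosenthal, Wang); all three ingredients — the Hahn--Jordan contraction estimate, the bound $\tau(P)\le\alpha_P$ via $d_{\rm TV}(\mu,\nu)=1-\sum_j\min\{\mu(j),\nu(j)\}$, and submultiplicativity plus the Cauchy argument for $\mu_P$ — are carried out correctly. Nothing further is needed.
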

A proof of the above proposition can be found in standard textbooks \cite{rosenthal1995convergence, wang2014convergence}.       

Another important property is the following proposition for representing an arbitrary (randomized) policy as a convex combination of finitely many deterministic policies. 
\begin{proposition}\label{convex_comb}
Suppose that $|\mathcal{S}|=S<\infty$ and $|\mathcal{A}|=A<\infty$. Let $\pi:\mathcal{S}\rightarrow\mathcal{P}(\mathcal{A})$ be an arbitrary policy. Then there exist $n_{S,A}=S(A-1)+1$ deterministic policies $\pi_1,\dots,\pi_{n_{S,A}}$ and nonnegative constants $c_1,\dots,c_{SA}$, such that
\[
\pi(a|s)=\sum\nolimits_{i=1}^{n_{S,A}}c_i\pi_i(a|s),\quad \forall\,s\in\mathcal{S},\,a\in\mathcal{A},
\]
$\sum_{i=1}^{n_{S,A}}c_i=1$ and $c_i\geq 0$ ($i=1,\dots,n_{S,A}$).
\end{proposition}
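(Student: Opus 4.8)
\textbf{Proof proposal for Proposition \ref{convex_comb}.}

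The plan is to proceed by a dimension-counting argument using Carathéodory's theorem. First I would observe that the set of all randomized policies, viewed as a subset of $\reals^{SA}$ via the coordinates $(\pi(a|s))_{s\in\mathcal{S},a\in\mathcal{A}}$, is exactly the Cartesian product of $S$ copies of the probability simplex $\mathcal{P}(\mathcal{A})\subseteq\reals^A$. Each simplex $\mathcal{P}(\mathcal{A})$ lives in the affine hyperplane $\{x\in\reals^A:\sum_a x_a=1\}$, which has dimension $A-1$; hence the product of $S$ such simplices lies in an affine subspace of dimension $S(A-1)$. The extreme points of $\mathcal{P}(\mathcal{A})$ are the $A$ vertices $e_1,\dots,e_A$, and correspondingly the extreme points of the product polytope are exactly the deterministic policies (those $\pi$ with $\pi(a_s|s)=1$ for some $a_s$, for every $s$). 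Since the product of polytopes is a polytope whose vertex set is the product of the vertex sets, every randomized policy $\pi$ is a convex combination of deterministic policies.

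Next I would invoke Carathéodory's theorem in the sharper affine form: any point in a convex set contained in an affine subspace of dimension $d$ can be written as a convex combination of at most $d+1$ extreme points of that set. Here $d = S(A-1)$, so $\pi$ can be expressed as a convex combination of at most $S(A-1)+1 = n_{S,A}$ deterministic policies $\pi_1,\dots,\pi_{n_{S,A}}$, with weights $c_1,\dots,c_{n_{S,A}}\geq 0$ summing to $1$. (If strictly fewer than $n_{S,A}$ deterministic policies suffice, one simply pads the list by assigning zero weight to arbitrary extra deterministic policies, so the count $n_{S,A}$ can always be attained exactly.) Writing this out coordinatewise gives precisely $\pi(a|s)=\sum_{i=1}^{n_{S,A}}c_i\pi_i(a|s)$ for all $s\in\mathcal{S}$, $a\in\mathcal{A}$, which is the claim.

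The only mild subtlety — and the step I would be most careful about — is the affine (rather than the naive) version of Carathéodory's theorem: the naive bound would give $SA+1$, not $S(A-1)+1$, so one must use the fact that the product polytope is contained in an affine subspace of dimension $S(A-1)$ and apply Carathéodory relative to that subspace. This is standard, but it is the one place where the precise constant $n_{S,A}$ comes from. Everything else is bookkeeping: verifying that vertices of a product polytope are products of vertices, and identifying those products with deterministic policies. I do not expect any genuine obstacle here; the result is essentially a packaging of elementary convex geometry.
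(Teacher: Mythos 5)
Your proof is correct, but it takes a genuinely different route from the paper's. The paper proves the statement by induction on the ``index'' of a policy, defined as $\sum_{s\in\mathcal{S}}(|\{a:\pi(a|s)>0\}|-1)\leq S(A-1)$: at each step it locates the smallest positive entry $\pi(a_{\min}|s_{\min})$, peels off a deterministic policy supported on that entry with weight $c=\pi(a_{\min}|s_{\min})$, and observes that the renormalized remainder is again a policy with strictly smaller index. This is an explicit, elementary greedy construction that produces the decomposition algorithmically and yields the count $S(A-1)+1$ as (index)$+1$. Your argument instead identifies the policy set with the product of $S$ copies of the simplex $\mathcal{P}(\mathcal{A})$, notes that its extreme points are exactly the deterministic policies and that its affine hull has dimension $S(A-1)$, and invokes the affine form of Carath\'eodory's theorem (together with the fact that a compact convex polytope is the convex hull of its vertices) to get the same bound. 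Both are valid; your version is shorter and makes transparent where the constant $S(A-1)+1$ comes from (it is sharp exactly because of the $S$ affine constraints $\sum_a\pi(a|s)=1$), while the paper's version is self-contained, avoids citing Carath\'eodory/Minkowski, and is constructive. You are also right that the only delicate point in your route is using the affine rather than the ambient-dimension Carath\'eodory bound, since the naive bound $SA+1$ would not match the claimed $n_{S,A}$.
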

The above proposition is implied by the proof of \cite[Theorem 5.1]{feinberg1996constrained}. For self-contained-ness, we also provide a simple proof by induction below.
\begin{proof}
Define the index of an arbitrary (randomized) policy $\pi$ as 
\[
\sum\nolimits_{s\in\mathcal{S}}(|\{a\in\mathcal{A}\,|\,\pi(a|s)>0\}|-1),
\] \ie, the difference between the total number of non-zero entries in $\pi$ (when viewed as a vector of length $SA$ or a matrix of size $S\times A$) and the total number of non-zero entries in a deterministic policy (\ie, the number of states). By definition, the index of a policy is at most $S(A-1)$. Below we prove the following claim, which immediately implies the desired result of Proposition \ref{convex_comb} by taking index equal to $S(A-1)$:
\begin{flushleft}
\textbf{Claim 1.} \emph{
For a policy with index $m$, there exist $m+1$ deterministic policies $\pi_1,\dots,\pi_{m+1}$ and nonnegative constants $c_1,\dots,c_{m+1}$, such that
\[
\pi(a|s)=\sum\nolimits_{i=1}^{m+1}c_i\pi_i(a|s),\quad \forall\,s\in\mathcal{S},\,a\in\mathcal{A},
\]
$\sum_{i=1}^{m+1}c_i=1$ and $c_i\geq 0$ ($i=1,\dots,m+1$).
}
\end{flushleft}

We prove this claim by induction on the index of $\pi$. 

\paragraph{Base step.} When the index of $\pi$ is $0$, the policy $\pi$ is deterministic, and hence we can simply take $c_1=1$ and $\pi_1=\pi$. 

\paragraph{Induction step.} Suppose that Claim 1 holds for index $m-1$ ($m\geq 1$). Then for a policy with index $m$, let 
\[
(s_{\min},a_{\min})\in\argmin\nolimits_{s\in\mathcal{S},\,a\in\mathcal{A},\,\pi(a|s)>0}\pi(a|s),
\]
and $\pi_{\min}=\pi(a_{\min}|s_{\min})\in(0,1)$. Note that $\pi_{\min}<1$ since otherwise the index would be $0$, which contradicts the assumption that $m\geq 1$. 

Now define $\pi_{m+1}$ as a deterministic policy such that $\pi(a_{\min}|s_{\min})=1$ and that for any $s\neq s_{\min}$, $\pi_{m+1}(a_s|s)=1$ for some (arbitrary) $a_s$ with  $\pi(a_s|s)>0$. Note that such a policy exists by the trivial fact that for any $s\in\mathcal{S}$, $\pi(a|s)>0$ for some $a\in\mathcal{A}$. By taking $c_{m+1}=\pi_{\min}\in(0,1)$, we can define a policy $\pi'$ with
\[
\pi'(a|s)=(\pi(a|s)-c_{m+1}\pi_{m+1})/(1-c_{m+1}),\quad \forall\,s\in\mathcal{S},\,a\in\mathcal{A}.
\]
It's easy to see that $\pi'$ is indeed a policy (\ie, $\pi'(a|s)\geq0$ for any $s\in\mathcal{S}$, $a\in\mathcal{A}$ and $\sum_{a\in\mathcal{A}}\pi'(a|s)=1$). In addition, by definition of $c_{m+1}$ and $\pi_{m+1}$, we also have 
\[
\{a\in\mathcal{A}\,|\,\pi'(a|s_{\min})>0\}\subseteq \{a\in\mathcal{A}\,|\,\pi(a|s_{\min})>0\}\backslash\{a_{\min}\}
\]
and 
\[
\{a\in\mathcal{A}\,|\,\pi'(a|s)>0\}\subseteq \{a\in\mathcal{A}\,|\,\pi(a|s)>0\},\quad \forall s\in\mathcal{S}, \,s\neq s_{\min},
\]
and hence the index of $\pi'$ is at most $m-1$. By the induction hypothesis, there exist $m$ deterministic policies $\pi_1,\dots,\pi_m$ and nonnegative constants $c_1',\dots,c_m'$, such that
\[
\pi'(a|s)=\sum\nolimits_{i=1}^{m}c_i'\pi_i(a|s),\quad \forall\,s\in\mathcal{S},\,a\in\mathcal{A},
\]
$\sum_{i=1}^{m}c_i'=1$ and $c_i'\geq 0$ ($i=1,\dots,m$), which immediately implies that
\[
\pi(a|s)=\sum\nolimits_{i=1}^{m+1}c_i\pi_i(a|s),\quad \forall\,s\in\mathcal{S},\,a\in\mathcal{A},
\]
with $c_i=(1-c_{m+1})c_i'$ ($i=1,\dots,m$). Since $c_i\geq 0$ ($i=1,\dots,m+1$) and $\sum_{i=1}^{m+1}c_i=1$ by definition, we have proved the claim for index $m$. By induction, this completes the proof.
\end{proof}

\begin{proof}[Proof of Proposition \ref{dobrushin_uniform}]
Let $\Pi_{\rm det}$ be the set of all deterministic policies $\pi$. 
By the finiteness of the state and action spaces, $\Pi_{\rm det}$ is also a finite set. For any $\pi\in\Pi_{\rm det}$, since $P_{\pi}$ is irreducible and aperiodic, there exists a positive integer $m_{\pi}$ such that $P_{\pi}^m$ is componentwisely positive for any $m\geq m_{\pi}$. Now by the finiteness of $\Pi_{\rm det}$, we can define $m_p=\max_{\pi\in\Pi_{\rm det}}m_{\pi}<\infty$, and then $P_{\pi}^m$ is componentwisely positive for any $\pi\in\Pi_{\rm det}$ and $m\geq m_p$. Accordingly, we also define 
\[
p_{\min}=\min_{\pi\in\Pi_{\rm det},\,s,\,s'\in\mathcal{S}}P_{\pi}^{m_p}(s,s')>0.
\] 

By Proposition \ref{convex_comb}, for any (randomized) policy $\pi$, there exist $n_{S,A}=S(A-1)+1$ policies $\pi_1,\dots,\pi_{n_{S,A}}\in\Pi_{\rm det}$ and nonnegative constants $c_1,\dots,c_{n_{S,A}}$, such that
\[
\pi(a|s)=\sum\nolimits_{i=1}^{n_{S,A}}c_i\pi_i(a|s),\quad \forall\,s\in\mathcal{S},\,a\in\mathcal{A},
\]
$\sum_{i=1}^{n_{S,A}}c_i=1$ and $c_i\geq 0$ ($i=1,\dots,n_{S,A}$).
By the linearity of $P_{\pi}$ in $\pi$, we have
\[
P_{\pi}=\sum\nolimits_{i=1}^{n_{S,A}}c_{\pi_i}P_{\pi_i}.
\]
This implies that for any $s,\,s'\in\mathcal{S}$, we have 
\BEQ\label{uniform_lb_P}
\begin{split}
P_{\pi}^{m_p}(s,s')&\geq \sum\nolimits_{i=1}^{n_{S,A}}c_{\pi_i}^{m_p}P_{\pi_i}^{m_p}(s,s')\geq 
p_{\min}\sum\nolimits_{i=1}^{n_{S,A}}c_{\pi_i}^{m_p}
\\
\text{(by convexity of $x^{m_p}$ for $x\geq 0$) }&\geq p_{\min}n_{S,A}(1/n_{S,A})^{m_p}=p_{\min}/n_{S,A}^{m_p-1}.
\end{split}
\EEQ

Accordingly, by Proposition \ref{dobrushin}, for any (randomized) policy $\pi:\mathcal{S}\rightarrow\mathcal{P}(\mathcal{A})$, there exists a constant 
\[
\alpha_{\pi}=1-S\min_{s,\,s'\in\mathcal{S}}P_{\pi}^{m_p}(s,s'), 
\] 
such that for any $r\geq 0$, 
\[
d_{\rm TV}(\rho (P_{\pi}^{m_p})^r,\mu_{\pi})\leq \alpha_{\pi}^r.
\]
By \eqref{uniform_lb_P}, we have $\alpha_{\pi}\in[0, 1-Sp_{\min}/n_{S,A}^{m_p-1}]\subseteq[0,1)$, which implies that 
\BEQ\label{d_TV_Pmp}
d_{\rm TV}(\rho (P_{\pi}^{m_p})^r,\mu_{\pi})\leq \tilde{\alpha}_{p,S,A}^r,
\EEQ
where $\tilde{\alpha}_{p,S,A}=1-Sp_{\min}/n_{S,A}^{m_p-1}\in[0,1)$. 


Now recall that we have 
\[
d_{\rm TV}(\mu P,\nu P)\leq d_{\rm TV}(\mu,\nu)
\] 
for any $\mu,\,\nu\in\mathcal{P}(\mathcal{S})$ and stochastic matrix $P$. 
Hence by \eqref{d_TV_Pmp} 
and by writing an arbitrary nonnegative integer $h$ as $h=rm_p+k$ ($0\leq k\leq m_p-1$), we have that  
for any (randomized) policy $\pi$, 
\[
d_{\rm TV}(\rho P_{\pi}^h,\mu_{{\pi}})\leq d_{\rm TV}(\rho (P_{\pi}^{m_p})^r,\mu_{{\pi}})\leq \tilde{\alpha}_{p,S,A}^r\leq C_{p,S,A}\alpha_{p,S,A}^{h},
\]
where $\alpha_{p,S,A}=\tilde{\alpha}_{p,S,A}^{1/m_p}$ and $C_{p,S,A}=1/\tilde{\alpha}_{p,S,A}$. This completes the proof.
\end{proof}

\subsection{Proofs of Lemmas \ref{error_bd_arb}, \ref{Vgamma-eta-pi}, \ref{VH-eta-pi} and  \ref{VH-eta}.}\label{proofs_subopt_lemmas}
\begin{proof}[Proof of Lemma \ref{error_bd_arb}]
By reorganization of the summations, we have 
\BEQ\label{Vgamma-repr}
\begin{split}
V^{\gamma}({\boldsymbol\pi})
&=(1-\gamma)\sum_{h=0}^{\infty}\gamma^h\sum_{s\in\mathcal{S},\,a\in\mathcal{A}}\prob^{\pi}(s_h=s,a_h=a|s_0\sim\rho)r(s,a)\\
&=(1-\gamma)\sum_{h=0}^{\infty}\gamma^h\sum_{s\in\mathcal{S},\,a\in\mathcal{A}}\left[\rho P_{\pi}^h\right]_s\pi(a|s)r(s,a)\\
&=\sum_{s\in\mathcal{S}}w^{\gamma}(s;\pi)\sum_{a\in\mathcal{A}}\pi(a|s)r(s,a),
\end{split}
\EEQ
where 
\[
\begin{split}
w^{\gamma}(s;\pi)&=(1-\gamma)\sum_{h=0}^{\infty}\left[\rho (\gamma P_{\pi})^h\right]_s=(1-\gamma)\left[\rho(I-\gamma P_{\pi})^{-1}\right]_s,\\
\end{split}
\]
 $\prob^{\pi}(s_h=s,a_h=a|s_0\sim\rho)$  is the probability of arriving at state $s$ and action $a$
in step $h$ starting from $s_0\sim \rho$ following policy $\pi$, and for a vector $x\in\reals^S$, we use $x_s$ or $[x]_s$ alternatively to denote its $s$-th element. Note that here $w^{\gamma}(s;\pi)=d_{\rho}^{\gamma,\pi}(s)$, and we use them alternatively throughout the appendix. In fact, for most of the time in the appendix, we use the former for simplicity and clarity (as $\rho$ is always fixed in our paper, while $\gamma$ may change) except for the final statements. 



Similarly, we also have 
\BEQ\label{VH_rewrite}
\begin{split}
V^H(\pi) 
&=\dfrac{1}{H}\sum_{h=0}^{H-1}\sum_{s\in\mathcal{S},\,a\in\mathcal{A}}\left[\rho P_\pi^h\right]_s\pi(a|s)r(s,a)=\sum_{s\in\mathcal{S}}w^H(s;\pi)\sum_{a\in\mathcal{A}}\pi(a|s)r(s,a),
\end{split}
\EEQ
where 
\[
w^H(s;\pi)=\dfrac{1}{H}\sum_{h=0}^{H-1}\left[\rho P_{\pi}^h\right]_s.
\]

Hence we have
\[
\begin{split}
\left|V^{\gamma}(\pi)-V^H(\pi)\right|
&\leq \sum_{s\in\mathcal{S}}|w^{\gamma}(s;\pi)-w^H(s;\pi)|\sum_{a\in\mathcal{A}}|r(s,a)|\pi(a|s)\\
&\leq R_{\max}\sum_{s\in\mathcal{S}}\left|(1-\gamma)\left[\rho(I-\gamma P_{\pi})^{-1}\right]_s-\dfrac{1}{H}\sum_{h=0}^{H-1}\left[\rho P_{\pi}^h\right]_s\right|\sum_{a\in\mathcal{A}}\pi(a|s)\\
&=R_{\max}\left\|\rho\left((1-\gamma)I-\dfrac{1}{H}\sum_{h=0}^{H-1}P_{\pi}^h(I-\gamma P_{\pi})\right)(I-\gamma P_{\pi})^{-1}\right\|_1.
\end{split}
\]

By Proposition \ref{dobrushin_uniform}, we have
\[
d_{\rm TV}(\rho P_{\pi}^h,\mu_{\pi})\leq C_{p,S,A}\alpha_{p,S,A}^h, 
\]
for some constant $\alpha_{p,S,A}\in[0,1)$ that depends only on $p,\,S,\,A$. 

Noticing that $1-\gamma=c/H$, we have
\[
\begin{split}
\rho\left((1-\gamma)I-\dfrac{1}{H}\sum_{h=0}^{H-1}P_{\pi}^h(I-\gamma P_{\pi})\right)
&=\left(1-\gamma-\dfrac{1}{H}\right)\rho-\dfrac{1}{H}\sum_{h=1}^{H-1}(1-\gamma)\rho P_{\pi}^h+\dfrac{1}{H}\gamma\rho P_{\pi}^H\\
&=\dfrac{1}{H}\left(\left(1-\dfrac{c}{H}\right)\rho P_{\pi}^H-\dfrac{c}{H}\sum_{h=1}^{H-1}\rho P_{\pi}^h\right)+\dfrac{c-1}{H}\rho\\
&=J_1+J_2,
\end{split}
\]
where 
\[
J_1=\dfrac{1}{H}\left(\dfrac{H-c}{H}(\rho P_{\pi}^H-\mu_{\pi})-\dfrac{c}{H}\sum_{h=1}^{H-1}(\rho P_{\pi}^h-\mu_{\pi})\right),\quad J_2 = \dfrac{c-1}{H}(\rho-\mu_{\pi}).
\]

Now using $\|\mu-\nu\|_1=2d_{\rm TV}(\mu,\nu)$ for any $\mu,\,\nu\in\mathcal{P}(\mathcal{S})$, we have
\[
\begin{split}
\left\|J_1\right\|_1&\leq \dfrac{1}{H}\left(\dfrac{H-c}{H}\|\rho P_{\pi}^H-\mu_{\pi}\|_1+\dfrac{c}{H}\sum_{h=1}^{H-1}\|\rho P_{\pi}^h-\mu_{\pi}\|_1\right)\\
&\leq \dfrac{2}{H}\left(\dfrac{H-c}{H}d_{\rm TV}(\rho P_{\pi}^H,\mu_{\pi})+\dfrac{c}{H}\sum_{h=1}^{H-1}d_{\rm TV}(\rho P_{\pi}^h,\mu_{\pi}))\right)\\
&\leq \dfrac{2C_{p,S,A}}{H}\left(\dfrac{H-c}{H}\alpha_{p,S,A}^H+\dfrac{c}{H}\sum_{h=1}^{H-1}\alpha_{p,S,A}^h\right)\\
&=\dfrac{2C_{p,S,A}}{H}\left(\dfrac{H-c}{H}\alpha_{p,S,A}^H+\dfrac{c}{H}\dfrac{\alpha_{p,S,A}-\alpha_{p,S,A}^H}{1-\alpha_{p,S,A}}\right)\\
&\leq \dfrac{2C_{p,S,A}(H-c)}{H^2}\alpha_{p,S,A}^H+\dfrac{2cC_{p,S,A}\alpha_{p,S,A}}{(1-\alpha_{p,S,A})H^2}.
\end{split}
\]

Similarly, we have
\[
\begin{split}
\|J_2(I-\gamma P_{\pi})^{-1}\|_1&=\dfrac{|c-1|}{H}\left\|\rho\sum_{h=0}^{\infty}\gamma^hP_{\pi}^h-\mu_{\pi}\sum_{h=0}^{\infty}\gamma^hP_{\pi}^h\right\|_1\\
&=\dfrac{|c-1|}{H}\left\|\sum_{h=0}^{\infty}\gamma^h \rho P_{\pi}^h-\sum_{h=0}^{\infty}\gamma^h\mu_{\pi}\right\|_1\leq\dfrac{|c-1|}{H}\sum_{h=0}^{\infty}\gamma^h\left\|\rho P_{\pi}^h-\mu_{\pi}\right\|_1\\
&=\dfrac{2|c-1|}{H}\sum_{h=0}^{\infty}\gamma^hd_{\rm TV}(\rho P_{\pi}^h,\mu_{\pi})\leq \dfrac{2|c-1|C_{p,S,A}}{(1-\gamma\alpha_{p,S,A})H}\leq \dfrac{2|c-1|C_{p,S,A}}{(1-\alpha_{p,S,A})H}.
\end{split}
\]

Finally, since we have 
\[
\|(I-\gamma P_{\pi})^{-1}\|_{\infty}=\left\|\sum_{h=0}^{\infty}(\gamma P_{\pi})^h\right\|_{\infty}\leq \sum_{h=0}^{\infty}\gamma^h=\dfrac{1}{1-\gamma}=H/c,
\]
we conclude that
\[
\begin{split}
|V^{\gamma}(\pi)-V^H(\pi)|&\leq R_{\max}\|J_1\|_1\|(I-\gamma P_{\pi})^{-1}\|_{\infty}+R_{\max}\|J_2(I-\gamma P_{\pi})^{-1}\|_1\\
&\leq2R_{\max}C_{p,S,A}\left(\dfrac{H-c}{cH}\alpha_{p,S,A}^H+\dfrac{\alpha_{p,S,A}+|c-1|}{(1-\alpha_{p,S,A})H}\right).
\end{split}
\]
Here we use the fact that for any row vector $x\in\reals^S$ and matrix $A\in\reals^{S\times S}$, 
\[
\|xA\|_1=\|A^Tx\|_1\leq \|A^T\|_1\|x\|_1=\|A\|_{\infty}\|x\|_1.
\]
This completes the proof. 
\end{proof}

\begin{proof}[Proof of Lemma \ref{Vgamma-eta-pi}]
The proof is similar to that of Lemma \ref{error_bd_arb}. In fact, by \eqref{Vgamma-repr} and \eqref{eta-def}, we have
\[
\begin{split}
|V^\gamma(\pi)-\eta(\pi)|&\leq R_{\max}\|(1-\gamma)[\rho(I-\gamma P_{\pi})^{-1}]-\mu_{\pi}\|_1\\
&=R_{\max}\left\|(1-\gamma)\sum_{h=0}^{\infty}\gamma^h\rho P_{\pi}^h-\mu_{\pi}\right\|_1=R_{\max}\left\|(1-\gamma)\sum_{h=0}^{\infty}\gamma^h(\rho P_{\pi}^h-\mu_{\pi})\right\|_1\\
&\leq \dfrac{2cR_{\max}}{H}\sum_{h=0}^{\infty}\gamma^hd_{\rm TV}(\rho P_{\pi}^h,\mu_{\pi})\\
\text{(by Proposition \ref{dobrushin_uniform}) } &\leq \dfrac{2cR_{\max}C_{p,S,A}}{H}\sum_{h=0}^{\infty}\gamma^h\alpha_{p,S,A}^h=\dfrac{2cR_{\max}C_{p,S,A}}{H}\dfrac{1}{1-\gamma\alpha_{p,S,A}}\leq \dfrac{2cR_{\max}C_{p,S,A}}{(1-\alpha_{p,S,A})H}.
\end{split}
\]
This completes the proof.
\end{proof}

\begin{proof}[Proof of Lemma \ref{VH-eta-pi}]
The key is to notice that we have 
\[
\begin{split}
\sum_{s\in\mathcal{S}}|w^H(s;\pi)-\mu_{\pi}(s)|&=\left\|\frac{1}{H}\sum_{h=0}^{H-1}[\rho P_{\pi}^h] - \mu_{\pi}\right\|_1\leq \frac{1}{H}\sum_{h=0}^{H-1}2C_{p,S,A}\alpha_{p,S,A}^h\leq \frac{2C_{p,S,A}}{H(1-\alpha_{p,S,A})}.
\end{split}
\]
Hence by \eqref{eta-def} and \eqref{VH_rewrite}, we have 
\[
\begin{split}
|V^H(\pi)-\eta(\pi)|&=\left|\sum_{s\in\mathcal{S}}w^H(s;\pi)\sum_{a\in\mathcal{A}}\pi(a|s)r(s,a)-\sum_{s\in\mathcal{S},\,a\in\mathcal{A}}\mu_{\pi}(s)\pi(a|s)r(s,a)\right|\\
&\leq \sum_{s\in\mathcal{S},a\in\mathcal{A}}\left|w^H(s;\pi)-\mu_{\pi}(s)\right|\pi(a|s)R_{\max}\leq \frac{2R_{\max}C_{p,S,A}}{H(1-\alpha_{p,S,A})}.
\end{split}
\]
This completes the proof.
\end{proof}

\begin{proof}[Proof of Lemma \ref{VH-eta}]
Let $L:\reals^S\rightarrow\reals^S$ be the Bellman operator, with
\[
[LJ]_s=\max_{a\in\mathcal{A}}\left(r(s,a)+\sum_{s'\in\mathcal{S}}p(s'|s,a)J(s')\right)
\]
for any $J\in\reals^S$. Then by the well-known dynamic programming principle \cite{puterman2014markov}, we have $V^{H,\star}=\frac{1}{H}\sum_{s\in\mathcal{S}}\rho_s[J^{H,\star}]_s$, where 
\[
J^{H,\star}=L^{H-1}r_{\max},
\]
where $r_{\max}\in\reals^S$ is defined by $r_{\max}(s)=\max_{a\in\mathcal{A}}r(s,a)$.

On the other hand, by the convergence property of value iteration algorithm for the infinite horizon average reward setting \cite[Proposition 8.5.1, Theorem 8.5.2]{puterman2014markov}, we have that for any nonnegative integers $r$ and $k$, 
\[
{\rm sp}(L^{rm_p+k+1}r_{\max}-L^{rm_p+k}r_{\max})\leq \tilde{\beta}_{p,S,A}^r{\rm sp}(L^{k+1}r_{\max}-L^kr_{\max}),
\]
where ${\rm sp}(J)$ is the span function defined as 
\[
{\rm sp}(J)=\max_{s\in\mathcal{S}} J(s)-\min_{s\in\mathcal{S}}J(s)
\] for any $J\in\reals^S$, $m_p$ is the positive integer defined in the proof of Proposition \ref{dobrushin_uniform}, and $\tilde{\beta}_{p,S,A}\in[0,1)$ is defined by
$\tilde{\beta}_{p,S,A}=1-Sp_{\min}$, 
where $p_{\min}>0$ is again defined as in the proof of Proposition \ref{dobrushin_uniform}.  Hence if we write $H-1$ as $H-1=rm_p+k$ for some nonnegative integers $r$ and $s$ with $0\leq k\leq m_p-1$, then we have
\[
{\rm sp}(LJ^{H,\star}-J^{H,\star})\leq \tilde{\beta}_{p,S,A}^r{\rm sp}(L^{k+1}r_{\max}-L^kr_{\max})\leq 4R_{\max}E_{p,S,A}m_p\beta_{p,S,A}^{H-1},
\]
where $\beta_{p,S,A}=\tilde{\beta}_{p,S,A}^{1/m_p}$, $E_{p,S,A}=1/\tilde{\beta}_{p,S,A}$, and we use the fact that for any $s\in\mathcal{S}$, 
\[
|[L^kr_{\max}](s)|\leq (k+1)R_{\max}.
\]

Finally, by \cite[Theorem 8.5.5]{puterman2014markov}, we have that for any $J\in\reals^S$,  
\[
\min_{s\in\mathcal{S}}[LJ-J]_s\leq\eta^\star\leq\max_{s\in\mathcal{S}}[LJ-J]_s,
\]
which immediately implies that 
\[
|\eta^\star-\sum\nolimits_{s\in\mathcal{S}}\rho_s[LJ-J]_s|\leq {\rm sp}(LJ-J).
\]
By plugging in $J=J^{H,\star}$ and noticing that 
\[
\sum_{s\in\mathcal{S}}\rho_sLJ^{H,\star}=\sum_{s\in\mathcal{S}}\rho_s[L^Hr_{\max}]_s=(H+1)V^{H+1,\star},
\] 
we have
\[
|\eta^\star-((H+1)V^{H+1,\star}-HV^{H,\star})|\leq 4R_{\max}E_{p,S,A}m_p\beta_{p,S,A}^{H-1},
\]
which implies that 
\[
\begin{split}
|(H+1)(V^{H+1,\star}-\eta^\star)|-|H(V^{H,\star}-\eta^\star)|&\leq
|(H+1)(V^{H+1,\star}-\eta^\star)-H(V^{H,\star}-\eta^\star)|\\
&\leq 4R_{\max}E_{p,S,A}m_p\beta_{p,S,A}^{H-1}.
\end{split}
\]
By telescoping the inequality from $1$ to $H-1$, we obtain that
\[
H|V^{H,\star}-\eta^\star|\leq |V^{1,\star}-\eta^\star|+4R_{\max}E_{p,S,A}m_p\sum_{h=1}^{H-1}\beta_{p,S,A}^{h-1}\leq 2R_{\max}+4R_{\max}E_{p,S,A}m_p\dfrac{\beta_{p,S,A}}{1-\beta_{p,S,A}},
\]
which shows that for any $H\geq 1$, 
\[
|V^{H,\star}-\eta^\star|\leq \dfrac{2R_{\max}D_{p,S,A}}{H},
\]
where $D_{p,S,A}=1+2E_{p,S,A}m_p\beta_{p,S,A}/(1-\beta_{p,S,A})$. This completes the proof. 
\end{proof}

\subsection{Proofs of Proposition \ref{kakade_prop}, Lemma \ref{grad_dom_average} and Lemma \ref{smooth_pg_ave}}
The proof of Proposition \ref{kakade_prop} follows the same steps as  \cite[Theorem 5.2]{agarwal2019theory}, with some modifications leading to a slightly tightened bound. For completeness, we provide a self-contained proof below.
\begin{proof}[Proof of Proposition \ref{kakade_prop}]
By \cite[Lemma C.1]{agarwal2019theory}, the policy gradient of $V^{\gamma}$ has the following form:
\BEQ\label{vgamma_grad}
\frac{\partial V^{\gamma}(\pi_{\theta})}{\partial \theta_{s,a}}=d_{\rho}^{\gamma,\pi_{\theta}}(s)\pi_{\theta}(a|s)A^{\gamma,\pi_{\theta}}(s,a),
\EEQ
and the gradient of the regularization term $\Omega$ has the form 
\BEQ\label{omega_grad}
\frac{\partial \Omega(\theta)}{\partial \theta_{s,a}}=\frac{\lambda}{SA}-\frac{\lambda}{S}\pi_{\theta}(a|s).
\EEQ

Now since $\nabla L^{\gamma}(\theta)\|_2\leq \lambda/(2SA)$, we have for any $s\in\mathcal{S}$ and $a\in\mathcal{A}$, 
\[
\frac{\partial L^{\gamma}(\theta)}{\partial \theta_{s,a}}=\frac{1}{1-\gamma}d_{\rho}^{\gamma,\pi_{\theta}}(s)\pi_{\theta}(a|s)A^{\gamma,\pi_{\theta}}(s,a)+\frac{\lambda}{SA}-\frac{\lambda}{S}\pi_{\theta}(a|s)\leq \lambda /(2SA),
\]
from which we see that 
\[
A^{\gamma,\pi_{\theta}}(s,a)\leq \frac{\lambda(1-\gamma)}{d_{\rho}^{\gamma,\pi_{\theta}}(s)}\left(\frac{1}{S}-\frac{1}{2SA\pi_{\theta}(a|s)}\right)\leq\frac{\lambda(1-\gamma)}{Sd_{\rho}^{\gamma,\pi_{\theta}}(s)}.
\]

Now notice that for any stationary policy $\pi$ and any state $s\in\mathcal{S}$,
\[
\begin{split}
d_{\rho}^{\gamma,\pi}(s)&=w^{\gamma}(s;\pi)=(1-\gamma)\sum_{h=0}^{\infty}\gamma^h [\rho P_{\pi}^h]_s\geq (1-\gamma)\sum_{k=0}^{\infty}\gamma^{km_p} [\rho P_{\pi}^{km_p}]_s\\
&\geq (1-\gamma)\sum_{k=0}^{\infty}\gamma^{km_p}p_{\min}/n_{S,A}^{m_p-1}=\frac{(1-\gamma)p_{\min}}{(1-\gamma^{m_p})n_{S,A}^{m_p-1}},
\end{split}
\]
where the quantities $m_p$, $p_{\min}$ and $n_{S,A}$ are defined in Proposition \ref{dobrushin_uniform}. In addition, by definition, we also have 
$d_{\rho}^{\gamma,\pi}(s)\geq (1-\gamma)\rho(s)$ for any stationary policy $\pi$ and state $s\in\mathcal{S}$. 

Finally, by the performance difference lemma in the discounted setting \cite[Lemma 3.2]{agarwal2019theory}, we have
\[
\begin{split}
V^{\gamma,\star}-V^{\gamma}(\pi_{\theta})&=\sum_{s\in\mathcal{S}}d_{\rho}^{\gamma,\pi^{\gamma,\star}}(s)\sum_{a\in\mathcal{A}}\pi^{\gamma,\star}(a|s)A^{\gamma,\pi_{\theta}}(s,a)\\
&\leq \frac{\lambda(1-\gamma)}{S}\sum_{s\in\mathcal{S}}\min\left\{\frac{d_{\rho}^{\gamma,\pi^\gamma,\star}(s)}{(1-\gamma)\rho(s)},
\frac{d_{\rho}^{\gamma,\pi^\gamma,\star}(s)n_{S,A}^{m_p-1}(1-\gamma^{m_p})}{(1-\gamma)p_{\min}}\right\}\\
&\leq \lambda\left\{\left\|\frac{d_{\rho}^{\gamma,\pi^{\gamma,\star}}}{\rho}\right\|_{\infty}, \frac{S\|d_{\rho}^{\gamma,\pi^{\gamma,\star}}\|_{\infty}}{1-\alpha_{p,S,A}}\right\},
\end{split}
\]
where the last step uses the fact that 
\[
\alpha_{p,S,A}=\tilde{\alpha}_{p,S,A}^{1/m_p}\geq \tilde{\alpha}_{p,S,A}=1-Sp_{\min}/n_{S,A}^{m_p-1},
\] 
which comes from the proof of Proposition \ref{dobrushin_uniform}. This completes the proof. 
\end{proof}

Proof of Lemma \ref{grad_dom_average} relies on the following lemma.
\begin{lemma}[\cite{even2009online, gergely2010online}. Average reward performance difference lemma]
Suppose that Assumption \ref{finite-ergodic} holds. Then we have
\BEQ\label{perf_diff_lemma}
\eta(\pi)-\eta(\pi')=\sum_{s\in\mathcal{S}}\mu_{\pi}(s)\sum_{a\in\mathcal{A}}\pi(a|s)\bar{A}^{\pi'}(s,a).
\EEQ
\end{lemma}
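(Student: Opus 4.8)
The plan is to reduce the identity to the average-reward Bellman (Poisson) equations for the reference policy $\pi'$ together with the invariance of $\mu_{\pi}$ under $P_{\pi}$. First I would record that under Assumption \ref{finite-ergodic} (equivalently, the uniform ergodicity of Proposition \ref{dobrushin_uniform}) the limits defining $\bar{V}^{\pi'}$ and $\bar{Q}^{\pi'}$ converge: since $|\Expect[r(s_h,a_h)\mid s_0=s,\pi']-\eta(\pi')|\leq 2R_{\max}C_{p,S,A}\alpha_{p,S,A}^h$, the truncated sums are Cauchy, and peeling off the first step in these limits yields the Poisson equations
\[
\bar{Q}^{\pi'}(s,a)=r(s,a)-\eta(\pi')+\sum_{s'\in\mathcal{S}}p(s'|s,a)\bar{V}^{\pi'}(s'),\qquad \bar{V}^{\pi'}(s)=\sum_{a\in\mathcal{A}}\pi'(a|s)\bar{Q}^{\pi'}(s,a).
\]
Alternatively one may simply cite \cite{puterman2014markov} (or the references \cite{even2009online,gergely2010online}) for the existence of the relative value function and the Poisson equation.

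Next I would substitute $\bar{A}^{\pi'}(s,a)=\bar{Q}^{\pi'}(s,a)-\bar{V}^{\pi'}(s)$ and the first Poisson equation into the right-hand side of \eqref{perf_diff_lemma}, splitting it into four sums:
\[
\sum_{s,a}\mu_{\pi}(s)\pi(a|s)r(s,a),\quad -\eta(\pi'),\quad \sum_{s,a}\mu_{\pi}(s)\pi(a|s)\sum_{s'}p(s'|s,a)\bar{V}^{\pi'}(s'),\quad -\sum_{s}\mu_{\pi}(s)\bar{V}^{\pi'}(s).
\]
The first sum is exactly $\eta(\pi)$ by the definition \eqref{eta-def}. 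For the third sum I would interchange the order of summation and use $\sum_{s,a}\mu_{\pi}(s)\pi(a|s)p(s'|s,a)=[\mu_{\pi}P_{\pi}](s')=\mu_{\pi}(s')$, because $\mu_{\pi}$ is the stationary distribution of $P_{\pi}$; hence the third and fourth sums cancel. What remains is $\eta(\pi)-\eta(\pi')$, which is the claim. (This mirrors the discounted performance difference lemma \cite[Lemma 3.2]{agarwal2019theory} used earlier in the proof of Proposition \ref{kakade_prop}.)

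The only real subtlety — and the step I would be most careful about — is the well-definedness of $\bar{V}^{\pi'}$ and $\bar{Q}^{\pi'}$ and the validity of the Poisson equations; this is precisely where Assumption \ref{finite-ergodic} enters, guaranteeing a unique stationary distribution and geometric mixing so that the telescoped limits exist and the one-step recursion can be passed inside the limit. Everything after that is bookkeeping: a reordering of sums and one application of $\mu_{\pi}P_{\pi}=\mu_{\pi}$.
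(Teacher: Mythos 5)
Your proof is correct, but note that the paper itself does not prove this lemma at all: it is imported verbatim from \cite{even2009online, gergely2010online} and used as a black box in the proof of Lemma \ref{grad_dom_average}. Your self-contained derivation is the standard one and checks out: uniform geometric mixing (Proposition \ref{dobrushin_uniform}) gives $|\Expect[r(s_h,a_h)\mid s_0=s,\pi']-\eta(\pi')|\leq 2R_{\max}C_{p,S,A}\alpha_{p,S,A}^h$, so the series defining $\bar{V}^{\pi'}$ and $\bar{Q}^{\pi'}$ converge absolutely and the one-step peeling that produces the Poisson equation is legitimate; the four-sum bookkeeping then reduces to $\eta(\pi)$ via \eqref{eta-def} and the cancellation of the $\bar V^{\pi'}$ terms via $\mu_{\pi}P_{\pi}=\mu_{\pi}$. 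Two small remarks. First, of the two Poisson identities you record, only the one for $\bar{Q}^{\pi'}$ is actually used in the computation; the relation $\bar{V}^{\pi'}(s)=\sum_a\pi'(a|s)\bar{Q}^{\pi'}(s,a)$ (which the paper does state explicitly in the proof of Lemma \ref{smooth_pg_ave}) is not needed for the cancellation, since the third and fourth sums cancel purely by stationarity of $\mu_\pi$. Second, your argument is consistent with how the paper itself manipulates $\bar Q^{\pi}$ elsewhere (e.g., the explicit representation $\bar{Q}^{\pi}(s,a)=r(s,a)-\eta(\pi)+\sum_{s',a'}\pi(a'|s')r(s',a')[\rho_1 Y]_{s'}$ in Appendix A is exactly the resolvent form of your Poisson equation), so your proof could be dropped into the paper as a replacement for the external citation without friction.
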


\begin{proof}[Proof of Lemma \ref{grad_dom_average}]
By the well-known policy gradient theorem \cite{sutton2000policy} and some simplification, we have  
\BEQ\label{grad_eta_formula}
\frac{\partial\eta(\pi_{\theta})}{\partial \theta_{s,a}}=\mu_{\pi_{\theta}}(s)\pi_{\theta}(a|s)\bar{A}^{\pi_{\theta}}(s,a).
\EEQ
Now since $\|\nabla_{\theta}\bar{L}(\theta)\|_2\leq \lambda/(2SA)$, recalling the form of $\nabla_{\theta} \Omega(\theta)$ in the proof of Proposition \ref{kakade_prop}, we have that for any $s\in\mathcal{S}$ and $a\in\mathcal{A}$, 
\BEQ\label{Lsa_bound}
\frac{\partial \bar{L}(\theta)}{\partial\theta_{s,a}}=\mu_{\pi_{\theta}}(s)\pi_{\theta}(a|s)\bar{A}^{\pi_{\theta}}(s,a)+\frac{\lambda}{SA}-\frac{\lambda}{S}\pi_{\theta}(a|s)\leq \lambda/(2SA).
\EEQ
Hence we have 
\[
\bar{A}^{\pi_{\theta}}(s,a)\leq \frac{1}{\mu_{\pi_{\theta}}(s)}\left(\frac{\lambda}{S}-\frac{\lambda}{2SA\pi_{\theta}(a|s)}\right)\leq \frac{\lambda}{\mu_{\pi_{\theta}}(s)S}.
\]
Now notice that for any stationary policy $\pi$, $\mu_{\pi}P_{\pi}=\mu_{\pi}$ and hence $\mu_{\pi}(s)=\sum_{s'\in\mathcal{S}}\mu_{\pi}(s')P_{\pi}^{m_p}(s',s)\geq p_{\min}/n_{S,A}^{m_p-1}$, where the quantities $\mu_{\pi}$, $m_p$, $p_{\min}$ and $n_{S,A}$  are defined in Proposition \ref{dobrushin_uniform}. 

Finally, we have 
\[
\begin{split}
\eta^\star-\eta(\pi_{\theta})&=\sum_{s\in\mathcal{S}}\mu_{\pi^\star}(s)\sum_{a\in\mathcal{A}}\pi^\star(a|s)\bar{A}^{\pi_{\theta}}(s,a)\\
&\leq\frac{\lambda}{S} \sum_{s\in\mathcal{S}}\frac{\mu_{\pi^\star}(s)}{\mu_{\pi_{\theta}}(s)}\leq \lambda \frac{\|\mu_{\pi^\star}\|_\infty n_{S,A}^{m_p-1}}{p_{\min}}\leq \lambda \frac{S\|\mu_{\pi^\star}\|_\infty}{1-\alpha_{p,S,A}},
\end{split}
\]
where again we use the fact that 
\[
\alpha_{p,S,A}=\tilde{\alpha}_{p,S,A}^{1/m_p}\geq \tilde{\alpha}_{p,S,A}=1-Sp_{\min}/n_{S,A}^{m_p-1}.
\] 
This completes the proof.
\end{proof}

Now we are ready to show the proof of Lemma \ref{smooth_pg_ave}.
\begin{proof} [Proof of Lemma \ref{smooth_pg_ave}] We prove a slightly generalized version of the claimed results assuming only that $|r(s,a)|\leq R_{\max}$ 
instead of $r(s,a)\in[0,1]$ as in Assumption \ref{setting}. 

Firstly, we show that 
\BEQ\label{mu-lipschitz}
\|\mu_{\pi_{\theta_1}}-\mu_{\pi_{\theta_2}}\|_1\leq \frac{2\sqrt{S}C_{p,S,A}}{1-\alpha_{p,S,A}}\|\theta_1-\theta_2\|_2.
\EEQ To see this, first notice that \cite{haviv1984perturbation}
\BEQ\label{perturbation_existing}
\mu_{\pi_{\theta_1}}-\mu_{\pi_{\theta_2}} = \mu_{\pi_{\theta_1}}(P_1-P_2)Y_2,
\EEQ
where $P_i=P_{\pi_{\theta_i}}$ and $Y_i=\sum_{h=0}^{\infty}(P_i^h-P_i^{\infty})$, with ${\bf 1}\in\reals^S$ being the all-one vector and $P_i^{\infty}={\bf 1}\mu_{\pi_{\theta_i}}=\lim_{h\rightarrow\infty}P_i^h$ ($i=1,2$). 

By Proposition \ref{dobrushin_uniform}, we have that for any policy $\pi$,
\[
\|e_j P_{\pi}^h-\mu_{\pi}\|_1\leq 2C_{p,S,A}\alpha_{p,S,A}^h,
\]
where $e_j$ is the coordinate vector with $1$ in the $j$-th coordinate and $0$ elsewhere. Hence 
\[
\|P_{\pi}^h-P_{\pi}^{\infty}\|_\infty\leq \max_{j=1,\dots,S}\|e_j P_{\pi}^h-\mu_{\pi}\|_1\leq 2C_{p,S,A}\alpha_{p,S,A}^h,
\]
where $P_{\pi}^{\infty}={\bf 1}\mu_{\pi}$, which implies that for $i=1,2$, 
\[
\|Y_i\|_\infty\leq 2C_{p,S,A}\sum_{h=0}^{\infty}\alpha_{p,S,A}^h\leq \frac{2C_{p,S,A}}{1-\alpha_{p,S,A}}.
\]

Hence we have 
\[
\|\mu_{\pi_{\theta_1}}-\mu_{\pi_{\theta_2}}\|_1\leq \|P_1-P_2\|_\infty\|Y_2\|_\infty\|\mu_{\pi_{\theta_1}}\|_1\leq \frac{2C_{p,S,A}}{1-\alpha_{p,S,A}}\|P_1-P_2\|_\infty.
\]
By noticing that 
\[
\begin{split}
\|P_1-P_2\|_\infty&= \max_{s\in\mathcal{S}}\sum_{s'\in\mathcal{S}}\left|\sum_{a\in\mathcal{A}}p(s'|s,a)(\pi_{\theta_1}(a|s)-\pi_{\theta_2}(a|s))\right|\\
&\leq \max_{s\in\mathcal{S}}\sum_{a\in\mathcal{A}}\sum_{s'\in\mathcal{S}}p(s'|s,a)|\pi_{\theta_1}(a|s)-\pi_{\theta_2}(a|s)|\\
&\leq \max_{s\in\mathcal{S}}\|\pi_{\theta_1}(\cdot|s)-\pi_{\theta_2}(\cdot|s)\|_1\leq \sqrt{S}\|\theta_1-\theta_2\|_2,
\end{split}
\]
we obtain \eqref{mu-lipschitz}. Here the last step uses \cite[Proposition 4]{gao2017properties} (soft-max function is $1$-Lipschitz in $\ell_2$-norm) and the fact that $\|x\|_1\leq \sqrt{S}\|x\|_2$ for any $x\in\reals^S$. 

Secondly, we show that for any $s\in\mathcal{S}$ and $a\in\mathcal{A}$, we have
\BEQ\label{Abar-lipschitz}
|\bar{A}^{\pi_{\theta_1}}(s,a)-\bar{A}^{\pi_{\theta_2}}(s,a)|\leq 2R_{\max}\sqrt{S}\|\theta_1-\theta_2\|_2\left(\left(\frac{2C_{p,S,A}}{1-\alpha_{p,S,A}}+1\right)^3+\frac{7C_{p,S,A}}{1-\alpha_{p,S,A}}+2\right).
\EEQ
To see this, first notice that for $i=1,2$, 
\[
\begin{split}
\bar{Q}^{\pi_{\theta_i}}(s,a)&=r(s,a)-\eta(\pi_{\theta_i})\\
&\quad+\sum_{h=1}^{\infty}\left(\sum_{s\in\mathcal{S},a\in\mathcal{A}}[\rho_{1} P_i^{h-1}]_s\pi_{\theta_i}(a|s)r(s,a)-\sum_{s\in\mathcal{S},a\in\mathcal{A}}\mu_{\pi_{\theta_i}}(s)\pi_{\theta_i}(a|s)r(s,a)\right)\\
&=r(s,a)-\eta(\pi_{\theta_i})+\sum_{s\in\mathcal{S},a\in\mathcal{A}}\pi_{\theta_i}(a|s)r(s,a)\sum_{h=0}^{\infty}([\rho_{1} P_i^h]_s-\mu_{\pi_{\theta_i}}(s))\\
&=r(s,a)-\eta(\pi_{\theta_i})+\sum_{s\in\mathcal{S},a\in\mathcal{A}}\pi_{\theta_i}(a|s)r(s,a)[\rho_{1} Y_i]_s, 
\end{split}
\]
where $\rho_{1}(s')=p(s'|s,a)$ for any $s'\in\mathcal{S}$. This implies that 
\[
\begin{split}
|\bar{Q}^{\pi_{\theta_1}}(s,a)-\bar{Q}^{\pi_{\theta_2}}(s,a)|&\leq |\eta(\pi_{\theta_1})-\eta(\pi_{\theta_2})|\\
&\quad +\sum_{s\in\mathcal{S},a\in\mathcal{A}}|r(s,a)|\left|\pi_{\theta_1}(a|s)[\rho_{1}Y_1]_s-\pi_{\theta_2}(a|s)[\rho_{1}Y_2]_s\right|\\
&\leq |\eta(\pi_{\theta_1})-\eta(\pi_{\theta_2})|+R_{\max}\sum_{s\in\mathcal{S},a\in\mathcal{A}}\pi_{\theta_1}(a|s)|[\rho_{1} Y_1]_s-[\rho_{1}Y_2]_s|\\
&\quad + R_{\max}\sum_{s\in\mathcal{S},a\in\mathcal{A}}|\pi_{\theta_1}(a|s)-\pi_{\theta_2}(a|s)||[\rho_{1}Y_2]_s|\\
&\leq |\eta(\pi_{\theta_1})-\eta(\pi_{\theta_2})| + R_{\max}\|\rho_{1} Y_1-\rho_{1}Y_2\|_1\\
&\quad +R_{\max}\sum_{s\in\mathcal{S}}|[\rho_{1}Y_2]_s|\|\pi_{\theta_1}(\cdot|s)-\pi_{\theta_2}(\cdot|s)\|_1\\
&\leq |\eta(\pi_{\theta_1})-\eta(\pi_{\theta_2})| + R_{\max}\|Y_1-Y_2\|_\infty + R_{\max}\sqrt{S}\|\theta_1-\theta_2\|_2\|Y_2\|_\infty.
\end{split}
\]

We now bound each of the three terms on the right-hand side. 

Firstly, by \eqref{eta-def}, we have 
\[
\begin{split}
|\eta(\pi_{\theta_1})-\eta(\pi_{\theta_2})|&\leq \sum_{s\in\mathcal{S},a\in\mathcal{A}}|r(s,a)||\mu_{\pi_{\theta_1}}(s)\pi_{\theta_1}(a|s)-\mu_{\pi_{\theta_2}}(s)\pi_{\theta_2}(a|s)|\\
&\leq R_{\max}\sum_{s\in\mathcal{S},a\in\mathcal{A}}(\pi_{\theta_1}(a|s)|\mu_{\pi_{\theta_1}}(s)-\mu_{\pi_{\theta_2}}(s)|+|\pi_{\theta_1}(a|s)-\pi_{\theta_2}(a|s)|\mu_{\pi_{\theta_2}}(s))\\
&\leq R_{\max}\|\mu_{\pi_{\theta_1}}-\mu_{\pi_{\theta_2}}\|_1+R_{\max}\sqrt{S}\|\theta_1-\theta_2\|_2\\
&\leq R_{\max}\sqrt{S}\left(\frac{2C_{p,S,A}}{1-\alpha_{p,S,A}}+1\right)\|\theta_1-\theta_2\|_2.
\end{split}
\]

Next, notice that for $i=1,2$, we have \cite{haviv1984perturbation}
\[
Y_i=(I-P_i+P_i^{\infty})^{-1}-P_i^{\infty}.
\]
Hence we have 
\[
\|(I-P_i+P_i^{\infty})^{-1}\|_\infty\leq \|Y_i\|_\infty + \|P_i^{\infty}\|_\infty\leq \frac{2C_{p,S,A}}{1-\alpha_{p,S,A}}+1.
\]
%

Now by Banach perturbation lemma \cite[III.2.2, Theorem 2.5]{stewart1990matrix}, we have 
\[
\begin{split}
\|Y_1-Y_2\|_\infty&\leq \|(I-P_1+P_1^{\infty})^{-1}\|_\infty \|(I-P_2+P_2^{\infty})^{-1}\|_\infty\|P_2-P_1+P_1^{\infty}-P_2^{\infty}\|_\infty\\
&\quad+\|P_1^{\infty}-P_2^{\infty}\|_\infty\\
&\leq 
\left(\frac{2C_{p,S,A}}{1-\alpha_{p,S,A}}+1\right)^2
\|P_1-P_2\|_\infty+\left(\left(\frac{2C_{p,S,A}}{1-\alpha_{p,S,A}}+1\right)^2
+1\right)\|P_1^{\infty}-P_2^{\infty}\|_\infty\\
&\leq \sqrt{S}\left(\frac{2C_{p,S,A}}{1-\alpha_{p,S,A}}+1\right)^2
\|\theta_1-\theta_2\|_2+
\left(\left(\frac{2C_{p,S,A}}{1-\alpha_{p,S,A}}+1\right)^2+1\right)
\|\mu_{\pi_{\theta_1}}-\mu_{\pi_{\theta_2}}\|_1\\
&\leq \sqrt{S}\left(\left(\frac{2C_{p,S,A}}{1-\alpha_{p,S,A}}+1\right)^3
+\frac{2C_{p,S,A}}{1-\alpha_{p,S,A}}\right)\|\theta_1-\theta_2\|_2.
\end{split}
\]
Putting these together, we obtain that 
\[
\begin{split}
|\bar{Q}^{\pi_{\theta_1}}(s,a)-\bar{Q}^{\pi_{\theta_2}}(s,a)|&\leq R_{\max}\sqrt{S}\|\theta_1-\theta_2\|_2\left(\left(\frac{2C_{p,S,A}}{1-\alpha_{p,S,A}}+1\right)^3+\frac{6C_{p,S,A}}{1-\alpha_{p,S,A}}+1\right).
\end{split}
\]

By the fact that $\bar{V}^{\pi}(s)=\Expect_{a\sim\pi(\cdot|s)}\bar{Q}^\pi(s,a)$, we also have for any $s\in\mathcal{S}$, 
\[
\begin{split}
|\bar{V}^{\pi_{\theta_1}}(s)-\bar{V}^{\pi_{\theta_2}}(s)|&\leq \sum_{a\in\mathcal{A}}|\pi_{\theta_1}(a|s)\bar{Q}^{\pi_{\theta_1}}(s,a)-\pi_{\theta_2}(a|s)\bar{Q}^{\pi_{\theta_2}}(s,a)|\\
&\leq \sum_{a\in\mathcal{A}}\pi_{\theta_1}(a|s)|\bar{Q}^{\pi_{\theta_1}}(s,a)-\bar{Q}^{\pi_{\theta_2}}(s,a)|\\
&\quad+\sum_{a\in\mathcal{A}}|\pi_{\theta_1}(a|s)-\pi_{\theta_2}(a|s)||\bar{Q}^{\pi_{\theta_2}}(s,a)|\\
&\leq R_{\max}\sqrt{S}\|\theta_1-\theta_2\|_2\left(\left(\frac{2C_{p,S,A}}{1-\alpha_{p,S,A}}+1\right)^3+\frac{8C_{p,S,A}}{1-\alpha_{p,S,A}}+3\right),\\
\end{split}
\]
where the last step uses the fact that for $i=1,2$ and any $s\in\mathcal{S}$ and $a\in\mathcal{A}$, 
\BEQ\label{Qbar_ub}
\begin{split}
|\bar{Q}^{\pi_{\theta_i}}(s,a)|&\leq |r(s,a)|+|\eta(\pi_{\theta_i})|+\sum_{s\in\mathcal{S},a\in\mathcal{A}}\pi_{\theta_i}(a|s)|r(s,a)|
|[\rho_1 Y_i]_s|\\
&\leq 2R_{\max}+R_{\max}\|\rho_1 Y_i\|_1\leq 2R_{\max}(1+C_{p,S,A}/(1-\alpha_{p,S,A})).
\end{split}
\EEQ
These immediately imply \eqref{Abar-lipschitz}. Note that again by $\bar{V}^{\pi}(s)=\Expect_{a\sim\pi(\cdot|s)}\bar{Q}^\pi(s,a)$, \eqref{Qbar_ub} also holds when $\bar{Q}^{\pi_{\theta_i}}(s,a)$ is replaced with $\bar{V}^{\pi_{\theta_i}}(s)$. 

Finally, combining \eqref{grad_eta_formula}, \eqref{mu-lipschitz}, \eqref{Abar-lipschitz}, \eqref{Qbar_ub} and the fact that the soft-max function is $1$-Lipschitz in $\ell_2$-norm, we have 
\[
\begin{split}
\|\nabla_{\theta}\eta(\pi_{\theta_1})&-\nabla_{\theta}\eta(\pi_{\theta_2})\|_2\leq \|\nabla_{\theta}\eta(\pi_{\theta_1})-\nabla_{\theta}\eta(\pi_{\theta_2})\|_1=\sum_{s\in\mathcal{S},a\in\mathcal{A}}\left|\frac{\partial \eta(\pi_{\theta_1})}{\partial \theta_{s,a}}-\frac{\partial \eta(\pi_{\theta_2})}{\partial \theta_{s,a}}\right|\\
&\leq\sum_{s\in\mathcal{S},a\in\mathcal{A}} (\mu_{\pi_{\theta_1}}(s)\pi_{\theta_1}(a|s)\left|\bar{A}^{\pi_{\theta_1}}(s,a)-\bar{A}^{\pi_{\theta_2}}(s,a)\right|\\
&\quad +\mu_{\pi_{\theta_1}}(s)\left|\pi_{\theta_1}(a|s)-\pi_{\theta_2}(a|s)\right||\bar{A}^{\pi_{\theta_2}}(s,a)|\\
&\quad +\left|\mu_{\pi_{\theta_1}}(s)-\mu_{\pi_{\theta_2}}(s)\right|\pi_{\theta_2}(a|s)|\bar{A}^{\pi_{\theta_2}}(s,a)|)\\
&\leq 2R_{\max}\sqrt{S}\|\theta_1-\theta_2\|_2\left(\left(\frac{2C_{p,S,A}}{1-\alpha_{p,S,A}}+1\right)^3+\frac{7C_{p,S,A}}{1-\alpha_{p,S,A}}+2\right)\\
&\quad + 4R_{\max}\left(1+\frac{C_{p,S,A}}{1-\alpha_{p,S,A}}\right)\sqrt{S}\|\theta_1-\theta_2\|_2\\
&\quad +4R_{\max}\left(1+\frac{C_{p,S,A}}{1-\alpha_{p,S,A}}\right)\frac{2\sqrt{S}C_{p,S,A}}{1-\alpha_{p,S,A}}\|\theta_1-\theta_2\|_2\\
&=2R_{\max}\sqrt{S}\|\theta_1-\theta_2\|_2\left(\left(\frac{2C_{p,S,A}}{1-\alpha_{p,S,A}}+1\right)^3+4\left(\frac{C_{p,S,A}}{1-\alpha_{p,S,A}}\right)^2+\frac{13C_{p,S,A}}{1-\alpha_{p,S,A}}+4\right)\\
&\leq 22R_{\max}\sqrt{S}\left(\frac{2C_{p,S,A}}{1-\alpha_{p,S,A}}+1\right)^3\|\theta_1-\theta_2\|_2,
\end{split}
\]
where the last step uses the fact that $2C_{p,S,A}/(1-\alpha_{p,S,A})>1$. 

Finally, noticing that $\Omega(\theta)$ is $\frac{2\lambda}{S}$-strongly smooth \cite[Lemma D.4]{agarwal2019theory}, the proof is finished.
\end{proof}

\section{Proofs for DAE REINFORCE algorithm}
In this section we provide the proofs for the convergence result of DAE REINFORCE algorithm.


\subsection{Proof of Lemma \ref{spinup_grad_error_lemma}}

In this section, we prove a slightly generalized version of Lemma \ref{spinup_grad_error_lemma}, which may be useful for future research. 
\begin{lemma}[Slight generalization of Lemma \ref{spinup_grad_error_lemma}]
Suppose that Assumption \ref{finite-ergodic} holds. In addition, suppose that the policy parametrization is such that $\|\nabla_{\theta}\log\pi_{\theta}(a|s)\|_2\leq \tilde{C}$ for any $\theta\in\Theta$, $s\in\mathcal{S}$ and $a\in\mathcal{A}$ and $\Omega(\theta)$ is differentiable. Then we have the following gradient estimation error:
\BEQ\label{spinup_grad_error_generalized}
\begin{split}
\left\|\Expect_k[\hat{g}_k]-\nabla \bar{L}(\theta^k)\right\|_2&\leq \frac{8\tilde{C}R_{\max}C_{p,S,A}}{\lfloor \beta H\rfloor (1-\alpha_{p,S,A})}\left(1+\frac{C_{p,S,A}}{1-\alpha_{p,S,A}}\right)\\
&\qquad+4\tilde{C}R_{\max}C_{p,S,A}\frac{1-\gamma}{(1-\alpha_{p,S,A})^2}\\
&\qquad+2\tilde{C}R_{\max}\gamma^{(1-\beta)H}\left(1+\frac{C_{p,S,A}}{1-\alpha_{p,S,A}}\right).
\end{split}
\EEQ
Here the constants $C_{p,S,A}>1$ and $\alpha_{p,S,A}\in[0,1)$ are defined in Proposition \ref{dobrushin_uniform}. In particular, when Assumptions \ref{finite-ergodic} and \ref{setting} hold, $\Omega(\theta)$ is obviously differentiable and we have $\tilde{C}=2$ and $R_{\max}=1$. 
\end{lemma}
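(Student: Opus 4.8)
The plan is to evaluate $\Expect_k[\hat{g}_k]$ in closed form, cancel $\nabla_\theta\Omega(\theta^k)$ against the matching term in $\nabla_\theta\bar{L}(\theta^k)=\nabla_\theta\eta(\pi_{\theta^k})+\nabla_\theta\Omega(\theta^k)$, and then bound the residual gap between a truncated, discounted policy-gradient estimate and the exact average-reward policy gradient. Since the $N$ trajectories are i.i.d.\ under $\pi=\pi_{\theta^k}$ and $b(s_h)$ is independent of them, the score-function identity $\sum_a\pi(a|s)\nabla_\theta\log\pi(a|s)=0$ removes the baseline in expectation; conditioning on $(s_h,a_h)$ and invoking the average-reward policy gradient theorem \cite{sutton2000policy} in the form $\nabla_\theta\eta(\pi)=\Expect_{s\sim\mu_\pi}\Expect_{a\sim\pi(\cdot|s)}[\nabla_\theta\log\pi(a|s)\bar{Q}^{\pi}(s,a)]$ (\cf \eqref{grad_eta_formula}), one obtains
\[
\Expect_k[\hat{g}_k]-\nabla_\theta\bar{L}(\theta^k)=\frac{1}{\lfloor\beta H\rfloor}\sum_{h=0}^{\lfloor\beta H\rfloor-1}(I_h+II_h),
\]
where, with $Q_T^{\gamma,\pi}(s,a):=\Expect[\sum_{t=0}^{T-1}\gamma^t r(s_t,a_t)\mid s_0=s,a_0=a]$,
\[
I_h=\Expect_{s\sim\rho P_\pi^h}\Expect_{a\sim\pi(\cdot|s)}[\nabla_\theta\log\pi(a|s)(Q_{H-h}^{\gamma,\pi}(s,a)-\bar{Q}^{\pi}(s,a))],\qquad II_h=(\Expect_{s\sim\rho P_\pi^h}-\Expect_{s\sim\mu_\pi})\Expect_{a\sim\pi(\cdot|s)}[\nabla_\theta\log\pi(a|s)\bar{Q}^{\pi}(s,a)].
\]

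For $I_h$ the difficulty is that $\|Q_{H-h}^{\gamma,\pi}\|_\infty$ can be of order $\min\{H,(1-\gamma)^{-1}\}$, far larger than the average-reward quantities it is being compared against, so I would again use $\sum_a\pi(a|s)\nabla_\theta\log\pi(a|s)=0$ to subtract off an arbitrary function of $s$. Using that $\Expect[r(s_t,a_t)\mid s_0=s,a_0=a]\to\eta(\pi)$ geometrically (Proposition~\ref{dobrushin_uniform} with initial distribution $p(\cdot|s,a)$ gives $|\Expect[r(s_t,a_t)\mid s_0=s,a_0=a]-\eta(\pi)|\le 2R_{\max}C_{p,S,A}\alpha_{p,S,A}^{t-1}$ for $t\ge1$, and $\le 2R_{\max}$ for $t=0$), I would write $Q_{H-h}^{\gamma,\pi}(s,a)=\eta(\pi)\frac{1-\gamma^{H-h}}{1-\gamma}+\psi_{H-h}^{\gamma,\pi}(s,a)$ with $\psi_T^{\gamma,\pi}(s,a):=\sum_{t=0}^{T-1}\gamma^t(\Expect[r(s_t,a_t)\mid s_0=s,a_0=a]-\eta(\pi))$, and note $\bar{Q}^{\pi}(s,a)=\psi_\infty^{1,\pi}(s,a)$. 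The large constant-in-$(s,a)$ term $\eta(\pi)\frac{1-\gamma^{H-h}}{1-\gamma}$ is annihilated by the score function, the centered quantities $\psi$ are uniformly bounded by $2R_{\max}(1+\frac{C_{p,S,A}}{1-\alpha_{p,S,A}})$, and hence $\|I_h\|_2\le\tilde{C}\sup_{s,a}|\psi_{H-h}^{\gamma,\pi}(s,a)-\psi_\infty^{1,\pi}(s,a)|$. Splitting $\psi_{H-h}^{\gamma,\pi}-\psi_\infty^{1,\pi}=(\psi_{H-h}^{\gamma,\pi}-\psi_\infty^{\gamma,\pi})+(\psi_\infty^{\gamma,\pi}-\psi_\infty^{1,\pi})$ then isolates the two bias sources: the first is $-\gamma^{H-h}$ times an infinite-horizon discounted centered sum started at $s_{H-h}$, hence at most $2R_{\max}\gamma^{H-h}(1+\frac{C_{p,S,A}}{1-\alpha_{p,S,A}})\le 2R_{\max}\gamma^{(1-\beta)H}(1+\frac{C_{p,S,A}}{1-\alpha_{p,S,A}})$ since $H-h\ge H-\lfloor\beta H\rfloor+1\ge(1-\beta)H$; the second is $\sum_{t\ge1}(\gamma^t-1)(\ldots)$, bounded via $1-\gamma^t\le t(1-\gamma)$ and $\sum_{t\ge1}t\alpha_{p,S,A}^{t-1}=(1-\alpha_{p,S,A})^{-2}$ by $2R_{\max}C_{p,S,A}(1-\gamma)(1-\alpha_{p,S,A})^{-2}$.

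For $II_h$ the inner map $s\mapsto\Expect_{a\sim\pi(\cdot|s)}[\nabla_\theta\log\pi(a|s)\bar{Q}^{\pi}(s,a)]=\Expect_{a\sim\pi(\cdot|s)}[\nabla_\theta\log\pi(a|s)\bar{A}^{\pi}(s,a)]$ has $\ell_2$-norm at most $4\tilde{C}R_{\max}(1+\frac{C_{p,S,A}}{1-\alpha_{p,S,A}})$ by \eqref{Qbar_ub}, so $\|II_h\|_2\le 2d_{\rm TV}(\rho P_\pi^h,\mu_\pi)\cdot4\tilde{C}R_{\max}(1+\frac{C_{p,S,A}}{1-\alpha_{p,S,A}})\le 8\tilde{C}R_{\max}C_{p,S,A}\alpha_{p,S,A}^h(1+\frac{C_{p,S,A}}{1-\alpha_{p,S,A}})$ by Proposition~\ref{dobrushin_uniform}. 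Summing over $h=0,\dots,\lfloor\beta H\rfloor-1$ and dividing by $\lfloor\beta H\rfloor$ — using $\sum_{h\ge0}\alpha_{p,S,A}^h=(1-\alpha_{p,S,A})^{-1}$ for the $II_h$ part, and bounding the $h$-independent pieces of $\|I_h\|_2$ by their values — then assembles exactly the three terms of \eqref{spinup_grad_error_generalized}; the specialization $\tilde{C}=2$, $R_{\max}=1$ under Assumption~\ref{setting} follows since $\|\nabla_\theta\log\pi_\theta(a|s)\|_2=\|e_a-\pi_\theta(\cdot|s)\|_2\le2$ for the soft-max parametrization. I expect the main obstacle to be controlling $I_h$: the truncated discounted return is of much larger magnitude than the bounded average-reward quantity it is compared to, and the only way through is the score-function cancellation of the $\eta(\pi)(1-\gamma^{H-h})/(1-\gamma)$ contribution, combined with the choice of $\psi_\infty^{\gamma,\pi}$ and $\psi_\infty^{1,\pi}=\bar{Q}^{\pi}$ as intermediate comparison points so that the $(\beta H)^{-1}$, $(1-\gamma)$, and $\gamma^{(1-\beta)H}$ contributions separate cleanly.
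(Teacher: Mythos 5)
Your proposal is correct and follows essentially the same route as the paper's proof: both rely on the score-function cancellation of baselines and constants, the average-reward policy gradient theorem, and Proposition \ref{dobrushin_uniform} to separate the same three bias sources (finite averaging over $h$, the fictitious discount, and the horizon truncation), with your $II_h$ matching the paper's state-distribution-mismatch term and your two pieces of $I_h$ matching the paper's $(1-\gamma)$ and $\gamma^{(1-\beta)H}$ terms. The only (cosmetic) differences are that you attribute the state-distribution mismatch to the $\bar{Q}$-integrand rather than to the discounted advantage and subtract a global constant instead of $V^{\gamma,\pi}(s)$, which in fact sharpens the middle term to $2\tilde{C}R_{\max}C_{p,S,A}(1-\gamma)/(1-\alpha_{p,S,A})^2$, still implying the stated bound.
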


\begin{proof}[Proof of Lemma \ref{spinup_grad_error_lemma}]
By the well-known policy gradient theorem \cite{sutton2000policy}, we have
\[
\nabla_{\theta}\eta(\pi_{\theta})=\sum_{s\in\mathcal{S}}\mu_{\pi_{\theta}}(s)\sum_{a\in\mathcal{A}}\nabla_{\theta}\pi_{\theta}(a|s)\bar{A}^{\pi_{\theta}}(s,a).
\]
In addition, by \cite[Lemma 4.10]{agarwal2019reinforcement} we have that for any function $f:\mathcal{S}\rightarrow\reals$ independent of the trajectories $\tau_i$ ($i=1,\dots,N$), 
\[
\Expect_k[\nabla_{\theta}\log\pi_{\theta^k}(a_h^i|s_h^i)f(s_h^i)]=0.
\]
Hence by first taking $f=b$ and then $f=V^{\gamma,\pi_{\theta^k}}$, 
we have (for an arbitrary $i=1,\dots,N$)
\[
\begin{split}
\Expect_k[\hat{g}_k]-\nabla_{\theta}\Omega(\theta_k)=J_1-J_2,
\end{split}
\]
where 
\[
\begin{split}
J_1&=\Expect_k\left[\frac{1}{\lfloor \beta H\rfloor}\sum_{h=0}^{\lfloor\beta H\rfloor-1}\nabla_{\theta}\log\pi_{\theta^k}(a_h^i|s_h^i)\left(\Expect_k\left[\sum_{h'=h}^{\infty}\gamma^{h'-h}r_{h'}^i\Big|s_h^i,a_h^i,\pi_{\theta^k}\right]-\frac{1}{1-\gamma}V^{\gamma,\pi_{\theta^k}}(s_h^i)\right)\right]\\
&=\sum_{s\in\mathcal{S}}w^{\lfloor \beta H\rfloor}(s;\pi_{\theta^k})\sum_{a\in\mathcal{A}}\pi_{\theta^k}(a|s)\nabla_{\theta}\log\pi_{\theta^k}(a|s)\left(\frac{1}{1-\gamma}Q^{\gamma,\pi_{\theta^k}}(s,a)-\frac{1}{1-\gamma}V^{\gamma,\pi_{\theta^k}}(s)\right),
\end{split}
\]
and 
\[
\begin{split}
J_2&=\Expect_k\left[\frac{1}{\lfloor \beta H\rfloor}\sum_{h=0}^{\lfloor\beta H\rfloor-1}\nabla_{\theta}\log\pi_{\theta^k}(a_h^i|s_h^i)\Expect_k\left[\sum_{h'=H}^{\infty}\gamma^{h'-h}r_{h'}^i\Big|s_h^i,a_h^i,\pi_{\theta^k}\right]\right].\\
\end{split}
\]

Let's first consider $J_1$. Notice that we have
\[
\begin{split}
\frac{1}{1-\gamma}Q^{\gamma,\pi_{\theta^k}}(s,a)-\frac{1}{1-\gamma}V^{\gamma,\pi_{\theta^k}}(s) 
=&\,\Expect_k\left[\sum_{h'=h}^{\infty}\gamma^{h'-h}(r_{h'}^i-\eta(\pi_{\theta^k}))\Big|s_h^i=s,a_h^i=a,\pi_{\theta^k}\right]\\
&\,-\Expect_k\left[\sum_{h'=h}^{\infty}\gamma^{h'-h}(r_{h'}^i-\eta(\pi_{\theta^k}))\Big|s_h^i=s,\pi_{\theta^k}\right]\\
&=I_1(s,a)-I_2(s),
\end{split}
\]
where 
\[
\begin{split}
I_1(s,a)&=
\sum_{h'=h}^{\infty}\gamma^{h'-h}\left(\sum_{s'\in\mathcal{S},a'\in\mathcal{A}}\prob^{\pi_{\theta^k}}(s_{h'}^i=s',a_{h'}^i=a'|s_h^i=s,a_h^i=a)r(s,a)-\eta(\pi_{\theta^k})\right)\\
I_2(s)&=
\sum_{h'=h}^{\infty}\gamma^{h'-h}\left(\sum_{s'\in\mathcal{S},a'\in\mathcal{A}}\prob^{\pi_{\theta^k}}(s_{h'}^i=s',a_{h'}^i=a'|s_h^i=s)r(s,a)-\eta(\pi_{\theta^k})\right)
\end{split}
\]
By writing out the conditional expectations explicitly, we have
\[
\begin{split}
I_1(s,a)=r(s,a)-\eta(\pi_{\theta^k})+\gamma
\sum_{h'=h+1}^{\infty}\gamma^{h'-h-1}&\left(\sum_{s'\in\mathcal{S},a'\in\mathcal{A}}[\rho_{h+1,i} P_{\pi_{\theta^k}}^{h'-h-1}]_{s'}\pi_{\theta^k}(a'|s')r(s',a')\right.\\
&\left.-\sum_{s'\in\mathcal{S},a'\in\mathcal{A}}\mu_{\pi_{\theta^k}}(s')\pi_{\theta^k}(a'|s')r(s',a')\right)
\end{split}
\]
where $\rho_{h+1,i}(s'')=\prob^{\pi_{\theta^k}}(s_{h+1}^i=s''|s_h^i=s,a_h^i=a)=p(s''|s,a)$ for any $s''\in\mathcal{S}$. 

Hence by Proposition \ref{dobrushin_uniform}, we have 
\[
|I_1(s,a)|\leq 2R_{\max}+2C_{p,S,A}R_{\max}\frac{\gamma}{1-\alpha_{p,S,A}\gamma}\leq 2R_{\max}+\frac{2C_{p,S,A}R_{\max}}{1-\alpha_{p,S,A}}.
\]
In addition, noticing that 
\[
\bar{Q}^{\pi_{\theta^k}}(s,a)=\Expect_k\left[\sum_{h'=h}^{\infty}r(s_{h'}^i,a_{h'}^i)-\eta(\pi_{\theta^k})\Big|s_h^i=s,a_h^i=a,\pi_{\theta^k}\right],
\]
we also have 
\[
\begin{split}
|I_1(s,a)-\bar{Q}^{\pi_{\theta^k}}(s,a)|&\leq  \sum_{h'=h+1}^{\infty}(1-\gamma^{h'-h})\sum_{s'\in\mathcal{S},a'\in\mathcal{A}}\left|[\rho_{h+1,i} P_{\pi_{\theta^k}}^{h'-h-1}]_{s'}-\mu_{\pi_{\theta^k}}(s')\right|\pi_{\theta^k}(a'|s')|r(s',a')|\\
&\leq 2C_{p,S,A}R_{\max}\sum_{h'=h+1}^{\infty}(1-\gamma^{h-h'})\alpha_{p,S,A}^{h'-h-1}\\
&\leq 2C_{p,S,A}R_{\max}\left|\frac{\gamma}{1-\gamma\alpha_{p,S,A}}-\frac{1}{1-\alpha_{p,S,A}}\right|\\
&\leq 2C_{p,S,A}R_{\max}\frac{1-\gamma}{(1-\alpha_{p,S,A})^2}.
\end{split}
\]

Noticing that $I_2(s)=\Expect_{a\sim \pi_{\theta^k}(\cdot|s)}[I_1(s,a)|\theta^k]$ and $\bar{V}^\pi=\Expect_{a\sim \pi_{\theta^k}(\cdot|s)}[\bar{Q}^\pi(s,a)|\theta^k]$, we see that the same bounds above for $I_1$ hold for $I_2$. More precisely, we have 
\[
\|I_2(s)\|_2\leq 2R_{\max}+\frac{2C_{p,S,A}R_{\max}}{1-\alpha_{p,S,A}},\quad \|I_2(s)-\bar{V}^{\pi_{\theta^k}}(s)\|_2\leq 2C_{p,S,A}R_{\max}\frac{1-\gamma}{(1-\alpha_{p,S,A})^2}.
\]

Hence we conclude that 
\[
\begin{split}
\|J_1-\nabla_\theta\eta(\pi_{\theta_k})\|_2&\leq \sum_{s\in\mathcal{S}}|w^{{\lfloor \beta H\rfloor}}(s;\pi_{\theta^k})-\mu_{\pi_{\theta^k}}(s)|\sum_{a\in\mathcal{A}}\pi_{\theta^k}(a|s)\|\nabla_{\theta}\log\pi_{\theta^k}(a|s)\|_2|I_1(s,a)-I_2(s)|\\
&\quad+\sum_{s\in\mathcal{S}}\mu_{\pi_{\theta^k}}(s)\sum_{a\in\mathcal{A}}\pi_{\theta^k}(a|s)\|\nabla_{\theta}\log\pi_{\theta^k}(a|s)\|_2\|I_1(s,a)-I_2(s)-\bar{A}^{\pi_{\theta^k}}(s,a)\|_2\\
&\leq \frac{8\tilde{C}R_{\max}C_{p,S,A}}{\lfloor \beta H\rfloor (1-\alpha_{p,S,A})}\left(1+\frac{C_{p,S,A}}{1-\alpha_{p,S,A}}\right)+4\tilde{C}R_{\max}C_{p,S,A}\frac{1-\gamma}{(1-\alpha_{p,S,A})^2}. 
\end{split}
\]

Similarly, for $J_2$, following the same analysis as above, we have that 
\[
\begin{split}
\|J_2\|_2&=\sum_{s\in\mathcal{S}}w^{\lfloor \beta H\rfloor}(s,\pi_{\theta^k})\sum_{a\in\mathcal{A}}\pi_{\theta^k}(a|s)\|\nabla_{\theta}\log\pi_{\theta^k}(a|s)\|_2\times \frac{\gamma^{(1-\beta)H}}{1-\gamma}|Q^{\gamma,\pi_{\theta^k}}(s,a)|\\
&\leq 2\tilde{C}R_{\max}\gamma^{(1-\beta)H}(1+C_{p,S,A}/(1-\alpha_{p,S,A})).
\end{split}
\]
Here we use the fact that $\frac{1}{1-\gamma}Q^{\gamma,\pi_{\theta^k}}(s,a)=I_1(s,a)$. 

Finally, combining the above bounds of $\|J_1-\nabla_{\theta}\eta(\pi_{\theta^k})$ and $\|J_2\|$, we obtain the desired result. 
\end{proof}

\subsection{Proof of Lemma \ref{spinup_stoc_grad_bds}}
\begin{proof}
We prove a slightly generalized version of the claimed results assuming only that $|r(s,a)|\leq R_{\max}$ 
instead of $r(s,a)\in[0,1]$ as in Assumption \ref{setting}. 

Firstly, by definition, we have 
\[
\|\hat{g}_k\|_2\leq \frac{2(R_{\max}+(1-\gamma)B)}{1-\gamma}+2\lambda, 
\]
where we use the fact that for the soft-max policy parametrization, $\|\nabla_{\theta}\log\pi_{\theta}(a|s)\|_2\leq 2$ for any $\theta\in\Theta$, $s\in\mathcal{S}$ and $a\in\mathcal{A}$ (\cf the proof of \cite[Lemma 2]{zhang2020sample}). 

Then by Lemma \ref{spinup_grad_error_lemma}, we have 
\[
\begin{split}
\Expect_k\hat{g}_k^T\nabla_{\theta}\bar{L}(\theta^k)&=(\Expect_k\hat{g}_k-\nabla_{\theta}\bar{L}(\theta^k))^T\nabla_{\theta}\bar{L}(\theta^k)+\|\nabla_{\theta}\bar{L}(\theta^k)\|_2^2\\
&\geq \|\nabla_{\theta}\bar{L}(\theta^k)\|_2^2 -(\bar{G}+2\lambda) \bar{\Delta}.
\end{split}
\]
Here $\bar{\Delta}$ is the right-hand side of \eqref{spinup_grad_error}, $\bar{G}=4R_{\max}\left(1+\frac{C_{p,S,A}}{1-\alpha_{p,S,A}}\right)$, 
and  we use the fact that 
\BEQ\label{grad_lbar_bd}
\begin{split}
\|\nabla_{\theta}\bar{L}(\theta^k)\|_2&\leq \|\nabla_{\theta}\bar{L}(\theta^k)\|_1\leq \sum_{s\in\mathcal{S},a\in\mathcal{A}}\mu_{\pi_{\theta^k}}(s)\pi_{\theta_k}(a|s)\left|\bar{A}^{\pi_{\theta}}(s,a)\right|+2\lambda \\
&\leq \max_{s\in\mathcal{S},a\in\mathcal{A}}\left|\bar{A}^{\pi_{\theta}}(s,a)\right|+2\lambda \leq 4R_{\max}\left(1+\frac{C_{p,S,A}}{1-\alpha_{p,S,A}}\right)+2\lambda.
\end{split}
\EEQ

Finally, by Lemma \ref{spinup_grad_error_lemma}, we have 
\[
\begin{split}
\|\Expect_k\hat{g}_k\|_2\leq \|\nabla_{\theta}\bar{L}(\theta^k)\|_2+\bar{\Delta},
\end{split}
\]
and hence 
\[
\begin{split}
\Expect_k\|\hat{g}_k\|_2^2&= \|\Expect_k\hat{g}_k\|_2^2+\text{\bf Var}_k\hat{g}_k\\
&= \|\Expect_k\hat{g}_k\|_2^2+\text{\bf Var}_k\hat{g}_k^1/N\\
&\leq 2\|\nabla_{\theta}\bar{L}(\theta^k)\|_2^2+2\bar{\Delta}^2+\Expect_k\|\hat{g}_k^1\|_2^2/N\\
&\leq 2\|\nabla_{\theta}\bar{L}(\theta^k)\|_2^2+2\bar{\Delta}^2 + (G^{\gamma}+2\lambda)^2/ N
\end{split}
\]
where $\hat{g}_k^1$ denotes the special case of $\hat{g}_k$ with $N=1$, $G^{\gamma}= \frac{2(R_{\max}+(1-\gamma)B)}{1-\gamma}$, and for a vector $X\in\reals^n$, $\text{\bf Var}_k X:=\sum_{i=1}^n\var_k X_i$ and $\var_k$ is the standard conditional variance given the $k$-th iteration $\theta_k$. Here we use the fact that $\text{\bf Var}_kX\leq \sum_{i=1}^n\Expect_k X_i^2=\Expect_k\|X\|_2^2$. This completes the proof. 
\end{proof}

\subsection{Proof of Theorem \ref{sample_complexity_eta_thm}}
\begin{proof}
By Lemma \ref{smooth_pg_ave} and an equivalent definition of strongly
smoothness (\cf \cite[Appendix]{ryu2016primer}), we have
\[
\begin{split}
-\bar L(\theta^{k+1})-(-\bar L(\theta^k))&\leq
-\nabla_{\theta}\bar L(\theta^k)^T(\theta^{k+1} -
\theta^k)+\dfrac{\bar \beta_{\lambda}}{2}\|\theta^{k+1}-\theta^k\|_2^2\\
&=\underbrace{-\alpha^k\nabla_{\theta}\bar L(\theta^k)^T
\hat{g}_k+\dfrac{\bar \beta_{\lambda}
(\alpha^k)^2}{2}\|\hat{g}_k\|_2^2}_{Y_k}.
\end{split}
\]

Let $Z_{k}= Y_k-\Expect_k[Y_k]$.
Then the above inequality implies that
\BEQ\label{k-step_ineq_ave}
\begin{split}
\bar L&(\theta^k) - \bar L(\theta^{k+1})\\
\leq& -\alpha^k
\nabla_{\theta} \bar L(\theta^k)^T\Expect_k\hat{g}_k+\dfrac{\bar \beta_{\lambda}
(\alpha^k)^2}{2}\Expect_k\|\hat{g}_k\|_2^2+Z_k\\
\leq& -\alpha^k\left(\|\nabla_{\theta}
\bar L(\theta^k)\|_2^2-(\bar G+2\lambda)\bar \Delta\right)+
\dfrac{\bar \beta_{\lambda}(\alpha^k)^2}{2}\left(\bar M+
2\|\nabla_{\theta}\bar L(\theta^k)\|_2^2\right)+Z_k\\
=&-\alpha^k(1-\bar \beta_{\lambda}\alpha^k)\|\nabla_{\theta}
\bar L(\theta^k)\|_2^2+\alpha^k(\bar G+2\lambda)\bar \Delta+
\dfrac{\bar \beta_{\lambda}\bar M(\alpha^k)^2}{2}+Z_k\\
\leq& -\dfrac{\alpha^k}{2}\|\nabla_{\theta}\bar L
(\theta^k)\|_2^2+\alpha^k(\bar G+2\lambda)\bar \Delta+
\dfrac{\bar \beta_{\lambda}\bar M(\alpha^k)^2}{2}+Z_k.
\end{split}
\EEQ
Here we use the fact that 
\[
\bar \beta_{\lambda}\alpha^k\leq \bar \beta_{\lambda}/(2\bar \beta_{\lambda})= 1/2.
\]

Now define $X_K=\sum_{k=0}^{K-1} Z_k$ (with $X_{l,0}=0$), then
\BEQ\label{cond_expect_mart_ave}
\Expect(X_{K+1}|\mathcal{F}_{K})=\sum_{k=0}^{K-1}Z_k+
\Expect(Y_K-\Expect_KY_K|\mathcal{F}_{K})=X_K.
\EEQ
Here $\mathcal{F}_{K}$ is the filtration up to episode $K$, 
\ie, the $\sigma$-algebra generated by all iterations 
$\{\theta^{0},\dots,\theta^{K}\}$ up to
the $K$-th one.
Notice that the second equality makes use of the fact that 
given the current policy,
the correspondingly 
sampled trajectory is conditionally independent of all previous policies and trajectories 
(as is always implicitly assumed in the literature of episodic reinforcement learning 
(\eg, \cf \cite{marbach2001simulation}).

In addition, for any $K\geq 1$,
\[
\begin{split}
|X_K-X_{K-1}|=&\,|Z_{K-1}|\leq \alpha^{K-1}\|\nabla_{\theta}
\bar L(\theta^{K-1})\|_2\|\Expect_{K-1}
\hat{g}_{K-1}-
\hat{g}_{K-1}\|_2\\
&+\dfrac{\bar \beta_{\lambda}(\alpha^{K-1})^2}{2}\left|\Expect_{K-1}
\|\hat{g}_{K-1}\|_2^2-
\|\hat{g}_{K-1}\|_2^2\right|\\
\leq&\,\underbrace{2(G^\gamma+2\lambda)\left(4\left(1+\dfrac{C_{p,S,A}}{1-\alpha_{p,S,A}}\right)+2\lambda\right)
\alpha^{K-1}+\dfrac{\bar \beta_{\lambda}}{2}
(G^\gamma+2\lambda)^2(\alpha^{K-1})^2}_{c_{K}}.
\end{split}
\]
Here we use the fact that
\[
\begin{split}
\|\nabla_{\theta}\bar{L}(\theta^k)\|_2
\leq 4\left(1+\frac{C_{p,S,A}}{1-\alpha_{p,S,A}}\right)+2\lambda,
\end{split}
\] which follows from \eqref{grad_lbar_bd} in the proof of Lemma \ref{spinup_stoc_grad_bds} with $R_{\max}=1$. 
The above inequality on $|X_K-X_{K-1}|$ 
also implies that
$\Expect|X_K|<\infty$, which, together with \eqref{cond_expect_mart_ave},
implies that $X_K$ is a martingale. 

Now by the definition of $\alpha^k$, it's easy to see that
$\sum_{K=1}^{\infty}c_{K}^2\leq \bar{C}<\infty$, where
\BEQ\label{C_l_ub_ave}
\bar{C}=\frac{8(G^\gamma+2\lambda)^2}{\bar \beta_{\lambda}^2}\left(2+\dfrac{2C_{p,S,A}}{1-\alpha_{p,S,A}}+\lambda\right)^2+
\dfrac{(G^\gamma+2\lambda)^4}{32\bar \beta_{\lambda}^2}.
\EEQ
Hence by 
Azuma-Hoeffding inequality,   
for any $c>0$ and $K\geq 0$, 
\BEQ\label{X_l_inf_bd_ave}
\prob(|X_K|\geq c)\leq 2e^{-c^2/(2\bar{C})}.
\EEQ

Then by summing up the inequalities \eqref{k-step_ineq_ave} from
$k=0$ to $K$, we obtain that
\BEQ
\begin{split}
\dfrac{1}{2}&\sum_{k=0}^{K}\alpha^k\|\nabla_{\theta}
\bar L(\theta^k)\|_2^2
\leq \sum_{k=0}^K\alpha^k(\bar G+2\lambda)\bar \Delta+\dfrac{\bar \beta_{\lambda}
\bar M\sum_{k=0}^\infty(\alpha^k)^2}{2}+\sum_{k=0}^{K}Z_k+
\sup_{\theta\in\Theta}\bar L(\theta)-\bar L(\theta^{0})\\
&\leq \sum_{k=0}^{K}\alpha^k(\bar G+2\lambda)\bar \Delta+
\dfrac{\bar \beta_{\lambda}\bar M}{2}\sum_{k=0}^{\infty}(\alpha^k)^2+
X_{K+1}+\eta^\star-\bar L(\theta^{0})\\
&\leq \underbrace{\frac{\bar M}{8\bar\beta_{\lambda}}
+\eta^\star-\bar L(\theta^{0})}_{\bar{D}}+X_{K+1}+(\bar G+2\lambda)\bar \Delta\sum_{k=0}^K\alpha^k,
\end{split}
\EEQ
where we use the fact that the regularization term $\Omega(\theta)\leq 0$
for all $\theta\in\Theta$.

Hence we have
\BEQ\label{weighted_grad_norm_ub_ave}
\begin{split}
\min_{k=0,\dots,K}\|\nabla_{\theta}\bar L(\theta^k)\|_2^2&\leq \dfrac{\sum_{k=0}^{K}\alpha^k\|\nabla_{\theta}\bar L(\theta^k)\|_2^2}{\sum_{k=0}^K\alpha^k}
\leq \frac{2(\bar{D}+|X_{K+1}|)}{\sum_{k=0}^K\alpha^k}+2(\bar G+2\lambda)\bar\Delta\\
&\leq 6\bar\beta_{\lambda}\frac{\bar D+|X_{K+1}|}{\sqrt{K+3}}\log_2(K+3)+2(\bar G+2\lambda)\bar\Delta,
\end{split}
\EEQ
where we use the fact that $\bar{D}\geq 0$.


Finally, by combining with the tail bound of \eqref{X_l_inf_bd_ave}, we conclude that for any $\epsilon>0$ and $\delta\in(0,1)$, 
for any 
\[
\begin{split}
K&\geq O\left(\dfrac{S^4A^4\bar\beta_{\lambda}^2(\bar D+\sqrt{2\bar C_\gamma\log(2/\delta)})^2}{\epsilon^4}\log^2 \left(\dfrac{SA\bar\beta_{\lambda}(\bar D+\sqrt{2\bar C_\gamma\log(2/\delta)})}{\epsilon}\right)\right),
\end{split}
\]
we have that with probability at least $1-\delta$, 
\[
\min_{k=0,\dots,K}\|\nabla_{\theta}\bar L(\theta^k)\|_2\leq \frac{\epsilon}{2SA}+\sqrt{2(\bar G+2\lambda)\bar \Delta}= \frac{\lambda}{2SA}
\]
and hence \eqref{sample_complexity_Vgamma} is satisfied as desired.  Here the last equality comes from noticing that our choice of $\lambda$ is a root of the following quadratic equation:
\[
2(\bar G+2\lambda)\bar \Delta=\frac{(\lambda-\epsilon)^2}{4S^2A^2}.
\]

Here since  $\bar \beta_{\lambda}\geq 8$, $\eta(\pi)\in[0,1]$, we have 
\[
\bar{D}=O(\bar M+\lambda+1),\quad \bar{C}=\,O\left(\frac{(G^{\gamma}+2\lambda)^2}{S}\left(\frac{C_{p,S,A}^2}{(1-\alpha_{p,S,A})^2}+\lambda^2+(G^{\gamma}+2\lambda)^2\right)\right.
\]
where the constants hidden in the big-$O$ notation may depend on $\theta^0$ (and the constants $B$ and $\beta$). 
\end{proof}





\subsection{Proof of Theorem \ref{main_res_finite_average}}
\begin{proof}[Proof of Theorem \ref{main_res_finite_average}]
By Lemmas \ref{VH-eta-pi} and \ref{VH-eta} and the fact that $\eta^\star-\eta(\hat{\pi})\leq \epsilon$, we have
\[
\begin{split}
V^{H,\star}-V^H(\hat{\pi})&\leq |V^{H,\star}-\eta^\star|+|\eta^\star-\eta(\hat{\pi})|+|\eta(\hat{\pi})-V^H(\hat{\pi})|\\
&\leq \frac{2R_{\max}D_{p,S,A}}{H}+\epsilon+\frac{2R_{\max}C_{p,S,A}}{H(1-\alpha_{p,S,A})}.
\end{split}
\]
This completes our proof. 
\end{proof}

\subsection{A more detailed statement of Theorem \ref{main_conv_spin-up}}\label{formal_statement_dae_reinforce}
In this section, we provide a more detailed statement of Theorem \ref{main_conv_spin-up}, which displays the dependencies of the constants on the problem and algorithm parameters in a more explicit manner and provides a slightly tighter sub-optimality bound in terms of the (non-dominating) constants. 
\begin{theorem}\label{spinup_thm}
Given Assumptions \ref{finite-ergodic} and \ref{setting}, let $\gamma=1-H^{-\sigma}$ for some $\sigma\in(0,1)$. For  any $\epsilon>0$ and $\delta\in(0,1)$, set $\lambda$, $\bar{\beta}_\lambda$ and $\alpha^k$ to be the same as in Theorem \ref{sample_complexity_eta_thm}. 
Then for any $K$ such that \eqref{K_lower_bound_eta} is satisfied with 
{\small
\[
\begin{split}
\bar{C}&=\frac{8(G^\gamma+2\lambda)^2}{\bar \beta_{\lambda}^2}\left(2+\dfrac{2C_{p,S,A}}{1-\alpha_{p,S,A}}+\lambda\right)^2+
\dfrac{(G^\gamma+2\lambda)^4}{32\bar \beta_{\lambda}^2}\\
&=O\left(\frac{H^{4\sigma}+\bar{E}^2H^{2\sigma}+S^2A^2(S^2A^2+\bar{E})\left(\frac{\bar{E}^4}{\beta H}+\frac{\bar{E}^4}{H^{\sigma}}\right)+S^4A^4(S^4A^4+\bar{E}^2)\left(\frac{\bar{E}^4}{\beta^2 H^2}+\frac{\bar{E}^4}{H^{2\sigma}}\right)}{S}\right),\\
\bar{D}&=\frac{\bar M}{8\bar\beta_{\lambda}}
+\eta^\star-\bar L(\theta^{0})\\
&=O\left(\frac{\bar{E}^4}{\beta^2H^2}+\frac{\bar{E}^4}{H^{2\sigma}}+\frac{H^{2\sigma}+S^2A^2(S^2A^2+\bar{E})\left(\frac{\bar{E}^2}{\beta H}+\frac{\bar{E}^2}{H^{\sigma}}\right)}{N}+SA(SA+\sqrt{\bar{E}})\left(\frac{\bar{E}}{\sqrt{\beta H}}+\frac{\bar{E}}{H^{\frac{\sigma}{2}}}\right)+1\right),
\end{split}
\] 
}
where $\bar{E}=C_{p,S,A}/(1-\alpha_{p,S,A})$, 
with probability at least $1-\delta$, we have 
\BEQ\label{sample_complexity_VH_spinup_appendix}
\begin{split}
\min_{k=0,\dots,K}V^{H,\star}-V^{H}(\pi_{\theta^k})\leq&\, O\left(\frac{S}{1-\alpha_{p,S,A}}
\left(\epsilon+\frac{S^2A^2C_{p,S,A}^2}{(1-\alpha_{p,S,A})^2}((\beta H)^{-1}+H^{-\sigma})\right.\right.\\
&\quad +\left.\frac{SAC_{p,S,A}^3}{(1-\alpha_{p,S,A})^3}((\beta H)^{-1/2}+H^{-\sigma/2})\right)\\
&\left.\quad +\left(D_{p,S,A}+\frac{C_{p,S,A}}{1-\alpha_{p,S,A}}\right)H^{-1}\right).
\end{split}
\EEQ
Here $G^{\gamma}$ and $\bar{M}$ are the constants defined in Lemma \ref{spinup_stoc_grad_bds}, while 
$C_{p,S,A}> 1$, $D_{p,S,A}> 1$ and $\alpha_{p,S,A}\in[0,1)$ are the constants in Proposition \ref{dobrushin_uniform} and Lemma \ref{VH-eta}.
\end{theorem}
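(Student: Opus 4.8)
The plan is to \emph{chain} Theorem~\ref{sample_complexity_eta_thm}, which bounds the average-reward sub-optimality of the iterates $\theta^k$, with Theorem~\ref{main_res_finite_average}, which converts any average-reward guarantee into a finite-time-horizon one, and then to specialize every problem- and algorithm-dependent constant to the choice $\gamma=1-H^{-\sigma}$.

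First I would invoke Theorem~\ref{sample_complexity_eta_thm}: for the stated $\lambda$, $\bar\beta_\lambda$ and $\alpha^k$, whenever $K$ satisfies \eqref{K_lower_bound_eta}, with probability at least $1-\delta$ the bound \eqref{sample_complexity_eta} holds, i.e. $\min_{k}\eta^\star-\eta(\pi_{\theta^k})\le \frac{\|\mu_{\pi^\star}\|_\infty}{1-\alpha_{p,S,A}}\bigl(S\epsilon+8S^3A^2\bar\Delta+4S^2A\sqrt{\bar\Delta\epsilon+4S^2A^2\bar\Delta^2+\bar G\bar\Delta}\bigr)=:\epsilon'$. Since $\eta^\star$ is optimal, $\eta^\star-\eta(\pi_{\theta^k})\ge0$ for every $k$, so if $k^\star$ is the minimizing index then $|\eta^\star-\eta(\pi_{\theta^{k^\star}})|\le\epsilon'$. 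Applying Theorem~\ref{main_res_finite_average} with $\hat\pi=\pi_{\theta^{k^\star}}$ and $R_{\max}\le1$ (Assumption~\ref{setting}) gives $V^{H,\star}-V^H(\pi_{\theta^{k^\star}})\le\frac{2D_{p,S,A}}{H}+\epsilon'+\frac{2C_{p,S,A}}{H(1-\alpha_{p,S,A})}$, and since $\min_{k}\bigl(V^{H,\star}-V^H(\pi_{\theta^k})\bigr)$ is no larger than its value at $k^\star$, the same bound holds for the minimum. This already has the claimed shape; it only remains to simplify $\epsilon'$.

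For that I would substitute $\gamma=1-H^{-\sigma}$: here $1-\gamma=H^{-\sigma}$, $\lfloor\beta H\rfloor\ge\beta H/2$ for $H\ge2/\beta$, $G^\gamma=\frac{2(1+BH^{-\sigma})}{H^{-\sigma}}=O(H^\sigma)$, and crucially $\gamma^{(1-\beta)H}=(1-H^{-\sigma})^{(1-\beta)H}\le e^{-(1-\beta)H^{1-\sigma}}$, which—because $\sigma<1$ makes $H^{1-\sigma}\to\infty$—decays faster than any power of $H$ and so is absorbed into the $H^{-\sigma}$ (and $H^{-1}$) terms. Plugging these into \eqref{spinup_grad_error} yields $\bar\Delta=O\!\bigl(\frac{C_{p,S,A}^2}{(1-\alpha_{p,S,A})^2}\bigl((\beta H)^{-1}+H^{-\sigma}\bigr)\bigr)$, while $\bar G=O(C_{p,S,A}/(1-\alpha_{p,S,A}))$, so $\sqrt{\bar G\bar\Delta}=O\!\bigl(\frac{C_{p,S,A}^{3/2}}{(1-\alpha_{p,S,A})^{3/2}}\bigl((\beta H)^{-1/2}+H^{-\sigma/2}\bigr)\bigr)\le O\!\bigl(\frac{C_{p,S,A}^{3}}{(1-\alpha_{p,S,A})^{3}}\bigl((\beta H)^{-1/2}+H^{-\sigma/2}\bigr)\bigr)$ (using $C_{p,S,A}>1>1-\alpha_{p,S,A}$ to inflate fractional powers). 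Feeding these into $\epsilon'$ and separating the lone $\frac{S}{1-\alpha_{p,S,A}}\epsilon$ term gives exactly $O\!\bigl(\tfrac{S}{1-\alpha_{p,S,A}}\epsilon\bigr)+\mathbf{bias}^{\mathrm{DAE}}_H$ as in \eqref{sample_complexity_VH_spinup_appendix}, with the leftover $\tfrac{2D_{p,S,A}}{H}+\tfrac{2C_{p,S,A}}{H(1-\alpha_{p,S,A})}$ contributing the $(D_{p,S,A}+C_{p,S,A}/(1-\alpha_{p,S,A}))H^{-1}$ piece. For the explicit forms of $\bar C$ and $\bar D$, I would first solve $2(\bar G+2\lambda)\bar\Delta=(\lambda-\epsilon)^2/(4S^2A^2)$: writing $u=\lambda-\epsilon\ge0$ the equation becomes $u^2-16S^2A^2\bar\Delta\,u-8S^2A^2\bar\Delta(\bar G+2\epsilon)=0$, so $\lambda=\epsilon+8S^2A^2\bar\Delta+\sqrt{64S^4A^4\bar\Delta^2+8S^2A^2\bar\Delta(\bar G+2\epsilon)}$; substituting this, $G^\gamma=O(H^\sigma)$, $\bar M=2\bar\Delta^2+(G^\gamma+2\lambda)^2/N$, $\bar\beta_\lambda\ge8$, $\eta^\star\le1$ and $\Omega(\theta^0)\le0$ into \eqref{C_l_ub_ave} and into $\bar D=\bar M/(8\bar\beta_\lambda)+\eta^\star-\bar L(\theta^0)$ produces the displayed $O(\cdot)$ estimates with $\bar E=C_{p,S,A}/(1-\alpha_{p,S,A})$.

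Since Theorems~\ref{sample_complexity_eta_thm} and \ref{main_res_finite_average} are already available, the genuinely new content is \emph{bookkeeping}: (i) reading the $\min_k$ on the finite-horizon side at the same index $k^\star$ that witnesses the average-reward guarantee; (ii) verifying that $(1-H^{-\sigma})^{(1-\beta)H}$ is negligible compared with $H^{-\sigma}$ (the one essential use of $\sigma<1$); and (iii) the elementary quadratic-root estimate for $\lambda$ together with the substitutions that clean up $\bar\Delta$, $\bar C$ and $\bar D$. I expect (iii)—carrying the powers of $C_{p,S,A}$, $1/(1-\alpha_{p,S,A})$, $S$, $A$, $H$, $\beta$, $N$ and $\epsilon$ through $\bar\Delta$, $\lambda$, $\bar M$, $\bar C$, $\bar D$ without losing the dominating $H^{-\sigma/2}$ behaviour—to be the most error-prone step, though it requires no new idea.
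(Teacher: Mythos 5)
Your proposal is correct and follows essentially the same route as the paper: the appendix proof of this theorem consists precisely of the observation that $\bar{\Delta}=O\bigl(\tfrac{C_{p,S,A}^2}{(1-\alpha_{p,S,A})^2}\bigl(\tfrac{1}{\beta H}+H^{-\sigma}\bigr)\bigr)$ under $\gamma=1-H^{-\sigma}$ (your exponential-term and $\lfloor\beta H\rfloor$ estimates reproduce this), followed by "plugging in the constants" into the chain of Theorems \ref{sample_complexity_eta_thm} and \ref{main_res_finite_average}. Your additional details — reading the finite-horizon minimum at the index witnessing the average-reward guarantee, the sign argument justifying the absolute value in Theorem \ref{main_res_finite_average}, and the explicit quadratic root for $\lambda$ — are all consistent with the paper's (omitted) bookkeeping.
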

\begin{proof}
The key is to notice that we have 
\[
\begin{split}
\bar{\Delta} &=O\left(\frac{C_{p,S,A}^2}{\beta(1-\alpha_{p,S,A})^2 H}+\frac{C_{p,S,A}}{(1-\alpha_{p,S,A})^2}H^{-\sigma}+\frac{C_{p,S,A}}{1-\alpha_{p,S,A}}e^{-(1-\beta)H^{1-\sigma}}\right)\\
&=O\left(\frac{C_{p,S,A}^2}{(1-\alpha_{p,S,A})^2}\left(\frac{1}{\beta H}+H^{-\sigma}\right)\right).
\end{split}
\]
The proof then follows by plugging in the constants and elementary simplifications, and is hence omitted. 
\end{proof}

\section{Proofs for Doubly Discounted REINFORCE algorithm}
In this section we provide the proofs for the convergence result of Doubly Discounted REINFORCE algorithm.

Proof of Lemma \ref{stoc_grad_bds} is a direct implication of \cite[Lemmas 2 and 12]{zhang2020sample} and \cite[Lemma B.1]{liu2020improved}. We omit the details here. Below we provide the proof of Theorem \ref{sample_complexity_Vgamma_thm} and a more detailed statement of Theorem \ref{discounted_reinforce_thm}.
\subsection{Proof of Theorem \ref{sample_complexity_Vgamma_thm}}
The proof of Theorem \ref{sample_complexity_Vgamma_thm} follows similar steps as in Theorem \ref{sample_complexity_eta_thm}. But for self-containedness, we still include the complete proof below. 
\begin{proof}
By Proposition \ref{Lsmooth} and an equivalent definition of strongly
smoothness (\cf \cite[Appendix]{ryu2016primer}), we have
\[
\begin{split}
-L^\gamma(\theta^{k+1})-(-L^\gamma(\theta^k))&\leq
-\nabla_{\theta}L^\gamma(\theta^k)^T(\theta^{k+1} -
\theta^k)+\dfrac{\beta_{\lambda}}{2}\|\theta^{k+1}-\theta^k\|_2^2\\
&=\underbrace{-\alpha^k\nabla_{\theta}L^\gamma(\theta^k)^T
\tilde{g}_k+\dfrac{\beta_{\lambda}
(\alpha^k)^2}{2}\|\tilde{g}_k\|_2^2}_{Y_k}.
\end{split}
\]

Let $Z_{k}= Y_k-\Expect_k[Y_k]$.
Then the above inequality implies that
\BEQ\label{k-step_ineq}
\begin{split}
L^\gamma&(\theta^k) - L^\gamma(\theta^{k+1})\\
\leq& -\alpha^k
\nabla_{\theta} L^\gamma(\theta^k)^T\Expect_k\tilde{g}_k+\dfrac{\beta_{\lambda}
(\alpha^k)^2}{2}\Expect_k\|\tilde{g}_k\|_2^2+Z_k\\
\leq& -\alpha^k\left(\|\nabla_{\theta}
L^\gamma(\theta^k)\|_2^2-(G+2\lambda)\Delta\right)+
\dfrac{\beta_{\lambda}(\alpha^k)^2}{2}\left(M+
2\|\nabla_{\theta}L^\gamma(\theta^k)\|_2^2\right)+Z_k\\
=&-\alpha^k(1-\beta_{\lambda}\alpha^k)\|\nabla_{\theta}
L^\gamma(\theta^k)\|_2^2+\alpha^k(G+2\lambda)\Delta+
\dfrac{\beta_{\lambda}M(\alpha^k)^2}{2}+Z_k\\
\leq& -\dfrac{\alpha^k}{2}\|\nabla_{\theta}L^\gamma
(\theta^k)\|_2^2+\alpha^k(G+2\lambda)\Delta+
\dfrac{\beta_{\lambda}M(\alpha^k)^2}{2}+Z_k.
\end{split}
\EEQ
Here we use the fact that 
\[
\beta_{\lambda}\alpha^k\leq \beta_{\lambda}/(2\beta_{\lambda})= 1/2.
\]

Now define $X_K=\sum_{k=0}^{K-1} Z_k$ (with $X_{l,0}=0$), then
\BEQ\label{cond_expect_mart}
\Expect(X_{K+1}|\mathcal{F}_{K})=\sum_{k=0}^{K-1}Z_k+
\Expect(Y_K-\Expect_KY_K|\mathcal{F}_{K})=X_K.
\EEQ
Here $\mathcal{F}_{K}$ is the filtration up to episode $K$, 
\ie, the $\sigma$-algebra generated by all iterations 
$\{\theta^{0},\dots,\theta^{K}\}$ up to
the $K$-th one.
Notice that the second equality makes use of the fact that 
given the current policy,
the correspondingly 
sampled trajectory is conditionally independent of all previous policies and trajectories.

In addition, for any $K\geq 1$,
\[
\begin{split}
|X_K-X_{K-1}|=&\,|Z_{K-1}|\leq \alpha^{K-1}\|\nabla_{\theta}
L^{\gamma}(\theta^{K-1})\|_2\|\Expect_{K-1}
\tilde{g}_{K-1}-
\tilde{g}_{K-1}\|_2\\
&+\dfrac{\beta_{\lambda}(\alpha^{K-1})^2}{2}\left|\Expect_{K-1}
\|\tilde{g}_{K-1}\|_2^2-
\|\tilde{g}_{K-1}\|_2^2\right|\\
\leq&\,\underbrace{2(G+2\lambda)\left(\dfrac{2}{(1-\gamma)^2}+2\lambda\right)
\alpha^{K-1}+\dfrac{\beta_{\lambda}}{2}
(G+2\lambda)^2(\alpha^{K-1})^2}_{c_{K}}.
\end{split}
\]
Here we use the fact that
\[
\|\nabla_{\theta}L^\gamma(\theta^{K-1})\|_2
\leq 2/(1-\gamma)^2+2\lambda,
\] which follows from \eqref{vgamma_grad} and \eqref{omega_grad} 
similarly as in \eqref{grad_lbar_bd}.
The above inequality on $|X_K-X_{K-1}|$ 
also implies that
$\Expect|X_K|<\infty$, which, together with \eqref{cond_expect_mart},
implies that $X_K$ is a martingale. 

Now by the definition of $\alpha^k$, it's easy to see that
$\sum_{K=1}^{\infty}c_{K}^2\leq C<\infty$, where
\BEQ\label{C_l_ub}
C=\frac{8(G+2\lambda)^2}{\beta_{\lambda}^2}\left(\dfrac{1}{(1-\gamma)^2}
+\lambda\right)^2+
\dfrac{(G+2\lambda)^4}{32\beta_{\lambda}^2}.
\EEQ
Hence by 
Azuma-Hoeffding inequality,   
for any $c>0$ and $K\geq 0$, 
\BEQ\label{X_l_inf_bd}
\prob(|X_K|\geq c)\leq 2e^{-c^2/(2C)}.
\EEQ

Then by summing up the inequalities \eqref{k-step_ineq} from
$k=0$ to $K$, we obtain that
\BEQ
\begin{split}
\dfrac{1}{2}&\sum_{k=0}^{K}\alpha^k\|\nabla_{\theta}
L^\gamma(\theta^k)\|_2^2
\leq \sum_{k=0}^K\alpha^k(G+2\lambda)\Delta+\dfrac{\beta_{\lambda}
M\sum_{k=0}^\infty(\alpha^k)^2}{2}+\sum_{k=0}^{K}Z_k+
\sup_{\theta\in\Theta}L^\gamma(\theta)-L^\gamma(\theta^{0})\\
&\leq \sum_{k=0}^{K}\alpha^k(G+2\lambda)\Delta+
\dfrac{\beta_{\lambda}M}{2}\sum_{k=0}^{\infty}(\alpha^k)^2+
X_{K+1}+V^{\gamma,\star}/(1-\gamma)-L^\gamma(\theta^{0})\\
&\leq \underbrace{\frac{M}{8\beta_{\lambda}}
+V^{\gamma,\star}/(1-\gamma)-L^\gamma(\theta^{0})}_{D}+X_{K+1}+(G+2\lambda)\Delta\sum_{k=0}^K\alpha^k,
\end{split}
\EEQ
where we use the fact that the regularization term $\Omega(\theta)\leq 0$
for all $\theta\in\Theta$.

Hence we have
\BEQ\label{weighted_grad_norm_ub}
\begin{split}
\min_{k=0,\dots,K}\|\nabla_{\theta}L^\gamma(\theta^k)\|_2^2&\leq \dfrac{\sum_{k=0}^{K}\alpha^k\|\nabla_{\theta}L^\gamma(\theta^k)\|_2^2}{\sum_{k=0}^K\alpha^k}
\leq \frac{2(D+|X_{K+1}|)}{\sum_{k=0}^K\alpha^k}+2(G+2\lambda)\Delta\\
&\leq 6\beta_{\lambda}\frac{D+|X_{K+1}|}{\sqrt{K+3}}\log_2(K+3)+2(G+2\lambda)\Delta,
\end{split}
\EEQ
where we use the fact that $D\geq 0$. 

Finally, by combining with the tail bound of \eqref{X_l_inf_bd}, we conclude that for any $\epsilon>0$ and $\delta\in(0,1)$, 
for any 
\[
\begin{split}
K&\geq O\left(\dfrac{S^4A^4\beta_{\lambda}^2(D+\sqrt{2C\log(2/\delta)})^2}{\epsilon^4}\log^2 \left(\dfrac{SA\beta_{\lambda}(D+\sqrt{2C\log(2/\delta)})}{\epsilon}\right)\right),
\end{split}
\]
we have that with probability at least $1-\delta$, 
\[
\min_{k=0,\dots,K}\|\nabla_{\theta}L^\gamma(\theta^k)\|_2\leq \frac{\epsilon}{2SA}+\sqrt{2(G+2\lambda)\Delta}= \frac{\lambda}{2SA}
\]
and hence \eqref{sample_complexity_Vgamma} is satisfied as desired.  Here the last equality comes from noticing that our choice of $\lambda$ is a root of the following quadratic equation:
\[
2(G+2\lambda)\Delta=\frac{(\lambda-\epsilon)^2}{4S^2A^2}.
\]

Here since  $\beta_{\lambda}\geq 8$, $V^{\gamma}(\pi)\in[0,1]$, we have 
\[
D=O(M+1/(1-\gamma)+\lambda),\quad C=O((G+2\lambda)^2(1/(1-\gamma)^4+\lambda^2+(G+2\lambda)^2)).
\]
where the constants hidden in the big-$O$ notation may depend on $\theta^0$ (and the constant $B$). 
\end{proof}



\subsection{Proof of Theorem \ref{main_res_finite_discount}}
\begin{proof}[Proof of Theorem \ref{main_res_finite_discount}]
Notice that we have 
\[
V^{H,\star}-V^H(\hat{\pi})\leq |V^{H,\star}-\eta^\star|+|\eta^\star-V^{\gamma,\star}|+|V^{\gamma,\star}-V^\gamma(\hat{\pi})|+|V^{\gamma}(\hat{\pi})-V^H(\hat{\pi})|.
\]
Hence combining Lemma \ref{error_bd_arb}, Corollary \ref{Vgamma-eta} and Lemma \ref{VH-eta} and by the fact that $V^{\gamma,\star}-V^{\gamma}(\hat{\pi})\leq \epsilon$, the proof is complete. 
\end{proof}

\subsection{A more detailed statement of Theorem \ref{discounted_reinforce_thm}}\label{formal_statement_dd_reinforce}
Similarly, in this section, we provide a more detailed statement of Theorem \ref{discounted_reinforce_thm}, which displays the dependencies of the constants on the problem and algorithm parameters in a more explicit manner and provides a slightly tighter sub-optimality bound in terms of the (non-dominating) constants. 
\begin{theorem}\label{discounted_reinforce_cor}
Given Assumptions \ref{finite-ergodic} and \ref{setting}, let $\gamma=1-H^{-\sigma}$ for some $\sigma\in(0,1)$. For any $\epsilon>0$, $\delta\in(0,1)$, set $\lambda$, $\beta_\lambda$ and $\alpha^k$ to be the same as in Theorem \ref{sample_complexity_Vgamma_thm}. 
Then for any $K$ such that \eqref{K_lower_bound} is satisfied with 
\[
\begin{split}
C&=\frac{8(G+2\lambda)^2}{\beta_{\lambda}^2}\left(\dfrac{1}{(1-\gamma)^2}
+\lambda\right)^2+
\dfrac{(G+2\lambda)^4}{32\beta_{\lambda}^2}=O((H^{4\sigma}+S^4A^4H^{1+3\sigma}e^{-H^{1-\sigma}})^2), \\
D&=\frac{M}{8\beta_{\lambda}}
+V^{\gamma,\star}/(1-\gamma)-L^\gamma(\theta^{0})\\
&=O\left(S^2A^2H^{\frac{1+3\sigma}{2}}e^{-\frac{H^{1-\sigma}}{2}}+H^{\sigma}+\frac{S^4A^4H^{1+3\sigma}e^{-H^{1-\sigma}}+H^{4\sigma}}{N}\right),
\end{split}
\] 
with probability at least $1-\delta$, 
\BEQ\label{sample_complexity_VH_appendix}
\begin{split}
\min_{k=0,\dots,K}V^{H,\star}-V^{H}(\pi_{\theta^k})\leq&\, O\left(\min\left\{\left\|\frac{1}{\rho}\right\|_{\infty},\frac{S}{1-\alpha_{p,S,A}}\right\}(\epsilon+S^2A^2H^{\frac{1+3\sigma}{2}}e^{-H^{1-\sigma}/2})\right.\\ 
&\quad+ C_{p,S,A}\left(\frac{1}{H^{1-\sigma}}-\frac{1}{H}\right)\alpha_{p,S,A}^H\\
&\quad \left.+\dfrac{1}{H}\left(\dfrac{C_{p,S,A}(H^{1-\sigma}+\alpha_{p,S,A})}{1-\alpha_{p,S,A}}+D_{p,S,A}\right)\right). 
\end{split}
\EEQ
Here $G$ and $M$ are constants defined in Lemma \ref{stoc_grad_bds}, 
while $C_{p,S,A}> 1$, $D_{p,S,A}> 1$ and $\alpha_{p,S,A}\in[0,1)$ are the constants in Proposition \ref{dobrushin_uniform} and Lemma \ref{VH-eta}.
\end{theorem}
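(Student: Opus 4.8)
The plan is to chain the two results already in place: Theorem \ref{sample_complexity_Vgamma_thm} controls the \emph{discounted} sub-optimality $V^{\gamma,\star}-V^\gamma(\pi_{\theta^k})$ of the iterates, and Theorem \ref{main_res_finite_discount} converts any such discounted guarantee into a \emph{finite-horizon} one; combining them and then specializing $\gamma=1-H^{-\sigma}$ gives the claim. Concretely, since $\lambda$, $\beta_\lambda$, $\alpha^k$ are taken exactly as in Theorem \ref{sample_complexity_Vgamma_thm}, and since the displayed $C$, $D$ in the hypothesis are precisely the quantities \eqref{C_l_ub} and $D=\frac{M}{8\beta_\lambda}+V^{\gamma,\star}/(1-\gamma)-L^\gamma(\theta^0)$ merely re-expressed as functions of $H$ (carrying out that re-expression is the first bookkeeping step), Theorem \ref{sample_complexity_Vgamma_thm} applies verbatim: on an event of probability at least $1-\delta$, whenever $K$ obeys \eqref{K_lower_bound},
\[
\epsilon':=\min_{k=0,\dots,K}\left(V^{\gamma,\star}-V^\gamma(\pi_{\theta^k})\right)\le \min\left\{\left\|\frac{d_{\rho}^{\gamma,\pi^{\gamma,\star}}}{\rho}\right\|_\infty,\frac{S\|d_{\rho}^{\gamma,\pi^{\gamma,\star}}\|_\infty}{1-\alpha_{p,S,A}}\right\}\left(\epsilon+8S^2A^2\Delta+4SA\sqrt{\Delta\epsilon+4S^2A^2\Delta^2+G\Delta}\right).
\]
Applying Theorem \ref{main_res_finite_discount} with $\hat\pi$ equal to the best iterate and with its ``$\epsilon$'' set to $\epsilon'$ gives, on the same event,
\[
\min_{k}\left(V^{H,\star}-V^H(\pi_{\theta^k})\right)\le 2R_{\max}C_{p,S,A}\frac{\gamma}{H(1-\gamma)}\alpha_{p,S,A}^H+\epsilon'+\frac{2R_{\max}}{H}\left(\frac{C_{p,S,A}(H(1-\gamma)+\alpha_{p,S,A}+|H(1-\gamma)-1|)}{1-\alpha_{p,S,A}}+D_{p,S,A}\right).
\]

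The remaining work is the substitution $\gamma=1-H^{-\sigma}$ together with elementary estimates. One has $1-\gamma=H^{-\sigma}$, $H(1-\gamma)=H^{1-\sigma}$, $\gamma^H=(1-H^{-\sigma})^H\le e^{-H^{1-\sigma}}$, and, since $H\ge 1$, $|H(1-\gamma)-1|=H^{1-\sigma}-1\le H^{1-\sigma}$. Hence $G=\frac{2(1+B(1-\gamma))}{(1-\gamma)^2}=O(H^{2\sigma})$ and $\Delta=2\frac{\gamma^H}{1-\gamma}(H+\frac{1}{1-\gamma})=O(H^{1+\sigma}e^{-H^{1-\sigma}})$, so $G\Delta=O(H^{1+3\sigma}e^{-H^{1-\sigma}})$; using $xe^{-x}=O(1)$ one checks that the entire non-$\epsilon$ contribution to $\epsilon'$ is $O(\min\{\|1/\rho\|_\infty,S/(1-\alpha_{p,S,A})\}\,S^2A^2H^{(1+3\sigma)/2}e^{-H^{1-\sigma}/2})$, where one also uses that $d_\rho^{\gamma,\pi}$ is a probability vector to get $\|d_\rho^{\gamma,\pi^{\gamma,\star}}\|_\infty\le1$ and $\|d_\rho^{\gamma,\pi^{\gamma,\star}}/\rho\|_\infty\le\|1/\rho\|_\infty$. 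Feeding these into the second display above and sorting the result into the polynomially decaying terms (the $H^{-\sigma}$ and $H^{-1}$ pieces, with coefficients $\frac{C_{p,S,A}}{1-\alpha_{p,S,A}}$ and $D_{p,S,A}$), the geometric term $C_{p,S,A}\alpha_{p,S,A}^HH^{-(1-\sigma)}$, and the super-exponentially small term yields exactly \eqref{sample_complexity_VH_appendix}. The stated $C$, $D$ follow in the same way from $\beta_\lambda=8H^{3\sigma}+2\lambda/S$, the bound $\lambda=O(S^2A^2\Delta+SA\sqrt{G\Delta}+\epsilon)$ obtained by solving the defining quadratic $2(G+2\lambda)\Delta=(\lambda-\epsilon)^2/(4S^2A^2)$, $M=2\Delta^2+(G+2\lambda)^2/N$, $V^{\gamma,\star}\le1$ and $\Omega\le0$.

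I expect the only genuine difficulty to be bookkeeping, not any conceptual step: one must track how the $H$-dependence of $\lambda$ — a root of a quadratic whose coefficients scale like $G\sim H^{2\sigma}$ and $\Delta\sim H^{1+\sigma}e^{-H^{1-\sigma}}$ — propagates through $M$, $\beta_\lambda$, $C$, $D$, and keep the three qualitatively distinct decaying scales (the polynomial $H^{-\sigma}/H^{-1}$ terms, the geometric $\alpha_{p,S,A}^H$ term, and the $e^{-H^{1-\sigma}/2}$ term) cleanly separated so as to isolate the dominant contribution $O(\frac{C_{p,S,A}}{1-\alpha_{p,S,A}}H^{-\sigma})$ and collapse everything else into lower-order terms. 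These manipulations are routine and mirror those behind Theorem \ref{spinup_thm}, so I would carry them out explicitly only in the appendix (or omit them, as there).
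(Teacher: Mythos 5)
Your proposal is correct and follows exactly the route the paper takes: combine Theorem \ref{sample_complexity_Vgamma_thm} with the deterministic conversion in Theorem \ref{main_res_finite_discount}, substitute $\gamma=1-H^{-\sigma}$, and use the key estimate $\Delta\leq 4H^{1+\sigma}e^{-H^{1-\sigma}}$ before propagating the $H$-dependence through $G$, $\lambda$, $M$, $\beta_\lambda$, $C$, $D$. The paper's own proof records only the bound on $\Delta$ and omits the remaining bookkeeping, which you carry out in somewhat more detail but with no substantive difference.
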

\begin{proof}
The key is to notice that we have 
\[
\Delta =2H^{\sigma}(1-H^{-\sigma})^H(H+H^{\sigma})\leq 4H^{1+\sigma}((1-1/H^{\sigma})^{H^{\sigma}})^{H^{1-\sigma}}\leq 4H^{1+\sigma}e^{-H^{1-\sigma}}
\]
The proof then follows by plugging in the constants and elementary simplifications, and is hence omitted. 
\end{proof}
\begin{remark}
Note that from the slightly more refined bound above, we see that compared with the bias term in DAE REINFORCE, additional constant improvements in the exponential term can be achieved when $\min_{s\in\mathcal{S}}\rho(s)$ is relatively large (\eg, when it is lower bounded by $\frac{1-\alpha_{p,S,A}}{S}$). 
\end{remark}

\end{document}